\documentclass{article}
\usepackage[utf8]{inputenc}
\usepackage[a4paper, total={6in, 8in}]{geometry}
\usepackage{authblk}
\usepackage{amsthm}
\usepackage{amsmath}
\usepackage{amssymb}
\usepackage{graphicx}
\usepackage{bbm}
\usepackage{braket,amsfonts,amssymb}
\usepackage{comment}
\usepackage{amsopn}
\usepackage{bm}
\usepackage{mathrsfs}  
\usepackage[mathscr]{euscript}
\usepackage{bm,nicefrac}
\usepackage{extarrows}
\usepackage{caption}
\usepackage{subcaption}
\usepackage{xcolor}
\usepackage[linktoc=all]{hyperref}
\usepackage{titlesec}
\hypersetup{
    colorlinks,
    citecolor=blue,
    filecolor=black,
    linkcolor=blue,
    urlcolor=black
}

\definecolor{WNconstant}{RGB}{204, 88, 3}
\def\a{{a}}

\def\b{{\bf b}}

\def\bo{{\bf 1}}
\def\c{c}

\def\g{r}
\def\ginfty{\g_\infty}
\def\gprod{\widetilde{\g}}
\def\h{h}

\def\j{j}
\def\k{k}
\def\lrr{{\tilde{\eta}}}
\def\m{m}
\def\n{n}

\def\k{{\bf k}}

\def\r{\gamma}

\def\s{s}
\def\t{t}
\def\u{{\bf u}}

\def\v{{\bf v}}

\def\w{{\bf u}}
\def\we{u}
\def\ww{{\bf w}}
\def\wwe{w}
\def\x{{\bf x}}
\def\xe{x}
\def\y{{\bf b}}

\def\z{{\bf z}}

\def\wprod{\widetilde{\w}}

\def\xwn{\x_\text{wn}}
\def\xwnprod{\tilde{\x}_\text{wn}}
\def\xprod{\tilde{\x}}
\def\xprode{\tilde{\xe}}

\def\xprodinfty{\xprod_\infty}

\def\A{{\bf A}}
\def\Ae{A}

\def\F{F}

\def\I{I}

\def\K{K}
\def\L{L}
\def\Loss{\mathcal{L}}
\def\LossM{\widetilde{\mathcal{L}}}

\def\M{M}
\def\N{N}

\def\Pw{\mathcal{P}_{\w}}
\def\PA{\mathcal{P}_{\A}}
\def\Px{\mathcal{P}_{\x}}

\def\Q{{Q}}
\def\R{\mathbb{R}}

\def\T{T}

\def\0{\boldsymbol{0}}
\def\1{\boldsymbol{1}}

\def\tT{\top}

\DeclareMathOperator*{\argmin}{arg\,min}


\renewcommand{\epsilon}{\varepsilon}

\renewcommand{\L}{L}

\providecommand{\keywords}[1]{\textbf{\textit{Keywords --- }} #1}

\newtheorem*{theorem*}{Theorem}
\newtheorem{theorem}{Theorem}[section]

\newtheorem{definition}[theorem]{Definition}
\newtheorem{lemma}[theorem]{Lemma}

\theoremstyle{remark}
\newtheorem{remark}[theorem]{Remark}

\numberwithin{equation}{section}

\title{Robust Implicit Regularization via Weight Normalization}

\begin{document}

\author[1]{Hung-Hsu Chou}
\author[2]{Holger Rauhut}
\author[3]{Rachel Ward}
\affil[1]{School of Computation, Information and Technology, Technical University of Munich, Germany}
\affil[2]{Mathematics Institute, Ludwig Maximilian University of Munich, Germany}
\affil[3]{Oden Institute for Computational Engineering \& Sciences, University of Texas at Austin, USA}

\maketitle

\begin{abstract}
Overparameterized models may have many interpolating solutions; implicit regularization refers to the hidden preference of a particular optimization method towards a certain interpolating solution among the many. A by now established line of work has shown that (stochastic) gradient descent tends to have an implicit bias towards low rank and/or sparse solutions when used to train deep linear networks, explaining to some extent why overparameterized neural network models trained by gradient descent tend to have good generalization performance in practice.
However, existing theory for square-loss objectives often requires very small initialization of the trainable weights, which is at odds with the larger scale at which weights are initialized in practice for faster convergence and better generalization performance.  In this paper, we aim to close this gap by incorporating and analyzing gradient flow (continuous-time version of gradient descent) with \emph{weight normalization}, where the weight vector is reparameterized in terms of polar coordinates, and gradient flow is applied to the polar coordinates.  By analyzing key invariants of the gradient flow and using Lojasiewicz’s Theorem, we show that weight normalization also has an implicit bias towards sparse solutions in the diagonal linear model, but that in contrast to plain gradient flow, weight normalization enables a robust bias that persists even if the weights are initialized at practically large scale.  Experiments suggest that the gains in both convergence speed and robustness of the implicit bias are improved dramatically by using weight normalization in overparameterized diagonal linear network models.
\end{abstract}

\keywords{implicit regularization, weight normalization, gradient descent, overparameterization, linear neural network, vector factorization, L1 minimization, compressed sensing}


\section{Introduction}
\label{sec:Introduction}

Unlike many classical models such as linear regression or kernel methods, recent machine learning breakthroughs are often based on overparameterized models, e.g. neural networks, where the number of data is less than the number of parameters. To develop theoretical understanding of modern machine learning, many researchers focus on analyzing the simplified model, the linear network \cite{arora2018optimization,cohen2023deep,Bach2019implicit,gunasekar2019implicit,neyshabur2017geometry,neyshabur2015}, where the activation function is the identity. From those studies, a phenomenon known as implicit regularization gradually emerges from the fog.

Implicit regularization refers to the hidden preference of the learning model, in contrast to explicit regularization which is explicitly specified in the training process. In particular, implicit regularization can be found in vector \cite{soudry2018implicit}, matrix \cite{arora2019implicit,chou2020implicit,geyer2019implicit,gunasekar2017implicit,neyshabur2017geometry,neyshabur2015,razin2020implicit,stoger2021small}, and tensor \cite{Razin2021implicitTensor,Razin2022hierachicalTensor} factorization. In these examples, we understand theoretically that gradient descent (GD) applied to simple overparameterized models exhibits implicit regularization for sparse/low-rank solutions, i.e. solutions of low complexity. Therefore, in applications where low complexity is desirable, the algorithm is guaranteed to perform well.

Yet, many of these theoretical guarantees only hold for GD with small \cite{chou2021more,stoger2021small} or infinitesimal \cite{arora2018optimization,arora2019implicit} initialization, which is not practical because small initialization leads to slow convergence -- in fact, as initialization decreases, the time required to converge to a small neighborhood of zero increases. In practice, GD is initialized very differently. For example, a common setting for neural networks is the Xavier initialization \cite{glorot2010understand}, where the initial weights are normalized independent Gaussian vectors. Such scaling leads to not only empirical success but also is theoretically justified by the neural tangent kernel \cite{du2019gradient,Jacot2018ntk}.

This gap between implicit bias theory (which requires small initialization) and practice (where initialization is often not small but normalized) indicates that the algorithm which has so far been the main focus of study for implicit bias  -- (stochastic) gradient descent, or (S)GD -- might be too simplistic compared to the algorithms used in practice to train neural networks. As the authors in \cite{Gunasekar18geometry,soudry2018implicit} point out, the theoretical limitation might be due to the choice of loss function, for instance the commonly used $\ell_2$ loss. It was shown that GD on loss functions with exponential tails, such as exponential, logistic, and sigmoid losses, in general does not require small initialization. However, the optimization procedure requires certain notions of normalization, otherwise the iterates are likely to blow up. Hence it is natural to consider combining normalization with $\ell_2$ loss to remove the constraints on initialization.

Indeed, \emph{normalization} of some form is an important modification to plain (S)GD used in practice for accelerating convergence and generalization. Batch normalization \cite{ioffe2015batch} and layer normalization \cite{ba2016layer} are among the most popular choices, while weight normalization (WN) \cite{Salimans2016weight} was one of the earliest proposed normalization algorithms and represents a simple model for batch normalization. 

In \cite{Wu2019implicit}, GD with WN was shown to induce implicit bias towards the minimal $\ell_2$-norm solution in a region of initialization in the setting of overparameterized linear regression. The impact of WN on implicit bias in the linear regression setting hints that WN might be a fundamental algorithmic aspect of the implicit bias observed in practice. 

Weight normalization re-parameterizes the weight vector in each layer in polar coordinates,
\begin{align}
    \label{eq:polar}
    \x = \frac{\g}{\|\w\|_2}\w;
\end{align}
(S)GD is then implemented separately with respect to the vector $\w$ and magnitude scalar $\g$. While overwhelming empirical evidence shows that weight normalization induces faster convergence of (S)GD towards solutions with better generalization performance, rigorous theoretical proofs of these effects have remained challenging due to the nonlinearity introduced in \eqref{eq:polar}. 

It is then natural to study GD with WN, and in particular to hope that WN can induce a more robust implicit bias towards low-complexity solutions. To be concise, we will be analyzing gradient flow (GF), the continuous-time version of gradient descent, in the following context. There are works establishing connections between GD and GF, e.g. \cite{chou2020implicit,Nguegnang2021Convergence}, which show that for sufficiently small step-size, GD and GF exhibit similar behaviours. 

\subsection{Our contribution and related work}

In this paper, we show that GF with WN, when applied to the standard diagonal linear model for vector/matrix factorization, achieves implicit bias/regularization towards sparse solutions without small initialization. In short, we show that
\begin{center}
 \emph{Weight normalization provably induces a robust implicit bias/regularization.}
\end{center}
The implicit bias/regularization is robust in the sense that it does not depend on the initialization as much as many other works suggested \cite{arora2018optimization,arora2019implicit,chou2021more,stoger2021small}.

Previous papers have analyzed implicit bias induced by normalization.  In the context of classification using multilayer linear neural networks, \cite{morwani2022inductive} and \cite{poggio2020complexity} analyzed gradient flow with WN. These papers do not study the relationship between robustness of initialization and implicit bias because in the context of classification, even plain GF without normalization exhibits an implicit bias to max-margin (min $\ell_2$-norm) solutions independent of initialization.  

In \cite{wu2018wngrad} the authors established a connection between adaptive GF and WN and provided robust convergence guarantees for weight-normalized GF. \cite{ioffe2015batch} extended these convergence guarantees to batch normalization.  The papers \cite{dukler2020optimization,wu2022adaloss} provided linear convergence of normalized GF methods in the setting of multilayer ReLU networks in the neural tangent kernel regime.

\cite{Wu2019implicit} showed that gradient flow with respect to WN induces robust implicit bias towards the minimal $\ell_2$ norm solution, in the radius $\g_0 < \g^{*}$ where $\g^{*}$ is the magnitude of the minimal $\ell_2$ norm solution. 

Our work extends the results from \cite{Wu2019implicit}, proving that \emph{WN induces robust implicit bias in a family of overparameterized diagonal linear network models of arbitrary depth}, which includes overparameterized least squares as the base model.
The loss corresponding to such family takes the form
\begin{equation}\label{eq:overloss_1}
    \Big\|\A\big(\x^{(1)}\odot \cdots\odot \x^{(\L)}\big)-\y\Big\|_2^2
\end{equation}
where $\odot$ is the entry-wise product. It is shown in \cite{chou2021more} that GF on \eqref{eq:overloss_1} under identical initialization is equivalent to GF on
\begin{equation}\label{eq:overloss_2}
    \Big\|\A\x^{\odot\L}-\y\Big\|_2^2.
\end{equation}On the other hand, \cite{Wu2019implicit} showed that applying WN \eqref{eq:polar} to overparameterized linear regression induces implicit regularization towards minimal $\ell_2$ norm, that is, GF on the loss 
\begin{align}\label{eq:L2}
    \Big\|\A\Big(\frac{\g}{\|\w\|_2}\w\Big) - \y\Big\|_2^2
\end{align}
converges to the limit such that
\begin{equation}\label{eq:weight_normalization_original}
    \lim_{\t\to\infty}\g(\t)\w(\t) \approx \argmin_{\A\z=\y}\|\z\|_2.
\end{equation}

In this paper we generalize the proof strategy in \cite{Wu2019implicit} to obtain a robust $\ell_1$-minimization solver, precisely by showing that for $\L\geq 2$, GF on the loss function (where $\odot$ denotes the entry-wise product/power)
\begin{align}\label{eq:L3}
    \Big\|\A\Big(\frac{\g}{\|\w\|_2}\w\Big)^{\odot\L} - \y\Big\|_2^2
\end{align}
converges to the limit such that
\begin{equation}
    \lim_{\t\to\infty}(\g(\t)\w(\t))^{\odot\L} \approx \argmin_{\A\z=\y}\|\z\|_1.
\end{equation}
Although the implicit bias towards minimal $\ell_1$-norm solution has been studied in many works , e.g. \cite{Bach2019implicit,pesme2023saddletosaddle}, \emph{our method does not necessarily require small initialization}, which opens the possibility in understanding networks trained with larger initialization.

\begin{figure}[h]
    \centering
    \begin{subfigure}[b]{0.47\textwidth}
        \centering
        \includegraphics[width=\textwidth]{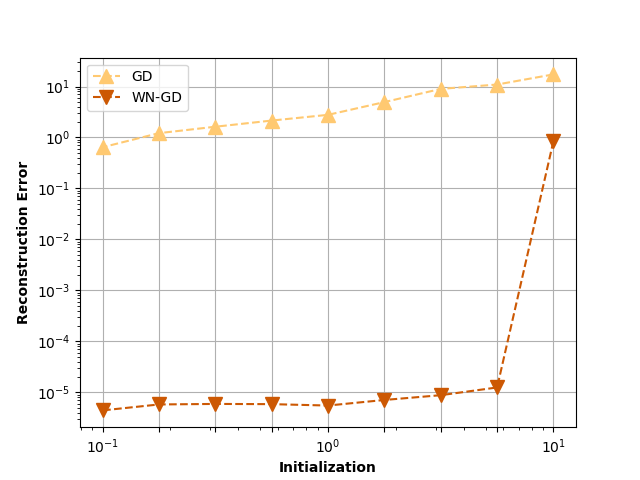}
        \caption{Gradient descent with weight normalization (WN-GD) yields significantly smaller reconstruction error.}
        \label{fig:WN_TestError_Initialization_0}
    \end{subfigure}
    \qquad\qquad
    \begin{subfigure}[b]{0.4\textwidth}
        \centering
        \includegraphics[width=\textwidth]{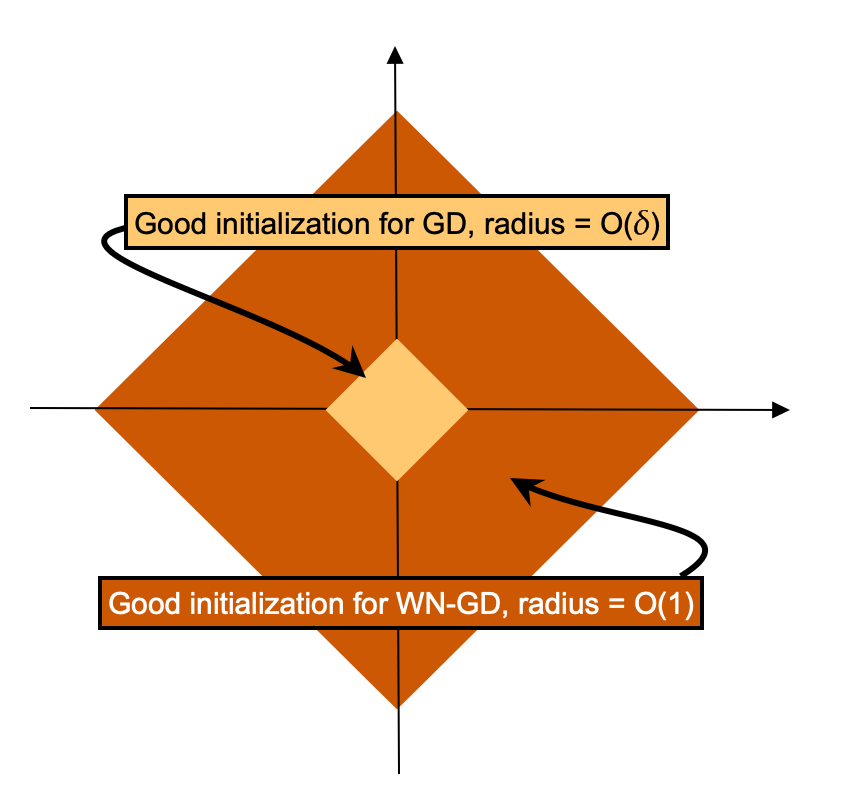}
        \caption{For a fixed error upper bound, WN-GD allows a much wider range of initialization. Here $\delta$ is given in \eqref{eq:delta_ball}.}
        \label{fig:wn_train_error_dynamic_0}
    \end{subfigure}
    \caption{WN converges to minimal $\ell_1$-norm solutions from a wider range of initialization, and hence is more robust in the sense that it is not sensitive to the choice of initialization. This suggests that GF with WN could be used as an efficient alternative for $\ell_1$-minimization.  Each data point in Figure \ref{fig:WN_TestError_Initialization_0} is an average over multiple random initializations at fixed scale. The improvement ratio (reconstruction error for GD divided by reconstruction error for WN-GD) can be huge; when the initialization scale $\alpha$ (defined in \eqref{eq:alpha}) equals to $0.1$, such ratio is more than $10^5$!}
    \label{fig:wn_compare_0}
\end{figure}

Our work also builds on a long line of work on the factorized linear multilayer model \cite{hoff2017lasso,li2021implicit,vaskevicius2019implicit,woodworth2020kernel,You2020RobustRV,zhao2019implicit}. To precisely quantify the error coming from the approximation ``$\approx$'' in \eqref{eq:weight_normalization_original}, we seek inspiration from \cite{chou2021more}, where the authors showed that for $\L\geq 2$, GF on the loss \eqref{eq:overloss_2} converges to the limit whose $\ell_1$-norm is almost minimized in the following sense. Let $\xprod = \x^{\odot\L}$. Then the limit $\xprodinfty:=\lim_{\t\to\infty}\xprod(\t)$ satisfies $\A\xprodinfty=\b$ and
\begin{equation}\label{eq:alpha_bound_original}
    \|\xprodinfty\|_1 - \min_{\A\z=\y} \|\z\|_1\leq \epsilon(\|\xprod_0\|_1) \cdot\min_{\A\z=\y} \|\z\|_1
\end{equation}
where $\xprod_0=\xprod(0)$ and $\epsilon:\R\to\R$ is an increasing function (which we will later used and specified) with $\epsilon(0)=0$. In general, we need $\|\xprod_0\|_1\ll 1$ to have $\epsilon(\|\xprod_0\|_1)\ll 1$. This often implies that if the initialization is on the unit sphere, the error $\epsilon$ is simply too large for \eqref{eq:alpha_bound_original} to be useful at all.

Fortunately, this problem disappears when we incorporate normalization as in \eqref{eq:L3}. In Theorem \ref{theorem:optimality_constant_rate}, we show that the error bound can be improved to
\begin{equation}\label{eq:alpha_bound_wn}
    \|\xprodinfty\|_1 - \min_{\A\z=\y} \|\z\|_1\leq \epsilon(\rho^{-\L}\|\xprod_0\|_1) \cdot\min_{\A\z=\y} \|\z\|_1
\end{equation}
for some $\rho\geq1$ that increases as $\x_0$ decreases (the explicit dependence will be later stated in \eqref{eq:rho}). As a consequence, the right hand side remains small even for moderately small initialization such as the normalized Gaussian vector, which often leads to smaller generalization error \cite{du2019gradient,Jacot2018ntk}. 
%

\subsection{Formulation}
\label{subsection:steup}

We now introduce the two main models, gradient flow without normalization and gradient flow with normalization, and a key parameter $\lrr$, which we call the learning rate ratio. They are defined as follows. Let $\A\in\mathbb{R}^{\M\times\N}$ and $\y\in\mathbb{R}^{\M}$ be given. Let $\L\in\mathbb{N}$ and consider the following loss function:
\begin{align}
    \label{eq:factorized_loss}
    \Loss(\x) &= \frac{1}{2\L}\|\A\x^{\odot\L} - \y\|_2^2.
\end{align}
We say that $\x(\t)$ follows the gradient flow {\bf(without WN)} if
\begin{align}
    \label{eq:dxdt}
    \partial_{\t}\x&= - \nabla\Loss(\x),\quad\x(0) = \x_0.
\end{align}
This is the setting of many previous works on implicit bias in overparameterized models. In this paper, we define the loss function with WN as
\begin{align}
    \label{eq:factorized_loss_weight_normalization}
    \LossM(\g,\w) &= \Loss\left(\frac{\g}{\|\w\|_2}\w\right)
\end{align}
for $\g\in\mathbb{R}$ and $\w\in\mathbb{R}^{\N}$. We say that $(\g(\t),\w(\t))$ follows the gradient flow {\bf with WN} if
\begin{align}
    \partial_{\t}\g&= - \eta_\g\nabla_{\g}\LossM(\g,\w),\quad\g(0) = \g_0\label{eq:dgdt}\\
    \partial_{\t}\w&= - \eta_\w\nabla_{\w}\LossM(\g,\w),\quad\w(0) = \w_0\label{eq:dwdt}
\end{align}
where $\eta_\g,\eta_\w > 0$ are the learning rates for the respective parameters. We can always assume $\eta_\w=1$ without loss of generality (by Lemma \ref{lemma:rescaled_learning_rate} below). Unless otherwise specified, the default setting in this paper is that $\eta_\g$ equals to some positive time-independent constant $\lrr$. We call this constant the {\bf learning rate ratio}. The choice of $\lrr$ is important and will be discussed later on in the remarks after Theorem \ref{theorem:optimality_constant_rate} and Theorem \ref{theorem:convergence}. Roughly speaking, smaller $\lrr$ allows us to take larger initialization, but we cannot take it arbitrarily small because it will cause numerical instability.
%
\subsection{Notation and outline}
\label{subsection:notation}
%
Boldface uppercase letters such as $\A$ are matrices with entries $\Ae_{\m\n}$, boldface lowercase letters such as $\w$ are vectors with entries $\we_{\n}$, and non-boldface letters are scalars. The pseudo-inverse of $\A$ is denoted by $\A^\dagger$. The transpose of $\A$ and $\w$ are denoted by $\A^\tT$ and $\w^\tT$. Orthogonal projection matrices are defined as $\PA:=\A^\dagger\A$ and $\Pw:=\frac{\w\w^\tT}{\|\w\|_2^2}$. Entry-wise products/powers are denoted with $\odot$, and inequalities between vectors and number are understood as entry-wise inequality, e.g. $\w\geq 0$ means $\we_\n\geq 0$ for all $\n$. Similarly, $\log(\w)$ is a vector whose entries are $\log(\we_\n)$, and the vector $\mathbf{1}$ and $\mathbf{0}$ are vectors of ones and zeros, respectively. We denote the set of non-negative real numbers as $\mathbb{R}_{+}$, and similarly the non-negative solution space as $S_{+}:=\{\z\geq 0: \A\z = \y\}$. We also denote the weighted $\ell_1$-norm of $\z$ with weight $\ww$ by $\|\z\|_{\ww,1} := \|\ww\odot\z\|_1$.

We state our main theorems in Section \ref{sec:Main_Result}. The main proofs are given in Section \ref{sec:Proof}. We demonstrate our numerical results in Section \ref{sec:Experiment}, and provide a summary and discussion in Section \ref{sec:Summary}.


\section{Main Results}
\label{sec:Main_Result}
Our main contribution is summarized in Theorem \ref{theorem:optimality_constant_rate}.  In general, the strength of implicit regularization increases as the (magnitude of) initialization decreases. Therefore many works focus on small \cite{chou2021more,stoger2021small} or infinitesimal \cite{arora2018optimization,arora2019implicit} initialization. However, small initialization causes not only numerical instability, but also slow convergence rate. Hence, there is a strong incentive to avoid small initialization.

The core idea is to magnify the implicit regularization via an appropriate learning rate ratio to obtain small error without small initialization $\g_0$. For instance, we will see that in the setting of Theorem \ref{theorem:optimality_constant_rate}, the error decreases exponentially with respect to initialization according to \eqref{eq:g0_constant_L1min_positive}, which can be compared to the setting of \cite{chou2021more} where the error only decreases polynomially according to \eqref{eq:alpha_bound_original}.

\begin{theorem}[Theorem 2.1 from \cite{chou2021more}]\label{theorem:L1_equivalence_positive}
    Let $\L\geq 2$, $\A\in\mathbb{R}^{\M\times\N}$ and $\y\in\mathbb{R}^{\M}$ and assume that $S_+$ is non-empty. Suppose $\x$ follows the dynamics \eqref{eq:dxdt} with $\x_0>0$. Let $\xprod=\x^{\odot L}$. Then the limit $\xprodinfty:= \lim_{\t\to\infty}\xprod(t)$ exists and $\xprodinfty \in S_+$. Moreover, let 
    \begin{align*}
        \ww = \xprod(0)^{\odot\frac{2}{\L}-1},\quad
        \Q := \min_{\z\in S_+}\|\z\|_{\ww,1},\quad
        \beta_{1} = \|\xprod(0)\|_{\ww,1},\quad
        \beta_{\min} = \min_{\n\in[\N]}\wwe_\n\xprode_\n(0).
    \end{align*}
    Suppose $\Q>\c_\L\beta_1^{\frac{2}{\L}}$, then
    \begin{equation}\label{eq:L1min_general_0}
        \|\xprodinfty\|_{\ww,1}-\Q\leq\epsilon\Q,
    \end{equation}
    where the constant $\c_\L$ is given by
    \begin{equation}\label{eq:cL}
        \c_\L := \begin{cases}
            1&\text{if }\L=2,\\
            \left(\frac{\L}{2}\right)^{\frac{\L}{\L-2}}&\text{if }\L>2,
        \end{cases}
    \end{equation}
    and the error $\epsilon$ is defined as
    \begin{align}\label{eq:eps}
        \epsilon(\beta_1,\beta_{\min})
        :=\begin{cases}
            \frac{\log(\beta_{1}/\beta_{\min})}{\log(\Q/\beta_{1})} &\text{if }\L=2,\\[6pt]
            \frac{\L(\beta_{1}^{1-\L/2} - \beta_{\min}^{1-\L/2})}{2\Q^{1-\L/2}- \L\beta_{1}^{1-\L/2}}&\text{if }\L>2.
        \end{cases}
    \end{align}
\end{theorem}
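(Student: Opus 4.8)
Although the statement is quoted from \cite{chou2021more}, here is the route I would take. The plan is to recast the nonconvex flow \eqref{eq:dxdt} as a \emph{mirror flow} for the effective variable $\xprod=\x^{\odot\L}$ and to read the implicit bias off its conserved quantity. Writing the residual $\e:=\A\xprod-\y$, we have $\partial_\t\xe_\n=-\xe_\n^{\L-1}(\A^\tT\e)_\n$; since $\{\xe_\n=0\}$ is invariant under each scalar ODE, $\x(\t)>0$ for all $\t$, and one checks the algebraic cancellation
\[
\partial_\t\!\big(\xe_\n^{\,2-\L}\big)=(\L-2)(\A^\tT\e)_\n\ \ (\L>2),\qquad \partial_\t(\log\xe_\n)=-(\A^\tT\e)_\n\ \ (\L=2).
\]
Using $\xe_\n^{\,2-\L}=\xprode_\n^{\,2/\L-1}$ (resp.\ the logarithm), this says that $\nabla\psi(\xprod(\t))-\nabla\psi(\xprod(0))\in\mathrm{range}(\A^\tT)$ for all $\t$, where $\psi$ is the separable strictly convex potential with $\psi''(s)=\tfrac1\L s^{\,2/\L-2}$, i.e.\ $\psi(s)\propto-s^{2/\L}$ for $\L>2$ and $\psi(s)\propto s\log s-s$ for $\L=2$; equivalently, $\xprod$ is the mirror flow of the least-squares objective $\tfrac12\|\A\z-\y\|_2^2$ with mirror map $\nabla\psi$. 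Since $\nabla\psi$ evaluated at $\xprod(0)$ is a positive multiple of $-\ww$ when $\L>2$ and equals $\tfrac12\log\xprod(0)$ when $\L=2$, the weight $\ww=\xprod(0)^{\odot(2/\L-1)}$ and the base point $\xprod(0)$ of the induced Bregman geometry are exactly those in the statement.

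For convergence, fix $\z^\star\in S_+$ (nonempty by assumption) and take the Bregman energy $V(\t):=D_\psi(\z^\star,\xprod(\t))$. Differentiating and using the mirror identity together with $\A\z^\star=\y$ gives $\dot V=-c_\L\|\e(\t)\|_2^2\le0$ for a constant $c_\L>0$; since the sublevel sets of $D_\psi(\z^\star,\cdot)$ are bounded, $\xprod$ stays in a compact set, the flow is global, and $\int_0^\infty\|\e\|_2^2\,d\t<\infty$, so $\e(\t_\k)\to0$ along some $\t_\k\to\infty$. Hence any accumulation point $\xprodinfty$ satisfies $\A\xprodinfty=\y$ and, being a limit of positive vectors, lies in $S_+$. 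To promote this to convergence of the whole trajectory to a single $\xprodinfty$, I would invoke the Lojasiewicz gradient inequality for the analytic loss $\Loss$ (equivalently, for $V$), which bounds the trajectory length. \textbf{This convergence step, together with the fact that some coordinates of $\xprodinfty$ may vanish --- so that $\nabla\psi$ blows up there and optimality must be understood through its variational-inequality form --- is where I expect the main difficulty}; for $\x>0$ every critical point already satisfies $\A^\tT(\A\xprod-\y)=0$ and hence (as $\y\in\mathrm{range}(\A)$ when $S_+\neq\emptyset$) $\A\xprod=\y$, so the only genuine subtlety is at the boundary of the positive orthant.

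It remains to turn this into the $\ell_1$-type bound. Passing to the limit in the conservation law, $\nabla\psi(\xprodinfty)-\nabla\psi(\xprod(0))\in\mathrm{range}(\A^\tT)$, which together with $\A\xprodinfty=\y$ is precisely the optimality condition for the Bregman projection $\xprodinfty=\argmin_{\z\in S_+}D_\psi(\z,\xprod(0))$ (with the boundary modification above). Expanding the divergence, using $\z\ge0$ on $S_+$ and the identity $\langle\ww,\xprod(0)\rangle=\sum_\n\xprode_\n(0)^{2/\L}=\beta_1$, one gets, up to a $\z$-independent constant and an overall positive rescaling,
\[
D_\psi(\z,\xprod(0))\ \equiv\ \|\z\|_{\ww,1}-\tfrac\L2\textstyle\sum_\n\ze_\n^{\,2/\L}\qquad(\L>2),
\]
with $s^{2/\L}$ replaced by $s\log s$ when $\L=2$; the power/log term is a ``nonuniformity correction'' that is small when the solution dominates the initialization. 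Comparing the value at $\xprodinfty$ with the value at a weighted-$\ell_1$ minimizer $\z^\star\in\argmin_{\z\in S_+}\|\z\|_{\ww,1}$ gives
\[
\|\xprodinfty\|_{\ww,1}\ \le\ \Q+\tfrac\L2\Big(\textstyle\sum_\n\xprode_{\infty,\n}^{\,2/\L}-\sum_\n(\ze^\star_\n)^{\,2/\L}\Big).
\]

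The last step is an elementary estimate. By Hölder against $\ww$, with the identity $\sum_\n\wwe_\n^{-2/(\L-2)}=\sum_\n\xprode_\n(0)^{2/\L}=\beta_1$, one has $\sum_\n s_\n^{2/\L}\le\beta_1^{\gamma}\|s\|_{\ww,1}^{\,1-\gamma}$ for $\gamma=1-\tfrac2\L$, together with $\sum_\n(\ze^\star_\n)^{2/\L}\ge\beta_{\min}^{\gamma}\Q^{\,1-\gamma}$; substituting reduces the previous display to the self-bounding inequality $P-\tfrac\L2\beta_1^{\gamma}P^{\,1-\gamma}\le\Q-\tfrac\L2\beta_{\min}^{\gamma}\Q^{\,1-\gamma}$ for $P:=\|\xprodinfty\|_{\ww,1}$, whose left-hand side is increasing in $P$ for $P\ge\beta_1$. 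Using $(1+\epsilon)^{1-\gamma}\le1+\epsilon$ one checks that $P=(1+\epsilon)\Q$ obeys this inequality whenever $\epsilon\ge\frac{\L(\beta_1^{\gamma}-\beta_{\min}^{\gamma})}{2\Q^{\gamma}-\L\beta_1^{\gamma}}$; the hypothesis $\Q>\c_\L\beta_1^{2/\L}$ is exactly what is needed for this bound to be finite and nontrivial (it keeps the denominator positive and $P$ in the increasing branch), yielding \eqref{eq:L1min_general_0} with $\epsilon$ as in \eqref{eq:eps}. The $\L=2$ case is the same computation with the logarithm, producing the logarithmic form of $\epsilon$ there.
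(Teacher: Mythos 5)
The paper does not actually prove this statement: it is quoted verbatim as Theorem~2.1 of the cited reference \cite{chou2021more}, and the present paper only uses it as a black box in the proof of Theorem~\ref{theorem:optimality_constant_rate}. So there is no internal proof here to compare against. That said, the machinery you propose --- the conservation law $(\I-\PA)\nabla\psi(\xprod)=\mathrm{const}$, the Bregman potential $\psi$ with $\psi(s)\propto s\log s-s$ for $\L=2$ and $\psi(s)\propto-s^{2/\L}$ for $\L>2$, and the monotone Bregman energy $D_\psi(\z,\xprod(\t))$ with $\partial_\t D_\psi=-\|\A\xprod-\y\|_2^2$ for every $\z\in S_+$ --- is exactly the machinery this paper redevelops in Lemma~\ref{lemma:inv_proj_original} and Lemma~\ref{lemma:Bregman_divergence_non_increasing}, and it is, to the best of my knowledge, the route taken in \cite{chou2021more} itself. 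The individual pieces of your sketch check out: the Hölder step $\sum_\n s_\n^{2/\L}\le\beta_1^\gamma\|s\|_{\ww,1}^{1-\gamma}$ is a clean application with conjugate exponents $(\L/2,\L/(\L-2))$ using $\sum_\n\wwe_\n^{-2/(\L-2)}=\beta_1$; the reverse bound $\sum_\n(\ze^\star_\n)^{2/\L}\ge\beta_{\min}^\gamma\Q^{1-\gamma}$ follows from $\wwe_\n^{-2/\L}\ge\beta_{\min}^\gamma$ and subadditivity of $t\mapsto t^{2/\L}$; and working through the self-bounding inequality $P-\tfrac{\L}{2}\beta_1^\gamma P^{1-\gamma}\le\Q-\tfrac{\L}{2}\beta_{\min}^\gamma\Q^{1-\gamma}$ with $(1+\epsilon)^{1-\gamma}\le1+\epsilon$ does reproduce the stated $\epsilon$, with $f(P)=P-\tfrac{\L}{2}\beta_1^\gamma P^{1-\gamma}$ increasing precisely for $P\ge\beta_1$, which holds since $P\ge\Q$.

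Two things to tighten before this would count as a proof rather than a sketch. First, you correctly flag the convergence of the full trajectory and the boundary behaviour (vanishing coordinates of $\xprodinfty$, where $\nabla\psi$ is singular) as the genuine difficulty; this is real, and it is not enough to invoke ``Lojasiewicz'' in one line --- one must show the trajectory stays in a region where the gradient-domination inequality applies and reconcile the variational characterization of $\xprodinfty$ with the blow-up of the mirror map at the boundary. Notice, though, that your comparison step does not actually need the Bregman-projection identity $\xprodinfty=\argmin_{\z\in S_+}D_\psi(\z,\xprod(0))$; it only needs $D_\psi(\z^\star,\xprod(0))\ge D_\psi(\xprodinfty,\xprod(0))$, which follows from $\partial_\t D_\psi(\z,\xprod(\t))$ being the same for all $\z\in S_+$ plus $D_\psi\ge0$ and a limiting argument, so you can sidestep the singular mirror map on the first argument. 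Second, when you write that the hypothesis $\Q>\c_\L\beta_1^{2/\L}$ ``is exactly what is needed to keep the denominator positive,'' note that positivity of $2\Q^\gamma-\L\beta_1^\gamma$ is equivalent to $\Q>\c_\L\beta_1$, not $\Q>\c_\L\beta_1^{2/\L}$ (for $\L=2$ the two coincide). Either the quoted hypothesis carries a different normalization from the cited source, or you need to track a homogeneity factor in $\beta_1$; this is worth reconciling against \cite{chou2021more} directly.
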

We now state our main result.
\begin{theorem}[Magnification of implicit regularization]\label{theorem:optimality_constant_rate}
Let $\L\in\mathbb{N}$, $\L\geq 2$, $\A\in\mathbb{R}^{\M\times\N}$, $\y\in\mathbb{R}^{\M}$, $(\eta_\g,\eta_\w) = (\lrr,1)$ for some constant $\lrr>0$. Suppose $(\g,\w)$ follow the dynamics in \eqref{eq:dgdt} and \eqref{eq:dwdt} with $\g_0,\w_0>0$ satisfying $\|\w_0\|_2 = 1$ and $\lrr^{1/2}\leq\g_0\leq\|\A^\dagger\b\|_{2}^{1/\L}$. Denote $\x=\frac{\g}{\|\w\|_2}\w$. Suppose there exists $\v>0$ such that $\A\v=0$, $S_{+}$ is non-empty, and the limit $\xprodinfty:= \lim_{\t\to\infty}\x(\t)^{\odot\L}$ exists. Let 
\begin{equation}
\label{eq:rho}
    \rho:=\frac{\g_0}{\|\A^\dagger\b\|_{2}^{1/\L}}\exp\left(\frac{\|\A^\dagger\b\|_{2}^{2/\L} - \g_0^2}{2\lrr}\right),
\end{equation}
be the magnification factor. Then
\begin{enumerate}
    \item The loss defined in \eqref{eq:factorized_loss} decreases exponentially in time, i.e., for all $\t\geq 0$
    \begin{equation}
        \Loss(\x(\t)) \leq \Loss(\x_0)e^{-\c\t}    
    \end{equation}
    for some constant $\c>0$. In addition, the limit $\xprodinfty$ lies is $S_+$.
    \item It holds that $\rho\geq 1$, and the limit $\xprodinfty$ satisfies \eqref{eq:L1min_general_0} with error
    \begin{equation}\label{eq:g0_constant_L1min_positive}
        \epsilon(\rho^{-\L}\beta_1,\rho^{-\L}\beta_{\min})
    \end{equation}
    as defined in \eqref{eq:eps}.
\end{enumerate}
\end{theorem}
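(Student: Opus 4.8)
\emph{Strategy.} I would show that, written in $\x=\frac{\g}{\|\w\|_2}\w$, the weight-normalized flow is the plain flow \eqref{eq:dxdt} run from a uniformly \emph{rescaled} initialization (equivalently, for a rescaled target), and then read off the conclusion from Theorem~\ref{theorem:L1_equivalence_positive}. Computing $\nabla_\g\LossM,\nabla_\w\LossM$ from \eqref{eq:factorized_loss_weight_normalization} by the chain rule: since $\nabla_\w\LossM=\frac{\g}{\|\w\|_2}(\Id-\Pw)\nabla\Loss(\x)$ is orthogonal to $\w$, one has $\frac{d}{d\t}\|\w\|_2^2=0$, so $\|\w(\t)\|_2\equiv1$ and $\x(\t)=\g(\t)\w(\t)$; by Lemma~\ref{lemma:rescaled_learning_rate} we take $\eta_\w=1$. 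Then $\partial_\t\g=-\lrr\langle\w,\nabla\Loss(\x)\rangle$ and $\partial_\t\w=-\g(\Id-\w\w^\tT)\nabla\Loss(\x)$ combine to $\partial_\t\x=-\g^2\nabla\Loss(\x)+(\g^2-\lrr)\langle\w,\nabla\Loss(\x)\rangle\,\w$. With $\nabla\Loss(\x)=\x^{\odot(\L-1)}\odot\big(\A^\tT(\A\x^{\odot\L}-\b)\big)$, two facts result: (a) $\langle\x,\nabla\Loss(\x)\rangle=-\tfrac1{2\lrr}\partial_\t(\g^2)$; and (b) for $\L>2$ the coordinate $u_\n:=\xe_\n^{2-\L}$ satisfies a linear ODE with $\n$-dependent forcing proportional to $\g^2\big(\A^\tT(\A\x^{\odot\L}-\b)\big)_\n$ and an $\n$-independent multiplicative coefficient (for $\L=2$, $\partial_\t\log\xe_\n$ is such an $\n$-dependent term plus an $\n$-independent one).

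\emph{Part 1.} As a gradient flow $(\g,\w)$ makes $\LossM$ non-increasing. Using $S_+\neq\emptyset$ and the positive kernel vector $\v$, I would run a Bregman/Lyapunov estimate in the style of \cite{chou2021more} to keep the trajectory in a compact set where $\x(\t)>0$, $\g(\t)>0$, and $\nabla\LossM$ is nondegenerate (positivity persists because the logarithmic derivatives remain bounded there). There the loss obeys a Lojasiewicz/PL inequality, turning monotone decrease into $\Loss(\x(\t))\le\Loss(\x_0)e^{-\c\t}$; letting $\t\to\infty$ gives $\A\xprodinfty=\b$, i.e.\ $\xprodinfty\in S_+$.

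\emph{Part 2.} Integrating (b) with the integrating factor from (a) gives $\x(\t)=e^{\Phi(\t)}\,\x_0\odot\exp(-\A^\tT\boldsymbol\mu(\t))$ for $\L=2$ (and $\xe_\n(\t)^{2-\L}=e^{(2-\L)\Phi(\t)}\big(\xe_\n(0)^{2-\L}+(\L-2)(\A^\tT\boldsymbol\mu(\t))_\n\big)$ for $\L>2$), with $\boldsymbol\mu(\t)$ in the range of $\A^\tT$ and the radial factor \emph{explicit}: $\Phi(\t)=-\frac{\g(\t)^2-\g_0^2}{2\lrr}+\log\frac{\g(\t)}{\g_0}$, i.e.\ $e^{\Phi(\t)}=h(\g(\t))/h(\g_0)$ with $h(g):=g\,e^{-g^2/(2\lrr)}$. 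Since $\x(\t)$ and $\g(\t)$ converge, so does $\A^\tT\boldsymbol\mu(\t)$; consequently $e^{-\L\Phi_\infty}\xprodinfty$ is positive, solves $\A(\cdot)=e^{-\L\Phi_\infty}\b$, and satisfies $\log(\,\cdot\,/\x_0^{\odot\L})\in\mathrm{range}(\A^\tT)$ (the $(2-\L)$-power analogue when $\L>2$) — exactly the relations on which the proof of Theorem~\ref{theorem:L1_equivalence_positive} rests. Re-running that proof for target $e^{-\L\Phi_\infty}\b$ — so that $\ww,\beta_1,\beta_{\min}$ stay as for initialization $\x_0$ while $\Q$ becomes $e^{-\L\Phi_\infty}\Q$ — and cancelling the common factor $e^{-\L\Phi_\infty}$ yields \eqref{eq:L1min_general_0} with error given by \eqref{eq:eps} after replacing $\beta_1,\beta_{\min}$ by $e^{\L\Phi_\infty}\beta_1,e^{\L\Phi_\infty}\beta_{\min}$. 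To convert this into the stated bound, note $\xprodinfty=\g_\infty^\L\w_\infty^{\odot\L}$ with $\|\w_\infty\|_2=1$ forces $\|\xprodinfty\|_2\le\g_\infty^\L$, and since $\xprodinfty$ solves $\A\z=\b$, $\g_\infty\ge\|\A^\dagger\b\|_2^{1/\L}$; feeding this and $\g_0\le\min(\lrr^{1/2},\|\A^\dagger\b\|_2^{1/\L})$ into the unimodal profile $h$ (increasing on $(0,\sqrt{\lrr}\,]$, decreasing after) gives $e^{\L\Phi_\infty}\le\rho^{-\L}$ and $\rho\ge1$ for $\rho$ as in \eqref{eq:rho}. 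As \eqref{eq:eps} is monotone under a common downscaling of $(\beta_1,\beta_{\min})$ at fixed $\Q$, the error improves to $\epsilon(\rho^{-\L}\beta_1,\rho^{-\L}\beta_{\min})$, completing the proof.

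\emph{Where it is hard.} Two steps carry the weight. First, Part~1 \emph{without} small initialization: making the Lyapunov/Lojasiewicz argument run for the reparameterized flow — compactness, persistent positivity, a nondegenerate gradient — is precisely where $S_+\neq\emptyset$ and ``$\exists\,\v>0,\ \A\v=0$'' are spent. Second, the closing comparison: pinning down the sign and size of $\Phi_\infty$ through the \emph{non-monotone} $h(g)=g\,e^{-g^2/(2\lrr)}$ using only $\g_\infty\ge\|\A^\dagger\b\|_2^{1/\L}$ and the constraints on $\g_0,\lrr$ — this is where the exact form of $\rho$ and the ``magnification'' accounting must be handled with care.
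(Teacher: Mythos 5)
Your Part~2 strategy lands on the same key object as the paper, only reached along a slightly different road. Your $e^{\Phi(\t)}=h(\g(\t))/h(\g_0)$ with $h(g)=g\,e^{-g^2/(2\lrr)}$ is precisely the paper's rescaling factor $\r(\g_0,\g)=\frac{\g}{\g_0}\exp\!\big(\frac{\g_0^2-\g^2}{2\lrr}\big)$ appearing in Lemma~\ref{lemma:invariant_compare_1} (equivalently, it is what Lemma~\ref{lemma:inv_proj_constant_rate} encodes as a conserved quantity), and the closing bound $\ginfty\ge\|\A^\dagger\b\|_2^{1/\L}$ is the same. Where you integrate the reduced ODE to obtain $\x(\t)=e^{\Phi(\t)}\x_0\odot e^{-\A^\tT\boldsymbol\mu(\t)}$ and then propose to re-run the proof of Theorem~\ref{theorem:L1_equivalence_positive} for a rescaled target, the paper instead introduces an auxiliary \emph{unnormalized} flow $\x$ from a rescaled initialization, shows its projected invariants match those of the normalized flow (Lemma~\ref{lemma:invariant_compare_1}), and then deduces the two limits coincide via the separate uniqueness Lemma~\ref{lemma:unique} together with the lower-boundedness from Lemma~\ref{lemma:Bounded above implies bounded below}. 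Your rescaled-target viewpoint and the paper's rescaled-initialization viewpoint are dual (you can check that replacing $\Q\to c\Q$ and replacing $(\beta_1,\beta_{\min})\to(\beta_1/c,\beta_{\min}/c)$ in \eqref{eq:eps} produce the same number), and both are valid; your route is more explicit, the paper's more modular in that it keeps Theorem~\ref{theorem:L1_equivalence_positive} a black box at the cost of proving the uniqueness lemma.

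There is, however, a genuine gap in your Part~1. You propose a Bregman/Lyapunov estimate ``in the style of \cite{chou2021more}'' to obtain compactness and a PL inequality, but the Bregman monotonicity in this paper (Lemma~\ref{lemma:Bregman_divergence_non_increasing}) is proved only for the time-dependent rate $(\eta_\g,\eta_\w)=(\g^2,1)$, for which the reduced dynamics collapses to $\partial_\t\x=-\|\x\|_2^2\nabla\Loss(\x)$. For the constant rate the reduced ODE is $\partial_\t\x=-\big(\lrr\Px+\g^2(\I-\Px)\big)\nabla\Loss(\x)$, a non-scalar preconditioner under which $\partial_\t D_\F(\z,\xprod)$ is no longer proportional to the loss, so the Bregman argument does not carry over as written. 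The paper sidesteps this entirely: it takes boundedness as given (the theorem \emph{assumes} the limit $\xprodinfty$ exists, so $\x$ is bounded), uses the positive kernel vector $\v$ through Lemma~\ref{lemma:Bounded above implies bounded below} to get a uniform lower bound on the entries, and then invokes the explicit rate bound of Lemma~\ref{lemma:Convergence rate}. If you want Part~1 without assuming the limit, you would need something like the paper's separate Lemma~\ref{lemma:boundedness} (which the paper only establishes for $\L=1,2$), not a Bregman argument.

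One further caution, affecting both your sketch and the paper's own proof: your claim that the ``unimodal profile'' of $h$ plus $\g_0\le\min(\lrr^{1/2},\|\A^\dagger\b\|_2^{1/\L})$ and $\ginfty\ge\|\A^\dagger\b\|_2^{1/\L}$ forces $h(\ginfty)\le h(\|\A^\dagger\b\|_2^{1/\L})$ only goes through when $\|\A^\dagger\b\|_2^{1/\L}\ge\sqrt{\lrr}$, since $h$ is \emph{increasing} on $(0,\sqrt{\lrr}]$. The paper's own integral argument hinges on $\partial_\g\r\le0$, which requires $\g\ge\sqrt{\lrr}$ on the whole interval $[\g_0,\ginfty]$, yet the stated hypothesis is $\g_0\le\lrr^{1/2}$; you have inherited the same tension rather than introduced it, but your closing monotonicity step should make the needed regime explicit.
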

%
%
\begin{proof}
See Section \ref{subsection:constant_rate}.
\end{proof}
Our second main result is that for $\L=2$ in particular, we can moreover prove convergence (rather than assuming it). 
\begin{theorem}[Convergence]\label{theorem:convergence}
    In the setting of Theorem \ref{theorem:optimality_constant_rate}, the limit $\xprodinfty:= \lim_{\t\to\infty}\xprod(\t)$ always exists for $\L=2$ (and there is no need to assume that it exists).
\end{theorem}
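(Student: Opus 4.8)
The plan is to deduce convergence of $\xprod(\t)$ from convergence of the pair $(\g(\t),\w(\t))$, which I would obtain from the Lojasiewicz gradient inequality once the trajectory is confined to a compact set on which $\LossM$ is real-analytic. First I would record the scale-invariance built into weight normalization: from \eqref{eq:dwdt}, $\tfrac12\partial_\t\norm{\w}{2}^2=-\inner{\w}{\nabla_\w\LossM}$, and since $\LossM(\g,\cdot)$ depends on $\w$ only through $\w/\norm{\w}{2}$ its Euclidean gradient $\nabla_\w\LossM=\tfrac{\g}{\norm{\w}{2}}(\Id-\Pw)\nabla_\x\Loss$ is orthogonal to $\w$. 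Hence $\norm{\w(\t)}{2}\equiv\norm{\w_0}{2}=1$, so the trajectory stays on the unit sphere and never meets the singular locus $\w=\0$ of \eqref{eq:polar}, and $\xprod(\t)=\x(\t)^{\odot\L}=\g(\t)^\L\,\w(\t)^{\odot\L}$. It therefore suffices to prove that $(\g(\t),\w(\t))$ converges in $\RR\times S^{\N-1}$; since $\w$ already lives in a compact space, the missing ingredient is a uniform-in-time bound on $\g$.

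Here is where $\L=2$ is used. A short computation from \eqref{eq:dgdt}, using $\g\,\nabla_\g\LossM=\inner{\x}{\nabla_\x\Loss}=\inner{\A\x^{\odot\L}}{\A\x^{\odot\L}-\y}$, gives $\partial_\t\g^2=-2\lrr\inner{\A\x^{\odot\L}}{\A\x^{\odot\L}-\y}$. For $\L=2$ the identity $\x^{\odot2}=\g^2\w^{\odot2}$ (valid because $\norm{\w}{2}=1$) turns this into $\partial_\t\log\g^2=-2\lrr\inner{\A\w^{\odot2}}{\A\x^{\odot2}-\y}$, and since $\norm{\w^{\odot2}}{1}=\norm{\w}{2}^2=1$ one has $\norm{\A\w^{\odot2}}{2}\le\norm{\A}{2}$ uniformly in $\t$. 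By Theorem \ref{theorem:optimality_constant_rate}(1) — whose proof of exponential decay of the loss does not presuppose existence of the limit — the residual $\A\x(\t)^{\odot2}-\y$ decays exponentially, so $\big|\partial_\t\log\g^2\big|\le2\lrr\norm{\A}{2}\norm{\A\x(\t)^{\odot2}-\y}{2}$ is integrable on $[0,\infty)$. Hence $\log\g(\t)^2$ converges to a finite limit (Cauchy criterion), $\g(\t)^2\to\g_\infty^2$ with $\g_\infty>0$ (the finite total variation of $\log\g^2$ also precludes $\g(\t)$ from ever vanishing), and $\g(\t)$ remains in a compact interval $[\delta,\M]\subset(0,\infty)$.

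With the trajectory confined to the compact set $K:=[\delta,\M]\times S^{\N-1}$, I would invoke Lojasiewicz. The map $(\g,\w)\mapsto\LossM(\g,\w)=\Loss\big(\tfrac{\g}{\norm{\w}{2}}\w\big)$ is real-analytic on $\RR\times(\RR^\N\setminus\{\0\})$ — $\Loss$ is a polynomial and $\w\mapsto\norm{\w}{2}^{-1}\w$ is analytic off $\0$ — hence analytic on a neighborhood of $K$; moreover $\t\mapsto\LossM(\g(\t),\w(\t))$ is non-increasing and bounded below, so convergent. The dynamics \eqref{eq:dgdt}–\eqref{eq:dwdt} is the gradient flow of $\LossM$ for the fixed positive-definite metric $\diag(\lrr^{-1},1,\dots,1)$, and the standard Lojasiewicz finite-length argument (e.g.\ as in \cite{Nguegnang2021Convergence}), which is insensitive to a constant change of metric, then gives that $(\g(\t),\w(\t))$ converges to a single critical point $(\g_\infty,\w_\infty)\in K$. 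Therefore $\xprod(\t)=\g(\t)^2\w(\t)^{\odot2}\to\g_\infty^2\w_\infty^{\odot2}=:\xprodinfty$, which is the claim; together with the exponential decay of the loss this also recovers $\A\xprodinfty=\y$ and $\xprodinfty\in S_+$.

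The hard part is the middle step — the uniform-in-time control of $\g(\t)$. The invariance in the first step and the Lojasiewicz machinery in the last step are routine and $\L$-independent; what does the real work is the exact cancellation making $\partial_\t\log\g^2$ dominated by an integrable function, and this relies on $\norm{\w^{\odot\L}}{1}$ being constant, which holds only for $\L=2$. For $\L\ge3$ the analogous estimate carries an extra factor $\g(\t)^{\L-2}$, so one would additionally have to rule out $\g(\t)\to\infty$ — exactly the reason the general-$\L$ statement retains existence of the limit as a hypothesis.
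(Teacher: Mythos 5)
Your overall structure matches the paper's: establish that the trajectory stays in a compact set, then invoke a Lojasiewicz-type convergence theorem (the paper's Theorem \ref{theorem:Lojasiewicz}, taken from \cite{BahLearning2021}). The decisive step in both is therefore to bound $\g(\t)$ uniformly in time, and this is exactly where your argument has a gap.

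Your route to a bound on $\g$ is: compute $\partial_\t\log\g^2=-2\lrr\langle\A\w^{\odot 2},\A\x^{\odot 2}-\y\rangle$, bound it by a constant times $\|\A\x^{\odot 2}-\y\|_2$, and then claim this is integrable on $[0,\infty)$ by the exponential decay in Theorem \ref{theorem:optimality_constant_rate}(1), which you assert ``does not presuppose existence of the limit.'' That assertion is not correct. The paper obtains the exponential decay via Lemma \ref{lemma:Convergence rate}, which requires the entries of $\x$ to be bounded away from zero on $[\t_0,\infty)$; that lower bound is obtained from Lemma \ref{lemma:Bounded above implies bounded below}, which in turn requires $\x$ and $\xwn$ to be \emph{bounded above}. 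In the statement of Theorem \ref{theorem:optimality_constant_rate} this upper bound is available precisely because the existence of the limit $\xprodinfty$ is a hypothesis there. In your argument, the upper bound on $\x=\g\w$ on the unit sphere is equivalent to the upper bound on $\g$, which is what you are trying to establish. So the chain is circular: boundedness $\Rightarrow$ (Lemma \ref{lemma:Bounded above implies bounded below}, Lemma \ref{lemma:Convergence rate}) $\Rightarrow$ exponential decay $\Rightarrow$ integrability of $\partial_\t\log\g^2$ $\Rightarrow$ boundedness. What \emph{is} available unconditionally is only $\int_0^\infty\eta_\g\,(\nabla_\g\LossM)^2\,d\t\le\LossM(\g_0,\w_0)<\infty$ from Lemma \ref{lemma:loss_non_increasing}, i.e.\ an $L^2$ bound on $\partial_\t\g$; this does not give $L^1$ integrability of $\partial_\t\log\g^2$.

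The paper closes this gap differently, and that is where the real content of its proof lies. Lemma \ref{lemma:boundedness} bounds $\g$ for $\L=1,2$ by exploiting the conserved quantity of Lemma \ref{lemma:inv_proj_constant_rate} together with the fact that the loss is merely bounded (not decaying to zero): for $\L=2$ one examines $\bigl\langle(\g\w)^{\odot 2},\log(\gprod\w)\bigr\rangle$ with $\gprod=\exp(\g^2/(2\lrr))$, obtains a lower bound of order $\g^4$ via the pointwise inequality $\xi^2\log\xi\ge -\tfrac{1}{2e}$, and an upper bound of order $\g^2$ by splitting through $\PA$ and $\I-\PA$ and using the conserved invariant on the $(\I-\PA)$-part and the bounded loss on the $\PA$-part. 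Comparing the two growth rates forces $\g$ to be bounded. No decay rate of the loss is used, so no circularity arises. Your computational observation about $\partial_\t\log\g^2$ (and why $\L=2$ makes the factor $\g^{\L-2}$ disappear) is correct and insightful, but it has to be coupled with an independent boundedness or integrability argument; as written, the step you yourself flag as ``the hard part'' is where the proof breaks.
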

\begin{proof}
    See Section \ref{subsection:convergence for L=2}.
\end{proof}
Here are some remarks on the scaling and generalization of Theorem \ref{theorem:optimality_constant_rate} and Theorem \ref{theorem:convergence}.
\begin{itemize}
    \item \textbf{Error reduction due to $\rho$.}\\
    Consider the case where $\L=2$ and $\g_0=1$. Let $\Q=\min_{\z\in S_+} \|\z\|_{1}$. If the initialization takes the form $\alpha\bo$, then from Theorem \ref{theorem:optimality_constant_rate} the error is given by  
    \begin{equation*}
        \epsilon(\rho^{-\L}\beta_1,\rho^{-\L}\beta_{\min}) = \frac{\log(\beta_{1}/\beta_{\min})}{\log(\rho^{\L}\Q/\beta_{1})}.
    \end{equation*}
    Thus we can make the right hand side small by making $\rho$ sufficiently large. Note that this is impossible without normalization.
    \item \textbf{Enlarged range of initialization for sparse recovery.}\\
    Based on the error reduction, below we show that there is a wider range of initialization that yields similar (or even smaller error) for weight normalized GF than the regular GF. The ratio is given by $\|\A^\dagger\b\|_2/\delta$, where $\delta\ll1$ is the small radius for GD to achieve sufficient implicit regularization. From \eqref{eq:alpha_bound_original} and \eqref{eq:alpha_bound_wn}, we know that the equivalent level sets for errors are given by
    \begin{equation}\label{eq:delta_ball}
    \begin{cases}
        &\text{GD: }\{\xprod_0:\|\xprod_0\|_1\leq\delta\}\\
        &\text{WN-GD: }\{(\g_0\w_0)^{\odot\L}:\rho^{-\L}\|(\g_0\w_0)^{\odot\L}\|_1\leq\delta\}.
    \end{cases} 
    \end{equation}
    Consider $\L=2$. Since $\|\w_0\|_2 = 1$, if $\z = (\g_0\w_0)^{\odot 2}$, then $\|\z\|_1 = \|(\g_0\w_0)^{\odot 2}\|_1 = \g_0^2$. Hence
    \begin{align*}
        &\{(\g_0\w_0)^{\odot 2}:\rho^{-2}\|(\g_0\w_0)^{\odot 2}\|_1\leq \delta\}\\
        &\qquad= \left\{(\g_0\w_0)^{\odot 2}:\frac{\|\A^\dagger\b\|_{2}}{\g_0^2}\exp\left(\frac{\g_0^2-\|\A^\dagger\b\|_{2}}{\lrr}\right)\|(\g_0\w_0)^{\odot 2}\|_1\leq \delta\right\}\\
        &\qquad= \left\{\z:\frac{\|\A^\dagger\b\|_{2}}{\|\z\|_1}\exp\left(\frac{\|\z\|_1-\|\A^\dagger\b\|_{2}}{\lrr}\right)\|\z\|_1\leq \delta\right\}\\
        &\qquad= \left\{\z:\exp\left(\frac{\|\z\|_1-\|\A^\dagger\b\|_{2}}{\lrr}\right)\leq \frac{\delta}{\|\A^\dagger\b\|_{2}}\right\}\\
        &\qquad= \{\z:\|\z\|_1\leq \|\A^\dagger\b\|_{2}+\lrr(\log\delta-\log(\|\A^\dagger\b\|_{2}))\}
    \end{align*}
    Hence the radius of the $\ell_1$-ball increases from $\delta$ to $\|\A^\dagger\b\|_2+\lrr(\log\delta-\log(\|\A^\dagger\b\|_{2}))$, which leads to a huge improvement when $\delta\ll\|\A^\dagger\b\|_2$. Note that this is often the case because $\delta$ is usually chosen to be small to have small error $\epsilon$, whereas $\|\A^\dagger\b\|_2$ is not small in general.

    \item \textbf{Assumption on kernel of $\A$.}\\
    One of the assumptions is that the kernel of $\A \in \R^{\M \times N}$ has non-empty intersection with the positive quadrant.
    In general, the number of orthants in $\mathbb{R}^\N$ intersecting with a random subspace of dimension $K\geq 1$ is $2\cdot\sum_{i=0}^{K-1}\binom{\N-1}{i}$ \cite{Thomas1965geomtrical}. In our setting $\K$ is the dimension of the kernel of $\A$, and it is lower bounded by $\N-\M$. Suppose now that $\A$ is chosen at random such that the kernel is a random subspace, whose distribution is invariant under rotation. For instance, this is the case for Gaussian random matrices $\A$ as often considered in compressive sensing.
    Then by symmetry, the probability that it intersects the positive orthant is given by $2\sum_{i=0}^{K-1}\binom{\N-1}{i}/2^{\N}$. Thus our assumption holds with probability
    \begin{equation*}
        p = \frac{1}{2^{\N-1}}\cdot\sum_{i=0}^{K-1}\binom{\N-1}{i} = 1 - \frac{1}{2^{\N-1}}\cdot\sum_{i=0}^{\N-\K-1}\binom{\N-1}{i}.
    \end{equation*}
    For instance, when $\M=1$, $K=\N-1$ and hence $p=1 - (1/2)^{\N-1}$. Furthermore, when $\lambda:=\frac{\N-K-1}{\N-1}\leq \frac{1}{2}$ (small number of measurements) we can lower bound this probability via (\cite{Flum2006parameterized}) 
    \begin{align*}
        p \geq 1 - 2^{-(\N-1)(1+H(\lambda))},
    \end{align*}
    where $H(\lambda)= \lambda\log_2\lambda+(1-\lambda)\log_2(1-\lambda)$.
    \item \textbf{Extension beyond non-negative solutions}\\
    A common strategy to extend results from the positive solution set $S_+$ to the full solution set $S:=\{\z: \A\z = \y\}$ is to introduce further parameters. For example, although gradient flow on the loss function
    \begin{equation*}
        \mathcal{L}(\x) = \|\A\x^{\odot\L} - \y\|_2^2, \quad\x_0>0
    \end{equation*}
    can only lead to positive solutions, gradient flow on the modified loss function
    \begin{equation}\label{eq:loss pm}
        \mathcal{L}_\pm(\u,\v) = \|\A(\u^{\odot\L}- \v^{\odot\L}) - \y\|_2^2, \quad\u_0,\v_0>0
    \end{equation}
    can lead to any solution \cite{chou2021more,Gissin2019Implicit,woodworth2020kernel}, since $\u$ takes care of the positive part and $\v$ takes care of the negative part. However, our key lemma (Lemma \ref{lemma:unique}), which depends on uniqueness, no longer holds in this regime. The intuitive reason is that 
    \begin{equation*}
        \u^{\odot\L} - \v^{\odot\L}= (\u^{\odot\L}+\zeta) - (\v^{\odot\L}+\zeta)
    \end{equation*}
    for any $\zeta$. In particular, since we cannot assume $\u$ and $\v$ have disjoint supports, it is unlikely that we can uniquely define $(\u,\v)$ based on the invariants in Lemma \ref{lemma:inv_proj_constant_rate}. Therefore, our analysis only focuses on the case of positive solution set in this paper. We nevertheless include some simulation for gradient descent on the modified loss function \eqref{eq:loss pm} in Section \ref{subsection:Sparse ground truth with positive and negative entries} along with some discussion.
\end{itemize}
Theorem \ref{theorem:optimality_constant_rate} provides the optimal scaling when $\eta_\g$ is a constant. Interestingly, an alternative time-dependent step-size choice $(\eta_\g,\eta_\w) = (\g^2,1)$ provides a different dynamic which is close to the gradient flow dynamics without weight normalization \eqref{eq:dxdt}.  In this case, we can prove convergence to the stationary point instead of assuming it for all $L \geq 2$. Below we state the results for this special case and encourage readers to explore more possibilities.
\begin{theorem}[A time-dependent learning rate]\label{theorem:optimality_dynamic_rate}
Consider the same setting as in Theorem \ref{theorem:optimality_constant_rate}, except that the learning rates are given by $(\eta_\g,\eta_\w) = (\g^2,1)$ and here we do not assume the existence of the limit. If $\y$ is not identically zero, then the following holds.
\begin{enumerate}
    \item The limit $\xprodinfty:= \lim_{\t\to\infty}\xprod(\t)$ exists and lies in $S_{+}$.
    \item The limit $\xprodinfty$ satisfies \eqref{eq:L1min_general_0} with error $\epsilon(\beta_1,\beta_{\min})$ as defined in \eqref{eq:eps}. In other words, it satisfies the same error bound as in Theorem \ref{theorem:optimality_constant_rate} with $\rho = 1$ (no magnification).
\end{enumerate}
\end{theorem}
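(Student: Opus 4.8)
The plan is to reduce Theorem~\ref{theorem:optimality_dynamic_rate} to the un-normalized statement Theorem~\ref{theorem:L1_equivalence_positive}, by showing that with the time-dependent learning rate $(\eta_\g,\eta_\w)=(\g^2,1)$ the weight-normalized iterate $\x(\t)=\tfrac{\g(\t)}{\|\w(\t)\|_2}\w(\t)$ is merely a reparameterization in time of plain gradient flow~\eqref{eq:dxdt}. First I would record that $\|\w\|_2$ is conserved along~\eqref{eq:dwdt}: since $\nabla_\w\LossM=\tfrac{\g}{\|\w\|_2}(\Id-\Pw)\nabla\Loss(\x)$ is orthogonal to $\w$, we get $\partial_\t\|\w\|_2^2=-2\inner{\w}{\nabla_\w\LossM}=0$, hence $\|\w(\t)\|_2\equiv\|\w_0\|_2=1$, so $\x=\g\w$ and $\g=\|\x\|_2$. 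Using this together with $\nabla_\g\LossM=\inner{\w}{\nabla\Loss(\x)}$, the chain rule gives, wherever $\g>0$,
\begin{equation*}
    \partial_\t\x=(\partial_\t\g)\,\w+\g\,\partial_\t\w=-\g^2\,\Pw\nabla\Loss(\x)-\g^2(\Id-\Pw)\nabla\Loss(\x)=-\g^2\,\nabla\Loss(\x)=-\|\x\|_2^2\,\nabla\Loss(\x),
\end{equation*}
the two contributions combining because each carries the factor $\g^2$ and $\Pw+(\Id-\Pw)=\Id$.

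Next I would make the time change precise. Let $\X(\cdot)$ be the solution of~\eqref{eq:dxdt} with $\X(0)=\x_0=\g_0\w_0>0$, and set $\sigma(\t):=\int_0^\t\g(\s)^2\,\mathrm{d}\s$. Since $\g^2>0$, the identity above reads $\tfrac{\mathrm{d}}{\mathrm{d}\sigma}\x=-\nabla\Loss(\x)$, so by uniqueness of solutions $\x(\t)=\X(\sigma(\t))$. Because $\x_0>0$, $\L\geq2$ and $S_+\neq\emptyset$, Theorem~\ref{theorem:L1_equivalence_positive} applies to $\X$: the limit $\xprodinfty:=\lim_{\sigma\to\infty}\X(\sigma)^{\odot\L}$ exists and lies in $S_+$, and (under the standing condition $\Q>\c_\L\beta_1^{2/\L}$ of that theorem) it satisfies~\eqref{eq:L1min_general_0} with error $\epsilon(\beta_1,\beta_{\min})$, where $\beta_1,\beta_{\min},\Q,\ww$ are the quantities of Theorem~\ref{theorem:L1_equivalence_positive} formed from $\xprod(0)=\x_0^{\odot\L}$. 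This is precisely the error asserted in Theorem~\ref{theorem:optimality_dynamic_rate}, i.e.~\eqref{eq:g0_constant_L1min_positive} with $\rho=1$ (no magnification). It therefore only remains to transfer these conclusions back along $\sigma$: to show that $\x$ exists for all $\t\geq0$, that $\sigma(\t)\to\infty$ as $\t\to\infty$, and hence that $\x(\t)^{\odot\L}=\X(\sigma(\t))^{\odot\L}\to\xprodinfty$.

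For the transfer I would use two facts about $\X$. First, $\X(\sigma)>0$ for every $\sigma\geq0$: the diagonal structure $[\nabla\Loss(\x)]_\n=\xe_\n^{\L-1}[\A^\tT(\A\x^{\odot\L}-\b)]_\n$ prevents any coordinate of $\X$ from reaching $0$ in finite time. Second, $\X(\sigma)$ converges to some $\X_\infty$ with $\X_\infty^{\odot\L}=\xprodinfty$ (entrywise $\L$-th roots are continuous and $\X>0$), and $\X_\infty\neq0$ — this is where the hypothesis that $\b$ is not identically zero enters, since $\A\xprodinfty=\b$ forces $\xprodinfty\neq0$. Hence $\sigma\mapsto\|\X(\sigma)\|_2^2$ is continuous, positive, and has a positive limit, so it is bounded above and bounded below by a positive constant. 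It follows that $\x(\t)=\X(\sigma(\t))$ stays bounded, so the weight-normalized flow does not blow up and exists for all $\t\geq0$, and that $\sigma(\t)=\int_0^\t\|\X(\sigma(\s))\|_2^2\,\mathrm{d}\s$ grows at least linearly, hence $\sigma(\t)\to\infty$. Therefore $\x(\t)=\X(\sigma(\t))\to\X_\infty$ and $\x(\t)^{\odot\L}\to\X_\infty^{\odot\L}=\xprodinfty\in S_+$, which yields part~1, and part~2 is immediate from the bound already established for $\X$.

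The computation in the first step is routine once the $\|\w\|_2$-invariant is noted; the only genuine obstacle is the properness of the time change in the last step — ruling out that $\sigma(\t)$ stays bounded, which would leave the weight-normalized trajectory stranded before reaching $\xprodinfty$. This is exactly the role of the assumption that $\b$ is not identically zero: if $\b=0$ the minimal-norm solution is $0$, the bound~\eqref{eq:L1min_general_0} is vacuous, and the time change may genuinely converge.
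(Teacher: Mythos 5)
Your proposal is correct, and it takes a genuinely different route from the paper's. You both begin from the same reduced dynamics $\partial_\t\x = -\|\x\|_2^2\,\nabla\Loss(\x)$ (the paper records this as Lemma \ref{lemma:dynamic_reduction} and Corollary \ref{corollary:dynamic_reduction_dynamic_rate}); the divergence is in what happens next. The paper does \emph{not} perform a time change: it directly runs the Bregman-divergence machinery on the reparameterized flow (Lemma \ref{lemma:Bregman_divergence_non_increasing}), then proves $\Loss(\x(\t))\to 0$ by contradiction in Lemma \ref{lemma:convergence_dynamic_rate} --- if the loss stayed bounded away from zero, $\int_0^\infty \|\x\|_2^2\,d\t < \infty$ would force $\x\to 0$ along a subsequence, which would make $D_F(\z,\xprod)$ blow up, contradicting its monotonicity --- and then appeals to the proof strategy of \cite{chou2021more} for existence and optimality of the limit. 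You instead observe that since $\g^2 = \|\x\|_2^2 > 0$, the reduced ODE is a strictly monotone time reparameterization of plain gradient flow: setting $\sigma(\t)=\int_0^\t\g^2$, uniqueness gives $\x(\t)=\X(\sigma(\t))$, so Theorem \ref{theorem:L1_equivalence_positive} applies verbatim to $\X$, and the only remaining work is to show $\sigma(\t)\to\infty$. Your identification of exactly where $\y\neq\0$ is needed --- it forces $\xprodinfty\neq\0$, hence $\|\X_\infty\|_2>0$, hence a uniform positive lower bound on $\g^2$, hence linear growth of $\sigma$ --- is cleaner and more transparent than the paper's contradiction argument, and it makes the ``$\rho=1$, no magnification'' interpretation immediate: the $\ell_1$ endpoint is literally identical to the un-normalized flow, only the clock is rescaled. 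The trade-off is that your argument leans entirely on the black-box convergence statement in Theorem \ref{theorem:L1_equivalence_positive}, whereas the paper's Bregman lemma is closer to being self-contained within the paper's own toolkit. Both are valid; you should just make explicit (as you implicitly do) that the limit $\X_\infty$ obtained from the theorem is finite, so $\x(\t)=\X(\sigma(\t))$ stays bounded and the WN flow extends to all $\t\geq 0$ before you invoke $\sigma(\t)\to\infty$.
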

\begin{proof}
See Section \ref{subsection:dynamic_rate}. 
\end{proof}
%


\section{Proofs}
\label{sec:Proof}
In this section we will present the main lemma and theorems, along with some of the key proof techniques. The rest of the proofs can be found in Appendix \ref{sec:Appedix}. Here is the outline.
\begin{enumerate}
    \item Basic properties of the dynamics: we use Lemma \ref{lemma:rescaled_learning_rate} to reduce the number of parameters, Lemma \ref{lemma:constant_norm} to avoid division by zero, Lemma \ref{lemma:non_negative} to control the signs, Lemma \ref{lemma:loss_non_increasing} and Lemma \ref{lemma:Convergence rate} to guarantee that the loss is indeed decreasing.
    \item Proof of Theorem \ref{theorem:optimality_constant_rate}: the key proof technique is to compare the invariants with and without weight normalization (Lemma \ref{lemma:inv_proj_original}, \ref{lemma:inv_proj_constant_rate}, \ref{lemma:invariant_compare_1}). To ensure the well-posedness of such comparison, we prove the boundedness (Lemma \ref{lemma:Bounded above implies bounded below}) and the uniqueness (Lemma \ref{lemma:unique}) of the trajectories.
    \item Proof of Theorem \ref{theorem:convergence}: it relies on additional boundedness (Lemma \ref{lemma:boundedness}) for $\L=1,2$.
    \item Proof of Theorem \ref{theorem:optimality_dynamic_rate}: we use techniques similar to the ones in \cite{chou2021more}, where we analyze the trajectories through dynamic reduction (Lemma \ref{lemma:dynamic_reduction}) and the Bregman divergence (Lemma \ref{lemma:Bregman_divergence_non_increasing}.
\end{enumerate}
%
\subsection{Basic properties}
\label{subsection:basic_properties}
We first compute all the derivatives that will be used later on. By the chain rule,
\begin{align}
    \nabla\Loss(\x) &= [\A^\tT(\A\x^{\odot\L} - \y)] \odot (\x^{\odot\L-1}),\label{eq:dLdx}\\
    \nabla_{\g} \LossM(\g,\w) &= \frac{\w^\tT}{\|\w\|_2}\nabla\Loss\left(\frac{\g}{\|\w\|_2}\w\right),\label{eq:dLdg}\\
    \nabla_{\w} \LossM(\g,\w) &= \frac{\g}{\|\w\|_2}(\I-\Pw) \nabla\Loss\left(\frac{\g}{\|\w\|_2}\w\right).\label{eq:dLdw}
\end{align}
We claimed in Section \ref{subsection:steup} that $\eta_{\w}=1$ can be fixed without loss of generality. Lemma \ref{lemma:rescaled_learning_rate} provides the justification: scaling $\eta_{\w}$ is equivalent to scaling the magnitude of the  initialization $\w_0$, which is independent of the initialization $\x_0 = \frac{\g}{\|\w\|_2}\w$.
\begin{lemma}[Re-scaled learning rate $\eta_{\w}$]\label{lemma:rescaled_learning_rate}
Suppose $(\g,\w)$ follows \eqref{eq:dgdt} and \eqref{eq:dwdt} with initialization $(\g_0,\w_0)$ and learning rate $(\eta_\g,\eta_\w)$. Fix $\a>0$.
Suppose $(\g^{(\a)},\w^{(\a)})$ follows \eqref{eq:dgdt} and \eqref{eq:dwdt} with  initialization $(\g_0,\a\w_0)$ and learning rate $(\eta_\g,\a^2\eta_\w)$. Then $\g^{(\a)}(\t) = \g(\t)$ and $\w^{(\a)}(\t) = \a\w(\t)$ for all $\t\geq 0$. As a result,
\begin{align*}
    \frac{\g}{\|\w\|_2}\w = \frac{\g^{(\a)}\w^{(\a)}}{\|\w^{(\a)}\|_2}.
\end{align*}
\end{lemma}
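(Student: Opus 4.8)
The plan is a direct verification-plus-uniqueness argument that exploits the scale-invariance of the reparameterized loss: since $\LossM(\g,\w) = \Loss(\tfrac{\g}{\|\w\|_2}\w)$ depends on $\w$ only through the unit vector $\w/\|\w\|_2$, we have $\LossM(\g,\a\w) = \LossM(\g,\w)$ for every $\a>0$. The whole proof is built on propagating this observation to the gradients and then to the trajectories.

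First I would record how the two gradients scale in $\w$. Because $\Pw = \w\w^\tT/\|\w\|_2^2$ is itself scale-invariant, \eqref{eq:dLdg} gives $\nabla_{\g}\LossM(\g,\a\w) = \nabla_{\g}\LossM(\g,\w)$ (homogeneous of degree $0$ in $\w$), while \eqref{eq:dLdw} gives $\nabla_{\w}\LossM(\g,\a\w) = \a^{-1}\nabla_{\w}\LossM(\g,\w)$ (homogeneous of degree $-1$ in $\w$). These two identities carry the whole proof; the rest is bookkeeping.

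Next I would take the explicit candidate trajectory $(\g(\t),\,\a\w(\t))$ and check that it solves the system \eqref{eq:dgdt}--\eqref{eq:dwdt} with initial data $(\g_0,\a\w_0)$ and learning rates $(\eta_\g,\a^2\eta_\w)$, i.e.\ exactly the system defining $(\g^{(\a)},\w^{(\a)})$. The initial condition is immediate. For the magnitude component, $\partial_\t\g = -\eta_\g\nabla_{\g}\LossM(\g,\w) = -\eta_\g\nabla_{\g}\LossM(\g,\a\w)$ by the degree-$0$ identity, which is \eqref{eq:dgdt} for the rescaled pair. For the direction component, $\partial_\t(\a\w) = \a\,\partial_\t\w = -\a\eta_\w\nabla_{\w}\LossM(\g,\w) = -\a^2\eta_\w\nabla_{\w}\LossM(\g,\a\w)$ by the degree-$(-1)$ identity, which is \eqref{eq:dwdt} with learning rate $\a^2\eta_\w$. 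Hence $(\g(\t),\a\w(\t))$ and $(\g^{(\a)}(\t),\w^{(\a)}(\t))$ satisfy the same initial value problem.

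To conclude they coincide I would invoke uniqueness of solutions. The one point that needs care is that the right-hand sides of \eqref{eq:dgdt}--\eqref{eq:dwdt} are singular at $\w=0$, so the vector field is only locally Lipschitz on the open set $\{\w\neq 0\}$; I would dispose of this by the standard observation that the $\w$-dynamics is orthogonal to $\w$ (the factor $\I-\Pw$ annihilates $\w$), so that $\tfrac{d}{d\t}\|\w\|_2^2 = 0$ and $\|\w(\t)\|_2 \equiv \|\w_0\|_2 > 0$ along every solution (and likewise $\|\w^{(\a)}(\t)\|_2 \equiv \a\|\w_0\|_2 > 0$). Thus both trajectories stay in the region where the field is smooth, uniqueness gives $\g^{(\a)}(\t)=\g(\t)$ and $\w^{(\a)}(\t)=\a\w(\t)$ for all $\t\geq 0$, and the final identity follows by substitution, $\frac{\g^{(\a)}\w^{(\a)}}{\|\w^{(\a)}\|_2} = \frac{\a\,\g\w}{\a\|\w\|_2} = \frac{\g}{\|\w\|_2}\w$. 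I do not expect a genuine obstacle here: the norm-preservation remark is the only item requiring attention, and it is a two-line computation.
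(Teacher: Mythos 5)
Your proposal is correct and follows essentially the same route as the paper: establish the homogeneity of $\nabla_\g\LossM$ and $\nabla_\w\LossM$ in $\w$, check that $(\g,\a\w)$ solves the rescaled initial value problem, and conclude by uniqueness. You are somewhat more careful than the paper in explicitly invoking norm conservation to keep the trajectory away from the singularity at $\w=0$ before appealing to uniqueness, which is a worthwhile detail the paper leaves implicit.
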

\begin{proof}
    See Appendix \ref{sec:Appedix}.
\end{proof}
To avoid division by zero, it is important to have some control over the norm $\|\w\|_2$. According to the next lemma, $\|\w\|_2$ does not change in time, and hence it stays positive if it starts positive.
\begin{lemma}[Constant norm]\label{lemma:constant_norm}
For all $\t\geq 0$, $\|\w(\t)\|_2 = \|\w(0)\|_2$.
\end{lemma}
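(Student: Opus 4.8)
The plan is to show that $\|\w(\t)\|_2^2$ is constant in time by differentiating it and using the gradient flow equation \eqref{eq:dwdt} together with the explicit formula \eqref{eq:dLdw} for $\nabla_{\w}\LossM$. The key structural fact is that this gradient is proportional to $(\I - \Pw)\nabla\Loss(\cdot)$, which is orthogonal to $\w$ because $\Pw$ is the orthogonal projection onto $\mathrm{span}(\w)$, so $(\I-\Pw)\w = \0$ and hence $\w^\tT(\I-\Pw) = \0^\tT$.

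Concretely, first I would compute
\begin{align*}
    \frac{d}{d\t}\|\w(\t)\|_2^2 = 2\,\w(\t)^\tT\partial_{\t}\w(\t) = -2\eta_\w\,\w(\t)^\tT\nabla_{\w}\LossM(\g(\t),\w(\t)).
\end{align*}
Then I would substitute \eqref{eq:dLdw} to get
\begin{align*}
    \w^\tT\nabla_{\w}\LossM(\g,\w) = \frac{\g}{\|\w\|_2}\,\w^\tT(\I-\Pw)\nabla\Loss\!\left(\frac{\g}{\|\w\|_2}\w\right) = 0,
\end{align*}
since $\w^\tT(\I - \Pw) = \w^\tT - \frac{\|\w\|_2^2}{\|\w\|_2^2}\w^\tT = \0^\tT$. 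Therefore $\frac{d}{d\t}\|\w(\t)\|_2^2 = 0$ for all $\t\geq 0$, which gives $\|\w(\t)\|_2^2 = \|\w(0)\|_2^2$, and taking square roots (both sides nonnegative) yields the claim.

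There is essentially no obstacle here — this is a one-line computation once the projection identity is in hand. The only point requiring minor care is ensuring the trajectory is well-defined (the right-hand side of \eqref{eq:dwdt} is smooth wherever $\w\neq\0$), but since we are told $\w_0 > 0$ and the computation shows $\|\w\|_2$ stays equal to $\|\w_0\|_2 > 0$, the solution does not hit the singularity and the argument is self-consistent for all $\t\geq 0$. This invariant is exactly what the remark preceding the lemma anticipates: it justifies dividing by $\|\w\|_2$ throughout the rest of the paper and, in particular, lets one normalize $\|\w_0\|_2 = 1$ as assumed in Theorem \ref{theorem:optimality_constant_rate}.
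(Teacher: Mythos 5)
Your proof is correct and follows exactly the paper's argument: differentiate $\|\w\|_2^2$, substitute the explicit gradient \eqref{eq:dLdw}, and use $\w^\tT(\I-\Pw)=\0^\tT$ to conclude the derivative vanishes. The only cosmetic difference is that you retain the $\eta_\w$ factor explicitly and add a brief (correct) remark on well-posedness; otherwise the two proofs coincide.
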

\begin{proof}
    See Appendix \ref{sec:Appedix}.
\end{proof}
Based on Lemma \ref{lemma:rescaled_learning_rate} and Lemma \ref{lemma:constant_norm}, without loss of generality we set $\|\w_0\|_2 = 1$ from now on so that $\|\w(\t)\|_2=1$ for all $\t\geq 0$.

Another important property is that $\w$ stays non-negative, as stated by the next lemma. This implies the property that $\langle \w, \1 \rangle = \|\w\|_1$, which will be useful in the proof of Theorem \ref{theorem:optimality_constant_rate} and Theorem \ref{theorem:optimality_dynamic_rate}.
\begin{lemma}[Constant sign]\label{lemma:non_negative}
Let $\L\in\mathbb{N}$ and $\L\geq 2$. If $\g(0)>0$ and $\w(0)>0$, then $\g(\t)>0$ and $\w(\t)>0$ for all $\t\geq 0$.
\end{lemma}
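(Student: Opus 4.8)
The plan is to recast the vector flows \eqref{eq:dgdt}--\eqref{eq:dwdt} as decoupled scalar ODEs of the form $\partial_{\t}y=y\cdot c(\t)$ with \emph{continuous} scalar coefficients $c$, so that an integrating factor immediately yields sign preservation. First, using $\norm{\w}{2}\equiv 1$ from Lemma~\ref{lemma:constant_norm}, we have $\Pw=\w\w^{\tT}$ and $\x=\g\w$. Introducing the back-projected residual $\bm{r}:=\A^{\tT}\big(\A(\g\w)^{\odot\L}-\y\big)$, which is continuous in $\t$ along the (locally $C^{1}$) trajectory, formula \eqref{eq:dLdx} becomes $\nabla\Loss(\g\w)=\g^{\L-1}\,\bm{r}\odot\w^{\odot\L-1}$. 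Substituting this into \eqref{eq:dLdg}--\eqref{eq:dLdw}, using $\big((\I-\Pw)\v\big)_{\n}=\ve_{\n}-\we_{\n}\inner{\w}{\v}$ and $\inner{\w}{\bm{r}\odot\w^{\odot\L-1}}=\inner{\bm{r}}{\w^{\odot\L}}$, and then factoring out one power of $\g$ (resp.\ of $\we_{\n}$, via the identity $\we_{\n}^{\L-1}=\we_{\n}\we_{\n}^{\L-2}$, valid since $\L\ge 2$), a direct computation gives
\[
    \partial_{\t}\g = \g\cdot k(\t),
    \qquad
    \partial_{\t}\we_{\n} = \we_{\n}\cdot h_{\n}(\t),
\]
where $k(\t):=-\eta_{\g}\,\g^{\L-2}\,\inner{\bm{r}}{\w^{\odot\L}}$ and $h_{\n}(\t):=-\eta_{\w}\,\g^{\L}\big(\bm{r}_{\n}\,\we_{\n}^{\L-2}-\inner{\bm{r}}{\w^{\odot\L}}\big)$.

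The hypothesis $\L\ge 2$ is exactly what makes this work: after extracting a single power of $\g$ (resp.\ $\we_{\n}$), only the \emph{non-negative} powers $\g^{\L-2}$ and $\we_{\n}^{\L-2}$ remain (both equal to $1$ when $\L=2$), so $k$ and each $h_{\n}$ are continuous, hence locally integrable, functions of $\t$ along the flow, with no blow-up even at times where some coordinate momentarily vanishes. For $\L=1$, by contrast, the right-hand sides would carry a term not proportional to $\g$ (resp.\ $\we_{\n}$), the positive orthant would not be flow-invariant, and the conclusion would fail. Given this continuity, the product rule shows that $\t\mapsto\g(\t)\exp\!\big(-\!\int_{0}^{\t}k\big)$ and $\t\mapsto\we_{\n}(\t)\exp\!\big(-\!\int_{0}^{\t}h_{\n}\big)$ have vanishing derivative and are therefore constant; evaluating at $\t=0$ yields $\g(\t)=\g(0)\exp\!\big(\int_{0}^{\t}k\big)>0$ and $\we_{\n}(\t)=\we_{\n}(0)\exp\!\big(\int_{0}^{\t}h_{\n}\big)>0$ for every $\n$ and every $\t\ge 0$, which is precisely $\g(\t)>0$ and $\w(\t)>0$.

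I expect the only real subtlety to be the continuity of the coefficients $k$ and $h_{\n}$ along the flow --- that is, the observation just made that the powers of $\g$ and $\we_{\n}$ surviving the factorization are non-negative exactly when $\L\ge 2$. Apart from that, the one mildly laborious step is the coordinate-wise expansion of $(\I-\Pw)\nabla\Loss(\g\w)$ leading to the two displayed ODEs; the remainder is a routine integrating-factor (Gronwall-type) argument, and it uses neither smallness of $\g_{0},\w_{0}$ nor any particular value of the learning-rate ratio $\lrr$, consistent with the robustness claims of the paper.
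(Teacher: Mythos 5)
Your proof is correct, and it takes a somewhat more explicit route than the paper's. The paper proves the same fact by observing that the vector field vanishes coordinate-wise on the boundary of the positive orthant (that $\g=0$ forces $\partial_\t\g=0$ and $\we_\n=0$ forces $\partial_\t\we_\n=0$, both consequences of $\L\ge 2$ making $\nabla\Loss(\x)\propto\x^{\odot\L-1}$), and then invokes Picard--Lindel\"of uniqueness run backward in time to conclude that neither $\g$ nor any $\we_\n$ can reach zero in finite time. You instead factor one power of $\g$ (resp.\ $\we_\n$) out of the right-hand side, verify that the remaining coefficients $k(\t)=-\eta_\g\g^{\L-2}\langle\bm r,\w^{\odot\L}\rangle$ and $h_\n(\t)=-\eta_\w\g^{\L}(\bm r_\n\we_\n^{\L-2}-\langle\bm r,\w^{\odot\L}\rangle)$ stay continuous along the trajectory because $\L-2\ge 0$, and then integrate the resulting linear scalar ODE to get $\g(\t)=\g(0)\exp\bigl(\int_0^\t k\bigr)$ and $\we_\n(\t)=\we_\n(0)\exp\bigl(\int_0^\t h_\n\bigr)$. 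This is really the same idea --- the exponent $\L-2\ge 0$ is exactly what both arguments hinge on --- but your version is constructive and self-contained: it exhibits the positive exponential explicitly rather than appealing to a uniqueness/contradiction argument, and makes the failure mode at $\L=1$ transparent. The paper's version is shorter but leaves to the reader the step of isolating a scalar ODE for each $\we_\n$ along the known trajectory before applying uniqueness, which is precisely the content of your integrating factor. Both proofs are valid; your algebraic derivation of the two displayed ODEs is also correct (I checked the coordinate-wise expansion of $(\I-\Pw)\nabla\Loss$ and the identity $\langle\w,\bm r\odot\w^{\odot\L-1}\rangle=\langle\bm r,\w^{\odot\L}\rangle$).
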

\begin{proof}
    See Appendix \ref{sec:Appedix}.
\end{proof}
\begin{remark}
    Note that although Lemma \ref{lemma:non_negative} shows that the $\w(\t)>0$ for all $\t\geq 0$, it does not guarantee that its limit is strictly positive, because hitting zeros will have some undesirable consequence. Hence later on while discussing the limit we will need some slightly stronger results (Lemma \ref{lemma:Bounded above implies bounded below} and the Lemma \ref{lemma:unique}). Note that although we will prove that the trajectories is bounded away from zero by some positive constant, this constant can be small when the initialization is small. Hence such bound does not prevent us from getting close to a sparse solution. The need of such constant is rather a technical requirement for the proof.
\end{remark}
We can show that the loss is non-increasing. Moreover, we can derive the convergence rate under the assumption that some entries are uniformly lower bounded, which in fact holds if the trajectory is upper bounded, as we will later on prove in Lemma \ref{lemma:Bounded above implies bounded below} in Section \ref{subsection:constant_rate}.
\begin{lemma}[Non-increasing loss]\label{lemma:loss_non_increasing}
If $(\g,\w)$ follows \eqref{eq:dgdt} and \eqref{eq:dwdt}, then the loss $\LossM(\g,\w)$ is non-increasing in time, i.e. $\partial_{\t}\LossM(\g,\w)\leq0$.
\end{lemma}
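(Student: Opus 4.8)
The plan is the standard energy-dissipation computation for a gradient flow. First I would note that $\LossM$, being the composition in \eqref{eq:factorized_loss_weight_normalization} of the smooth loss $\Loss$ with the polar-coordinate map $(\g,\w)\mapsto \frac{\g}{\|\w\|_2}\w$, is continuously differentiable on the open set $\{(\g,\w):\w\neq 0\}$; by Lemma \ref{lemma:constant_norm} the trajectory $(\g(\t),\w(\t))$ satisfies $\|\w(\t)\|_2=\|\w_0\|_2>0$ for all $\t\geq 0$, so it never leaves this set and the function $\t\mapsto\LossM(\g(\t),\w(\t))$ is differentiable.

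Next I would apply the chain rule along the trajectory:
\begin{align*}
    \partial_{\t}\LossM(\g,\w)
    &= \nabla_{\g}\LossM(\g,\w)\,\partial_{\t}\g
    + \inner{\nabla_{\w}\LossM(\g,\w)}{\partial_{\t}\w}.
\end{align*}
Substituting the gradient-flow equations \eqref{eq:dgdt} and \eqref{eq:dwdt} for $\partial_{\t}\g$ and $\partial_{\t}\w$ gives
\begin{align*}
    \partial_{\t}\LossM(\g,\w)
    &= -\eta_\g\big(\nabla_{\g}\LossM(\g,\w)\big)^2
    - \eta_\w\norm{\nabla_{\w}\LossM(\g,\w)}{2}^2,
\end{align*}
which is $\leq 0$ since $\eta_\g,\eta_\w>0$. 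This proves the claim. If desired, I would also record the more explicit form obtained by inserting \eqref{eq:dLdg} and \eqref{eq:dLdw}, namely that the dissipation equals $-\eta_\g\big(\tfrac{\w^\tT}{\|\w\|_2}\nabla\Loss(\x)\big)^2-\eta_\w\tfrac{\g^2}{\|\w\|_2^2}\norm{(\I-\Pw)\nabla\Loss(\x)}{2}^2$ with $\x=\tfrac{\g}{\|\w\|_2}\w$, which makes transparent how the two terms capture the radial and tangential components of $\nabla\Loss$.

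There is essentially no obstacle here: the only point requiring a (one-line) justification is differentiability of $\LossM$ along the flow, handled by Lemma \ref{lemma:constant_norm}; everything else is the textbook fact that a gradient flow dissipates its energy at rate equal to the squared norm of the (rescaled) gradient. I would keep the proof to these few lines rather than expanding the routine computation of the gradients, which is already done in \eqref{eq:dLdx}--\eqref{eq:dLdw}.
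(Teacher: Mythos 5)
Your proposal is correct and follows essentially the same route as the paper: apply the chain rule along the trajectory, substitute the gradient-flow equations, and conclude non-positivity from $\eta_\g,\eta_\w>0$. The only addition you make — a remark on differentiability via Lemma \ref{lemma:constant_norm} — is a harmless piece of extra rigor that the paper leaves implicit.
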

\begin{proof}
    See Appendix \ref{sec:Appedix}.
\end{proof}
\begin{lemma}[Convergence rate]\label{lemma:Convergence rate}
Let $\x=\frac{\g}{\|\w\|_2}\w$ and $(\g,\w)$ follows \eqref{eq:dgdt} and \eqref{eq:dwdt} with $\|\w_0\|_2^2=1$. Let $\t_0\geq0$. Suppose there exists constant $\c_\g,\c_\w,\c_\x>0$ such that $\eta_\g\geq\c_\g$, $\eta_\w\geq\c_\w$, and $|\x_I|\geq \c_\x$ for all $\t\in [\t_0,\T]$, for some index set $I\subset[\N]$. Denote $\A_I$ to be the sub-matrix of $\A$ with columns indexed by $I$. If $\A_I$ has full rank, then for all $\t\geq\t_0$,
\begin{equation}\label{eq:loss_non_increasing}
    \Loss(\x(\t)) \leq \Loss(\x(\t_0))\exp\left(-\min\left(\c_\g,\c_\w \c_\x^2|I|\right)2\L\c_\x^2\sigma_{\min}^2(\A|_I)(\t-\t_0)\right)   
\end{equation}
for all $\t\in[0,\T]$, where $\sigma_{\min}(\A|_I)$ is the smallest singular value of $\A|_I$.
\end{lemma}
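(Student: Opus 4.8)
The plan is to lower-bound the decay rate of $\Loss(\x(\t))$ by relating it to the squared norms of the gradients $\nabla_\g\LossM$ and $\nabla_\w\LossM$, exactly as in Lemma \ref{lemma:loss_non_increasing}, and then to show that under the hypotheses ($|\x_I|\geq\c_\x$, $\A_I$ full rank) a Polyak–Łojasiewicz-type inequality $\|\nabla_\g\LossM\|_2^2 + \|\nabla_\w\LossM\|_2^2 \gtrsim \Loss(\x)$ holds along the trajectory on $[\t_0,\T]$. From Lemma \ref{lemma:loss_non_increasing}, $\partial_\t\LossM = -\eta_\g\|\nabla_\g\LossM\|_2^2 - \eta_\w\|\nabla_\w\LossM\|_2^2 \leq -\min(\c_\g,\c_\w)\big(\|\nabla_\g\LossM\|_2^2 + \|\nabla_\w\LossM\|_2^2\big)$ (being slightly careful about the $\c_\w\c_\x^2|I|$ factor, which suggests the $\nabla_\w$ term is what actually gets the extra boost). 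Once we have $\partial_\t\LossM \leq -\kappa\,\LossM$ for a constant $\kappa$ depending on $\c_\g,\c_\w,\c_\x,|I|,\sigma_{\min}(\A|_I),\L$, Grönwall's inequality immediately gives the exponential bound \eqref{eq:loss_non_increasing}, using $\LossM(\g,\w)=\Loss(\x)$.

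**The key computation** is the gradient lower bound. Write $\r := \A\x^{\odot\L}-\y$, so $\Loss(\x)=\frac{1}{2\L}\|\r\|_2^2$ and, by \eqref{eq:dLdx}, $\nabla\Loss(\x) = (\A^\tT\r)\odot\x^{\odot\L-1}$. Using \eqref{eq:dLdg}–\eqref{eq:dLdw} and $\|\w\|_2=1$, we have $\nabla_\g\LossM = \w^\tT\nabla\Loss(\x)$ and $\nabla_\w\LossM = \g(\I-\Pw)\nabla\Loss(\x)$, and since $\x = \g\w$ the Pythagorean identity gives
\begin{align*}
    \g^2\big(\|\nabla_\g\LossM\|_2^2 + \|\nabla_\w\LossM\|_2^2/\g^2\big) \;=\; \g^2\|\nabla\Loss(\x)\|_2^2 \;=\; \|\x\odot(\A^\tT\r)\odot\x^{\odot\L-2}\|_2^2 \cdot(\text{bookkeeping}),
\end{align*}
so the whole point is to bound $\|\x^{\odot\L-1}\odot(\A^\tT\r)\|_2^2$ from below by a multiple of $\|\r\|_2^2$. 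Restricting to the coordinates in $I$: $\|\x^{\odot\L-1}\odot(\A^\tT\r)\|_2^2 \geq \sum_{i\in I}\x_i^{2(\L-1)}(\A^\tT\r)_i^2 \geq \c_\x^{2(\L-1)}\|\A_I^\tT\r\|_2^2 \geq \c_\x^{2(\L-1)}\sigma_{\min}^2(\A|_I)\|\r\|_2^2$, where the last step is where full rank of $\A_I$ is used — but note $\A_I^\tT\r$ lives in $\R^{|I|}$ and $\r\in\R^M$, so full rank of $\A_I$ only gives $\|\A_I^\tT\r\|_2^2 \geq \sigma_{\min}^2\|P_{\mathrm{range}(\A_I)}\r\|_2^2$, and one needs $\r\in\mathrm{range}(\A_I)$ (equivalently $\r\in\mathrm{range}(\A)$ together with $\mathrm{range}(\A)\subseteq\mathrm{range}(\A_I)$, i.e. $|I|\geq\mathrm{rank}(\A)$) for this to control all of $\|\r\|_2^2$. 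Since $\r=\A\x^{\odot\L}-\y$ need not lie in $\mathrm{range}(\A)$ in general, the cleanest route is to assume (or observe from context) that $|I|$ is large enough that $\A_I$ has full row rank $M$, so $\A_I^\tT$ is injective and $\|\A_I^\tT\r\|_2^2 \geq \sigma_{\min}^2(\A_I^\tT)\|\r\|_2^2$ for all $\r\in\R^M$.

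**I expect the main obstacle** to be exactly this rank/range bookkeeping: making precise in what sense "$\A_I$ has full rank" yields $\|\A_I^\tT\r\|_2 \geq \sigma_{\min}(\A|_I)\|\r\|_2$ for the relevant residual vector $\r$, and correctly tracking the powers of $\c_\x$ (there should be a $\c_\x^{2(\L-1)}$ from the $\x^{\odot\L-1}$ factor, plus possibly another $\c_\x^2$ or $\g^2$ from converting between $\nabla\Loss(\x)$ and the reparameterized gradients) so that the final exponent matches the stated $\min(\c_\g,\c_\w\c_\x^2|I|)\cdot 2\L\c_\x^2\sigma_{\min}^2(\A|_I)$. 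A secondary point is justifying that the differential inequality $\partial_\t\LossM\leq-\kappa\LossM$, valid pointwise on $[\t_0,\T]$ where the hypotheses hold, integrates to the stated bound on all of $[0,\T]$ — this requires combining it with the unconditional monotonicity from Lemma \ref{lemma:loss_non_increasing} on $[0,\t_0]$, which gives $\Loss(\x(\t))\leq\Loss(\x(\t_0))$ there and hence the claimed inequality trivially (the exponential factor is $\leq 1$). Modulo these, the argument is a routine Grönwall estimate.
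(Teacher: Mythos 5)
Your proposal matches the paper's proof essentially step for step: the Pythagorean split of $\nabla\Loss(\x)$ into $\Pw$ and $(\I-\Pw)$ components is exactly the paper's bound $\partial_\t\LossM \leq -\min(\eta_\g,\eta_\w\g^2)\|\nabla\Loss(\x)\|_2^2$, the full-row-rank requirement on $\A_I$ you flag is indeed what the paper tacitly uses when passing from $\|\A_I^\tT\r\|_2^2$ to $\sigma_{\min}^2(\A|_I)\|\r\|_2^2$, and the $\c_\x^{2(\L-1)}$ exponent you tracked is what the paper's own computation produces (the $\c_\x^2$ in the displayed lemma statement is a typo that agrees with the proof only when $\L=2$). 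One small correction to your closing remark: on $[0,\t_0]$ monotonicity gives $\Loss(\x(\t))\geq\Loss(\x(\t_0))$, not $\leq$, and the exponential factor there is $\geq1$, so the bound is not automatically satisfied on $[0,\t_0]$; the conflicting ranges ``$\t\geq\t_0$'' and ``$\t\in[0,\T]$'' in the lemma statement are simply a slip, and the paper's proof, like yours, establishes the bound only on $[\t_0,\T]$.
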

\begin{proof}
    See Appendix \ref{sec:Appedix}.
\end{proof}
%
\subsection{Invariants of the flow}
\label{subsection:constant_rate}
%
We first derive some statements regarding invariants and uniqueness, and then present the proof of Theorem \ref{theorem:optimality_constant_rate}. The first step is to compare the invariants for the cases with and without normalization. We will see that they take similar form, but differ by an exponential factor.
\begin{lemma}[Invariant, without normalization]\label{lemma:inv_proj_original}
Let $\L\in\mathbb{N}$. Suppose $\x$ follows the dynamics in \eqref{eq:dxdt}.
Then the quantity
\begin{equation}\label{eq:inv_proj_original}
    \h_0 (\t):=
    \left(\I-\A^\dagger\A\right)\cdot\begin{cases}\log(\x(\t))&\text{if }\L=2\\
    \x(\t)^{\odot 2-\L}&\text{if }\L\neq 2
    \end{cases}
\end{equation}
is the same for all $\t\geq 0$, where $\A^\dagger$ is the pseudoinverse of $\A$.
\end{lemma}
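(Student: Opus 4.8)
The plan is to show $\partial_\t\h_0(\t)=0$ by direct differentiation. The key structural fact is that $\A^\dagger\A$ is the orthogonal projection onto the row space of $\A$, i.e.\ onto $\mathrm{range}(\A^\tT)$, so the complementary projector $\I-\A^\dagger\A$ annihilates every vector of the form $\A^\tT\z$. Consequently it suffices to verify that the time derivative of the bracketed vector in \eqref{eq:inv_proj_original} --- namely $\log(\x(\t))$ if $\L=2$, and $\x(\t)^{\odot 2-\L}$ if $\L\neq 2$ --- always lies in $\mathrm{range}(\A^\tT)$.

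Before differentiating I would record that $\x(\t)$ has no vanishing coordinate for $\t\geq 0$, so that $\log(\x)$ and the (possibly negative) powers $\x^{\odot 2-\L}$ are well-defined along the trajectory. This is the sign-preservation property of the flow \eqref{eq:dxdt}: by \eqref{eq:dLdx} the $\n$-th entry of $\nabla\Loss(\x)$ contains the factor $\xe_\n^{\,\L-1}$ and hence vanishes whenever $\xe_\n=0$; the Picard--Lindel\"of argument run backward in time, exactly as in the proof of Lemma \ref{lemma:non_negative}, then shows that no coordinate of $\x$ can reach zero in finite time, and by continuity each coordinate keeps its initial sign.

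The computation itself is short. From \eqref{eq:dxdt} and \eqref{eq:dLdx}, $\partial_\t\x = -\bigl[\A^\tT(\A\x^{\odot\L}-\y)\bigr]\odot\x^{\odot\L-1}$. For $\L=2$, the entrywise chain rule gives $\partial_\t\log(\x) = (\partial_\t\x)\odot\x^{\odot -1} = -\A^\tT(\A\x^{\odot 2}-\y)$, the factor $\x^{\odot\L-1}=\x$ cancelling against $\x^{\odot -1}$. For $\L\neq 2$, $\partial_\t\bigl(\x^{\odot 2-\L}\bigr) = (2-\L)\,\x^{\odot 1-\L}\odot\partial_\t\x = -(2-\L)\,\A^\tT(\A\x^{\odot\L}-\y)$, since the exponents telescope, $(1-\L)+(\L-1)=0$. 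In either case the derivative is a vector in $\mathrm{range}(\A^\tT)$, so applying $\I-\A^\dagger\A$ yields $\partial_\t\h_0(\t)=0$ for all $\t\geq0$; integrating in time gives that $\h_0$ is constant, as claimed.

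I do not expect a genuine obstacle here: the result is forced by the algebra of $\nabla\Loss$ --- the common factor $\x^{\odot\L-1}$ is precisely what is needed so that $\partial_\t$ of $\log\x$ (when $\L=2$) or of $\x^{\odot 2-\L}$ (when $\L\neq 2$) lands in the row space of $\A$. The only place warranting care is the well-definedness of those two expressions, which the sign-preservation remark above handles.
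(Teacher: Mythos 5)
Your proposal is correct and takes essentially the same route as the paper: differentiate $\h_0$, use $\eqref{eq:dLdx}$ to see that the factors of $\x$ telescope, and conclude that $\partial_\t\h_0$ lands in $\mathrm{range}(\A^\tT)$ and is hence annihilated by $\I-\A^\dagger\A$. The only (welcome) additions are the explicit remark that $\I-\A^\dagger\A$ is the projector onto $\mathrm{range}(\A^\tT)^\perp$ and the sign-preservation argument ensuring $\log(\x)$ and $\x^{\odot 2-\L}$ are well-defined, both of which the paper leaves implicit.
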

\begin{proof}[Proof of Lemma \ref{lemma:inv_proj_original}]
Note that $(\I-\PA)\A^\tT = 0$. It suffices to show that $\partial_\t \h_0 = 0$. By direct computation we have
\begin{align*}
    \partial_\t \h_0
    &= (\I-\PA)\, c(\x^{\odot 1-\L}\odot \partial_\t\x )\\
    &= -c(\I-\PA)\, (\x^{\odot 1-\L}\odot [\A^\tT(\A\x^{\odot\L} - \y)] \odot \x^{\odot\L-1})\\
    &= -c(\I-\PA) \A^\tT(\A\x^{\odot\L} - \y)
    = 0
\end{align*}
where $c=1$ if $\L=2$, and $c=2-\L$ otherwise. Thus $\h_0$ remains constant for all $\t\geq 0$.
\end{proof}
\begin{lemma}[Invariant, with normalization]\label{lemma:inv_proj_constant_rate}
Let $\L\in\mathbb{N}$, $\lrr>0$, and $(\eta_\g,\eta_\w) = (\lrr,1)$. Suppose $\g,\w$ follow the dynamics in \eqref{eq:dgdt} and \eqref{eq:dwdt} with $\|\w_0\|_2 = 1$. Then the quantity
\begin{equation}\label{eq:inv_proj_constant_rate}
    \h_\lrr(\t)
    :=(\I-\A^\dagger\A)\cdot\begin{cases}
    \log\left(\w(\t)\exp\left(\frac{1}{2\lrr}\g(\t)^2\right)\right)&\text{if }\L=2\\
    \w(\t)^{\odot 2-\L}\exp\left(\frac{2-\L}{2\lrr}\g(\t)^2\right)&\text{if }\L\neq 2
    \end{cases}
\end{equation}
remains constant for all $\t\geq 0$, where $\A^\dagger$ is the pseudoinverse of $\A$.
\end{lemma}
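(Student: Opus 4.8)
The plan is to mimic the proof of Lemma~\ref{lemma:inv_proj_original}: I will show that $\partial_\t \h_\lrr = 0$ by direct computation, using the dynamics \eqref{eq:dgdt}--\eqref{eq:dwdt}, the derivative formulas \eqref{eq:dLdx}--\eqref{eq:dLdw}, and the fact that $(\I - \PA)\A^\tT = 0$. The key new feature compared to the unnormalized case is the coupling between $\g$ and $\w$: the $\w$-flow \eqref{eq:dwdt} contains a component along $\w$ itself (the $\Pw$ term), and this is exactly what the exponential factor $\exp\bigl(\tfrac{c}{2\lrr}\g^2\bigr)$ (with $c=1$ for $\L=2$, $c=2-\L$ otherwise) is designed to cancel. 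So the heart of the argument is bookkeeping of that cancellation.

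First I would treat the case $\L \neq 2$. Write $\x = \g\w$ (using $\|\w\|_2 = 1$ from Lemma~\ref{lemma:constant_norm}) and $\nabla\Loss(\x) = [\A^\tT(\A\x^{\odot\L} - \y)] \odot \x^{\odot\L-1} =: \g^{\L-1}\,(\nu \odot \w^{\odot\L-1})$ where $\nu := \A^\tT(\A\x^{\odot\L} - \y)$. From \eqref{eq:dgdt} with $\eta_\g = \lrr$ I get $\partial_\t \g = -\lrr\, \w^\tT \nabla\Loss(\x) = -\lrr\, \g^{\L-1}\,\w^\tT(\nu\odot\w^{\odot\L-1})$, and from \eqref{eq:dwdt} with $\eta_\w = 1$, $\partial_\t \w = -\g(\I-\Pw)\nabla\Loss(\x) = -\g^{\L}(\I - \w\w^\tT)(\nu\odot\w^{\odot\L-1})$. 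Now differentiate $\h_\lrr = (\I-\PA)\bigl[\w^{\odot 2-\L}\exp(\tfrac{2-\L}{2\lrr}\g^2)\bigr]$ by the product rule. The $\w$-derivative contributes $(2-\L)\,\w^{\odot 1-\L}\odot\partial_\t\w$, the $\g$-derivative contributes $\w^{\odot 2-\L}\cdot\tfrac{2-\L}{\lrr}\g\,\partial_\t\g$, and both carry the common scalar factor $\exp(\tfrac{2-\L}{2\lrr}\g^2)$. Substituting the flow expressions, the first term becomes $(2-\L)\g^\L\bigl[-\w^{\odot 1-\L}\odot(\nu\odot\w^{\odot\L-1}) + (\w^\tT(\nu\odot\w^{\odot\L-1}))\,\w^{\odot 2-\L}\bigr] = (2-\L)\g^\L\bigl[-\nu + (\w^\tT(\nu\odot\w^{\odot\L-1}))\,\w^{\odot 2-\L}\bigr]$, while the second term becomes $-\tfrac{(2-\L)}{\lrr}\g^2\cdot\lrr\,\g^{\L-2}\,\bigl(\w^\tT(\nu\odot\w^{\odot\L-1})\bigr)\,\w^{\odot 2-\L} = -(2-\L)\g^\L\bigl(\w^\tT(\nu\odot\w^{\odot\L-1})\bigr)\,\w^{\odot 2-\L}$. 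The two $\w^{\odot 2-\L}$ pieces cancel exactly, leaving $\partial_\t\bigl[\w^{\odot 2-\L}\exp(\tfrac{2-\L}{2\lrr}\g^2)\bigr] = -(2-\L)\g^\L\,\exp(\tfrac{2-\L}{2\lrr}\g^2)\,\nu$, which is a multiple of $\A^\tT(\A\x^{\odot\L}-\y)$. Hitting this with $(\I-\PA)$ kills it, so $\partial_\t\h_\lrr = 0$.

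For $\L = 2$ the computation is analogous but slightly cleaner: $\nabla\Loss(\x) = \nu\odot\x = \g\,(\nu\odot\w)$ with $\nu = \A^\tT(\A\x^{\odot 2}-\y)$, so $\partial_\t\g = -\lrr\g\,\w^\tT(\nu\odot\w)$ and $\partial_\t\w = -\g^2(\I-\w\w^\tT)(\nu\odot\w)$. Differentiating $\log\w + \tfrac{1}{2\lrr}\g^2\bo$ entrywise gives $\w^{\odot -1}\odot\partial_\t\w + \tfrac{1}{\lrr}\g\,\partial_\t\g\,\bo$; the first term is $-\g^2\bigl[\nu - (\w^\tT(\nu\odot\w))\bo\bigr]$ and the second is $-\g^2(\w^\tT(\nu\odot\w))\bo$, so the $\bo$-terms cancel and we are left with $-\g^2\nu$, again annihilated by $(\I-\PA)$. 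The main obstacle here is purely organizational — keeping the entrywise $\odot$ operations, the projection $\Pw = \w\w^\tT$ (using $\|\w\|_2=1$), and the $\g$-versus-$\w$ contributions straight so that the cancellation is transparent; there is no conceptual difficulty once the exponential factor is correctly guessed, and indeed the structure of $\h_\lrr$ relative to $\h_0$ in Lemma~\ref{lemma:inv_proj_original} already tells us what that factor must be.
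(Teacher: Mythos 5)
Your proposal is correct and follows essentially the same route as the paper: both use $\|\w\|_2\equiv 1$ from Lemma \ref{lemma:constant_norm}, substitute the WN dynamics, and exploit $(\I-\PA)\A^\tT = 0$; the paper organizes the cancellation as a separable ODE in $(\I-\PA)\w^{\odot 2-\L}$ (respectively $(\I-\PA)\log\w$) driven by $\partial_\t(\g^2)$ and integrates, while you differentiate the candidate invariant directly and observe the two terms cancel, which is the same algebra run in the opposite direction.
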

\begin{proof}[Proof of Lemma \ref{lemma:inv_proj_constant_rate}]
Since $\|\w_0\|_2=1$, by Lemma \ref{lemma:constant_norm} $\|\w(\t)\|_2=1$ for all $\t\geq0$. By substituting $\eta_\g=\lrr$, $\eta_\w=1$, and $\|\w\|_2=1$ into the dynamics \eqref{eq:dw_dg_1}, we obtain
\begin{align}
    \partial_{\t}\w
    = -\g\nabla\Loss\left(\g\w\right) - \frac{\g}{\lrr}\w\partial_{\t}\g
    = -\g\nabla\Loss\left(\g\w\right) - \frac{1}{2\lrr}\partial_{\t}(\g^2)\w.\label{eq:dw_dg_2}
\end{align}
Since
\begin{equation}\label{eq:perp_kernel_A}
    (\I-\PA)\left(\w^{\odot 1-\L} \odot \nabla\Loss\left(\g\w\right)\right)
    = (\I-\PA)\left(\A^\tT\left(\A\left(\g\w\right)^{\odot\L} - \y\right)\g^{\L-1}\right) = 0,
\end{equation}
by applying the operation $(\I-\PA)[\w^{\odot 1-\L}\odot\,\cdot\,]$ to \eqref{eq:dw_dg_2} we have
\begin{align}\label{eq:proj_eq1}
    (\I-\PA)(\w^{\odot 1-\L} \odot \partial_{\t}\w)
    = -\frac{1}{2\lrr}\partial_{\t}(\g^2)(\I-\PA)\w^{\odot 2-\L}.
\end{align}
Let us now separate the case of $\L=2$ from $\L>2$.

For $\L=2$, since $\w^{\odot -1} \odot \partial_{\t}\w = \partial_{\t}\log(\w)$, \eqref{eq:proj_eq1} can be expressed as 
\begin{equation*}
    (\I-\PA)\partial_{\t}\log(\w)
    = -\frac{1}{2\lrr}\partial_{\t}(\g^2)(\I-\PA)\1,
\end{equation*}
which is a separable differential equation whose solution (via integration from $0$ to $\t$) satisfies
\begin{equation*}
    (\I-\PA)(\log(\w) - \log(\w_0)) = -\frac{1}{2\lrr}(\g^2- \g_0^2)\cdot(\I-\PA)\1.
\end{equation*}
Rearranging terms we obtain
\begin{equation}
    (\I-\PA)\left(\log(\w) + \frac{\g^2}{2\lrr}\1\right) = (\I-\PA)\left(\log(\w_0) + \frac{\g_0^2}{2\lrr}\1\right).
\end{equation}
Note that we can combine terms by noting that
\begin{align*}
    \log(\w) + \frac{\g^2}{2\lrr}\1
    = \log(\w) + \log\left(\exp\left(\frac{\g^2}{2\lrr}\1\right)\right)
    = \log\left(\w \cdot \exp\left(\frac{\g^2}{2\lrr}\right)\right)
\end{align*}
For $\L\neq 2$ the left hand side of \eqref{eq:proj_eq1} can be written as $\frac{1}{2-\L}\partial_{\t}(\I-\PA)\w^{\odot 2-\L}$. Let $\wprod = (\I-\PA)\w^{\odot 2-\L}$. Then we have that
\begin{align*}
    \partial_{\t} (\wprod) = -\frac{2-\L}{2\lrr}\partial_{\t}(\g^2)\cdot\wprod.
\end{align*}
Multiplying both sides by $\odot\wprod^{-1}$ and again using the fact that $\wprod^{\odot -1} \odot \partial_{\t}\wprod = \partial_{\t}\log(\wprod)$, we have
\begin{equation*}
    \partial_{\t} (\log (\wprod))
    = \frac{\L-2}{2\lrr}\partial_{\t}(\g^2)\cdot\1,
\end{equation*}
which is a separable differential equation whose solution (via integration from $0$ to $\t$) satisfies
\begin{equation*}
    \log(\wprod) - \log(\wprod_0)
    =\frac{\L-2}{2\lrr}(\g^2- \g_0^2)\cdot\bo.
\end{equation*}
Since $\log(\wprod) - \log(\wprod_0)=\log(\wprod\odot\wprod_0^{\odot -1})$, we take the exponential on both sides to get
\begin{equation*}
    \wprod\odot\wprod_0^{\odot -1}
    =\exp\left(\frac{\L-2}{2\lrr}(\g^2- \g_0^2)\right)\cdot\bo,
\end{equation*}
which is equivalent to
\begin{align}
    \wprod\cdot\exp\left(\frac{2-\L}{2\lrr}\g^2\right)
    =\wprod_0\cdot\exp\left(\frac{2-\L}{2\lrr}\g_0^2\right)
\end{align}
and hence the conclusion follows.
\end{proof}
Lemma \ref{lemma:inv_proj_constant_rate} is a generalization of Lemma 2.5 in \cite{Wu2019implicit}, which considered the case $\L=1$ corresponding to linear regression. When $\L=1$, this means that the component  $\left(\I-\A^\dagger\A\right)\w$ vanishes as $\g$ increases, so that $\g_\infty\w_\infty \approx \A^\dagger\y$. For $\L\neq 1$, the geometric interpretation is less intuitive because we only have the characterization of $\w^{\odot 2-\L}$ instead of $\w$.

Fortunately, instead of directly analyzing the invariant $\h_\lrr$ in Lemma \ref{lemma:inv_proj_constant_rate}, we can compare it with the invariants $\h_0$ in Lemma \ref{lemma:inv_proj_original} and make an insightful connection, which will be stated in the next lemma.
\begin{lemma}[Invariant comparison]\label{lemma:invariant_compare_1}
    Let $\L\in\mathbb{N}$ and $\lrr>0$. Suppose $\x$ follows the dynamics in \eqref{eq:dxdt}, and $(\g,\w)$ follow the dynamics in \eqref{eq:dgdt} and \eqref{eq:dwdt} with $(\eta_\g,\eta_\w) = (\lrr,1)$, $\g_0,\w_0>0$, and $\|\w_0\|_2 = 1$. Denote $\xwn = \frac{\g}{\|\w\|_2}\w$. Suppose $\g_\infty = \lim_{\t\to\infty}\g(\t)$ exists and not equal to zero. Then for $\L=2$,
    \begin{align}
        \lim_{\t\to\infty}\left(\I-\PA\right)\log(\x(\t)) &= \left(\I-\PA\right)\log(\x(0))\\
        \lim_{\t\to\infty}\left(\I-\PA\right)\log(\xwn(\t)) &= (\I-\PA)\log\left(\r(\g_0,\ginfty)\cdot \xwn(0)\right)
    \end{align}
    and for $\L\neq 2$,
    \begin{align}
        \lim_{\t\to\infty}\left(\I-\PA\right)\x^{\odot2-\L}(\t) &=\left(\I-\PA\right)\x^{\odot 2-\L}(0)\\
        \lim_{\t\to\infty}\left(\I-\PA\right)\xwn^{\odot 2-\L}(\t) &=\left(\I-\PA\right)[\,\r(\g_0,\ginfty)\cdot\xwn^{\odot\L}(0)]^{\odot 2-\L}
    \end{align}
    where the re-scaling factor is given by
    \begin{align}\label{eq:re-scaling factor}
        \r(\g_0,\g) &:= \frac{\g}{\g_0}\exp\left(\frac{\g_0^2-\g^2}{2\lrr}\right).
    \end{align}
\end{lemma}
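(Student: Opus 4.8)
The plan is to combine the two invariant identities (Lemma \ref{lemma:inv_proj_original} for the unnormalized flow and Lemma \ref{lemma:inv_proj_constant_rate} for the weight-normalized flow) with the constant-norm property ($\|\w(\t)\|_2 = 1$, Lemma \ref{lemma:constant_norm}) to re-express everything in terms of $\xwn = \g\w$, then take $\t\to\infty$ using the assumed limit $\ginfty$.

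For the unnormalized statements this is immediate: Lemma \ref{lemma:inv_proj_original} says $(\I-\PA)\log\x(\t)$ (for $\L=2$) and $(\I-\PA)\x(\t)^{\odot2-\L}$ (for $\L\neq2$) are constant in $\t$, so the limit equals the value at $\t=0$. The content is in the weight-normalized statements. First I would handle $\L=2$. Since $\|\w(\t)\|_2=1$, we have $\xwn = \g\w$, so $\log\xwn = \log\w + \log(\g)\1$. Lemma \ref{lemma:inv_proj_constant_rate} gives that $(\I-\PA)\log(\w\exp(\g^2/(2\lrr)))$ is constant; rewrite this as $(\I-\PA)(\log\w + \tfrac{\g^2}{2\lrr}\1)$ being constant. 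Add and subtract $\log(\g)\1$: $(\I-\PA)(\log\w + \log(\g)\1 + (\tfrac{\g^2}{2\lrr}-\log\g)\1) = (\I-\PA)(\log\xwn + (\tfrac{\g^2}{2\lrr}-\log\g)\1)$ is constant in $\t$. Evaluating at $\t=0$ and at $\t\to\infty$ and rearranging isolates $(\I-\PA)\log\xwn(\t) \to (\I-\PA)(\log\xwn(0) + (\log\ginfty - \log\g_0 + \tfrac{\g_0^2-\ginfty^2}{2\lrr})\1)$, and the scalar shift inside $(\I-\PA)$ is exactly $\log\r(\g_0,\ginfty)\cdot\1$ by definition \eqref{eq:re-scaling factor}, so this collapses to $(\I-\PA)\log(\r(\g_0,\ginfty)\xwn(0))$ as claimed. (One should note $(\I-\PA)(c\1)$ need not vanish, so the factor genuinely survives — this is the whole point.)

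For $\L\neq2$ the same bookkeeping works multiplicatively instead of additively. We have $\xwn^{\odot2-\L} = \g^{2-\L}\w^{\odot2-\L}$, and Lemma \ref{lemma:inv_proj_constant_rate} says $(\I-\PA)(\w^{\odot2-\L}\exp(\tfrac{2-\L}{2\lrr}\g^2))$ is constant. Multiply and divide by $\g^{2-\L}$: the quantity $(\I-\PA)(\xwn^{\odot2-\L}\cdot \g^{-(2-\L)}\exp(\tfrac{2-\L}{2\lrr}\g^2))$ is constant in $\t$. Writing the scalar prefactor as $\big(\tfrac{1}{\g}\exp(\tfrac{\g^2}{2\lrr})\big)^{2-\L}$ and comparing $\t=0$ with $\t\to\infty$ gives $(\I-\PA)\xwn^{\odot2-\L}(\t)\to(\I-\PA)\big(\big[\tfrac{\ginfty}{\g_0}\exp(\tfrac{\g_0^2-\ginfty^2}{2\lrr})\big]\xwn(0)\big)^{\odot2-\L}$, and the bracket is precisely $\r(\g_0,\ginfty)$; since $\xwn(0) = \g_0\w_0$ with $\|\w_0\|_2=1$ and $\xwn^{\odot\L}(0) = \xwn(0)^{\odot\L}$, the stated form $(\I-\PA)[\r(\g_0,\ginfty)\cdot\xwn^{\odot\L}(0)]^{\odot2-\L}$ follows after noting $(\xwn^{\odot\L}(0))^{\odot2-\L} = \xwn(0)^{\odot(2-\L)\L}$ — here I would double-check the exponent arithmetic in the paper's intended normalization, since as literally written the right-hand side would need $\r^{2-\L}\xwn(0)^{\odot(2-\L)\L}$ rather than $\r^{2-\L}\xwn(0)^{\odot(2-\L)}$; this is the one place where a careful reading of the authors' convention for $\xwn^{\odot\L}(0)$ versus $(\g_0\w_0)$ matters. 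The main obstacle is thus not any deep argument but getting these exponents and the placement of the scalar factor inside $(\I-\PA)$ exactly right, together with justifying that the finite nonzero limit $\ginfty$ lets us pass to the limit termwise (which is immediate since the invariant is literally constant, so no continuity argument beyond evaluating a constant function at two times is needed).
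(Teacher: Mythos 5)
Your proof follows essentially the same route as the paper: substitute the constant-norm identity $\|\w(\t)\|_2=1$ (Lemma~\ref{lemma:constant_norm}) into the invariants of Lemmas~\ref{lemma:inv_proj_original} and~\ref{lemma:inv_proj_constant_rate}, rewrite the weight-normalized invariant in terms of $\xwn=\g\w$, factor out the scalar $\r(\g_0,\g)$, and pass to the limit using $\g\to\ginfty>0$. Your observation about the exponent is also correct: the paper's own proof of this lemma derives $(\I-\PA)[\,\r(\g_0,\g)\cdot\xwn(0)]^{\odot 2-\L}$ with $\xwn(0)$ rather than $\xwn^{\odot\L}(0)$, so the $\xwn^{\odot\L}(0)$ appearing in the lemma's display is a typo, and the $\xwn(0)$ version is the one consistent with the derivation and with how the lemma is invoked in the proof of Theorem~\ref{theorem:optimality_constant_rate} (where the unnormalized flow is effectively initialized at $\r(\g_0,\ginfty)\xwn(0)$).
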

We only focus on $\lim_{\t\to\infty}\left(\I-\PA\right)\log(\x(\t))$ and not $\left(\I-\PA\right)\log(\lim_{\t\to\infty}\xprod(\t))$, because the latter quantity might not be well-defined if $\lim_{\t\to\infty}\xprod(\t)$ has zero entries.
\begin{proof}[Proof of Lemma \ref{lemma:invariant_compare_1}]
    For $\x$, the result directly follows from Lemma \ref{lemma:inv_proj_original}. For $\xwn$, we need to do a bit more calculation. By Lemma \ref{lemma:constant_norm}, $\|\w(\t)\|_2=1$ for all $\t\geq 0$. Thus $\xwn=\g\w$.
    By assumption the limit $\ginfty$ exists and is strictly positive. By Lemma \ref{lemma:inv_proj_constant_rate}, for $\L=2$, 
    \begin{align*}
        \left(\I-\PA\right)\log(\xwn)
        &= (\I-\PA)\log(\g\w)\\
        &= (\I-\PA)\log\left(\w\exp\left(\frac{\g^2}{2\lrr}\right)\exp\left(-\frac{\g^2}{2\lrr}\right)\g\right)\\
        &= \underbrace{(\I-\PA)\log\left(\w \exp\left(\frac{\g^2}{2\lrr}\right)\right)}_{\text{invariant}} +(\I-\PA)\1\log\left(\exp\left(-\frac{\g^2}{2\lrr}\right)\g\right)\\
        &= (\I-\PA)\log\left(\w_0 \exp\left(\frac{\g_0^2}{2\lrr}\right)\right) +(\I-\PA)\1\log\left(\exp\left(-\frac{\g^2}{2\lrr}\right)\g\right)\\
        &= (\I-\PA)\log\bigg(\underbrace{\g_0\w_0}_{\xwn(0)}\underbrace{\frac{\g}{\g_0}\exp\left(\frac{\g_0^2-\g^2}{2\lrr}\right)}_{\r(\g_0,\g)} \bigg)\\
        &= (\I-\PA)\log\left(\r(\g_0,\g)\cdot \xwn(0)\right)
    \end{align*}
    and for $\L\neq 2$ we obtain that
    \begin{align*}
        \left(\I-\PA\right)\xwn^{\odot 2-\L}
        &= \left(\I-\PA\right)\w^{\odot 2-\L}\g^{ 2-\L}\\
        &= \underbrace{\left(\I-\PA\right)\w^{\odot 2-\L}\exp\left(\frac{2-\L}{2\lrr}\g^2\right)}_{\text{invariant}}\exp\left(-\frac{2-\L}{2\lrr}\g^2\right) \g^{ 2-\L}\\
        &= \left(\I-\PA\right)\w_0^{\odot 2-\L}\exp\left(\frac{2-\L}{2\lrr}\g_0^2\right)\exp\left(-\frac{2-\L}{2\lrr}\g^2\right) \g^{ 2-\L}\\
        &= \left(\I-\PA\right)(\,\underbrace{\w_0\g_0}_{\xwn(0)}\,)^{\odot 2-\L} \underbrace{\left(\frac{\g}{\g_0}\right)^{2-\L}\exp\left(\frac{(2-\L)(\g_0^2-\g^2)}{2\lrr}\right)}_{\r(\g_0,\g)^{2-\L}}\\
        &= \left(\I-\PA\right)[\,\r(\g_0,\g)\cdot\xwn(0)]^{\odot 2-\L}.
    \end{align*}
    By continuity $\r(\g_0,\g)$ converges to $\r(\g_0,\ginfty)$, which is well-defined because $\g_0,\ginfty>0$. This completes the proof.
\end{proof}
To make the full use of Lemma \ref{lemma:invariant_compare_1}, we need to ensure that the invariants and the fact that they converge to zero loss uniquely characterize the relation between $\xwn$ and $\x$. Thus we will need the following two lemmas.
Note that for the first one, we need to assume that the rows of $\A$ sum to zero. We leave it to future investigations whether boundedness from below of the entries of $\xwn$ and $\x$ holds also in general, or under other conditions.
\begin{lemma}[Bounded above implies bounded below]\label{lemma:Bounded above implies bounded below}
    Consider the same setting as in Lemma \ref{lemma:invariant_compare_1}. Suppose that there exists $\v > 0$ such $\A \v =0$ and that $\xwn,\x$ are bounded above. Then each entry of $\xwn$ and $\x$ is also bounded away from zero.
\end{lemma}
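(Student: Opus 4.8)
The plan is to exploit the invariants together with the hypothesis $\A\v=0$ for some $\v>0$. The key observation is that $\v^{\tT}$ lies in the kernel of $\A$, hence $\v^{\tT}(\I-\PA) = \v^{\tT}$ (since $(\I - \PA)$ is the orthogonal projection onto $\ker\A$ and $\v \in \ker \A$). Thus pairing any of the invariant vectors of Lemma~\ref{lemma:inv_proj_original} and Lemma~\ref{lemma:inv_proj_constant_rate} against $\v$ produces a genuinely conserved \emph{scalar} quantity. For instance, for $\L=2$ and the un-normalized flow, $\inner{\v}{\log(\x(\t))}$ is constant in $\t$; for $\L\neq 2$, $\inner{\v}{\x(\t)^{\odot 2-\L}}$ is constant. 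Since every entry of $\v$ is strictly positive, this scalar invariant controls all the entries of $\x$ simultaneously.

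First I would dispose of the case $\L=2$. Here $\inner{\v}{\log\x(\t)} = \sum_\n \ve_\n \log\xe_\n(\t)$ is a fixed finite constant, and since $\x$ stays positive (Lemma~\ref{lemma:non_negative}) and is bounded above by hypothesis, say $\xe_\n \le B$ for all $\n$ and $\t$, we get for each fixed index $\n$
\begin{equation*}
    \ve_\n\log\xe_\n(\t) = \text{const} - \sum_{\m\neq\n}\ve_\m\log\xe_\m(\t) \geq \text{const} - \Big(\sum_{\m\neq\n}\ve_\m\Big)\log B,
\end{equation*}
which gives a uniform lower bound $\log\xe_\n(\t) \geq c_\n > -\infty$, i.e. $\xe_\n(\t)$ is bounded away from zero. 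The identical argument applied to the invariant $\inner{\v}{\log(\w\exp(\g^2/2\lrr))}$ from Lemma~\ref{lemma:inv_proj_constant_rate}, using that $\g$ is bounded (it equals $\|\xwn\|_2$, which is bounded above) and bounded below away from zero (this follows because $\xwn = \g\w$ with $\|\w\|_2=1$ bounded above forces... actually one uses that $\g^2 = \|\xwn\|_2^2$ and one needs a lower bound on $\g$; this is where I would be careful, see below), yields the bound for the entries of $\xwn$. For $\L\neq 2$ the invariant $\inner{\v}{\x^{\odot 2-\L}}$ is a sum of positive terms equal to a constant, so each term $\ve_\n\xe_\n^{2-\L}$ is at most that constant; if $\L>2$ this bounds $\xe_\n^{2-\L}$ above, i.e. $\xe_\n$ below, directly; if $\L<2$ one combines the upper bound on $\xe_\n$ with the constancy of the sum in the same subtractive way as above.

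The main obstacle I anticipate is the lower bound on $\g = \g(\t)$ in the normalized case (and the analogous issue that $\xwn$ bounded above should give $\g$ bounded above). Boundedness above of $\xwn$ gives $\g^2 = \|\xwn\|_2^2$ bounded above immediately, so that half is free. For the lower bound on $\g$: if $\g$ could approach $0$ along a subsequence, then since $\xwn = \g\w$ and $\|\w\|_2 = 1$, we would have $\xwn \to 0$ along that subsequence; I would argue this is incompatible with the invariant structure, because the scalar invariant $\inner{\v}{\w^{\odot 2-\L}\exp(\tfrac{2-\L}{2\lrr}\g^2)}$ (or its $\log$ form for $\L=2$) is a fixed nonzero constant while $\g\to 0$ would force $\exp(\tfrac{2-\L}{2\lrr}\g^2)\to 1$ and hence $\inner{\v}{\w^{\odot 2-\L}}$ to a fixed constant, which is fine — so actually one needs a different argument: I would instead observe that the loss $\Loss(\xwn)$ is non-increasing (Lemma~\ref{lemma:loss_non_increasing}), that $\xwn\to 0$ would force $\Loss(\xwn)\to\Loss(0) = \tfrac{1}{2\L}\|\y\|_2^2$, and that since $S_+$ is non-empty the infimum of the loss is $0 < \tfrac{1}{2\L}\|\y\|_2^2$ (assuming $\y\ne 0$; if $\y = 0$ the statement is handled separately or trivially), giving a contradiction with monotonicity once the loss has dropped below $\tfrac{1}{2\L}\|\y\|_2^2$. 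Thus $\g$ is bounded away from $0$ for all sufficiently large $\t$, hence (by continuity and positivity on the compact initial interval) for all $\t\ge 0$. With $\g$ pinned between two positive constants, the scalar-invariant argument above goes through verbatim and completes the proof.
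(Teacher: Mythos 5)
Your core mechanism is the paper's: $\v \in \ker\A$ gives $\v^\tT(\I-\PA)=\v^\tT$, so pairing the conserved vectors of Lemmas~\ref{lemma:inv_proj_original} and~\ref{lemma:inv_proj_constant_rate} against $\v$ yields scalar invariants, and the hypothesized upper bound on entries then forces a lower bound --- for $\L>2$ directly (the exponent $2-\L$ is negative, so a vanishing entry would make the conserved sum blow up), and for $\L=2$ via your subtraction inequality, which is the contrapositive of the paper's ``$\inner{\v}{\log\x(\t_k)}\to -\infty$'' contradiction. Where you diverge is the handling of $\xwn$. The paper dispatches it in one line: the invariants of $\x$ and $\xwn$ differ only by the re-scaling factor $\r(\g_0,\g(\t))$ from Lemma~\ref{lemma:invariant_compare_1}, which is bounded above and also below away from zero because $\g$ is bounded above ($\g=\|\xwn\|_2$ and $\xwn$ is bounded above by hypothesis) and bounded below ($\ginfty$ exists and is nonzero --- an assumption you did not notice is inherited from ``the same setting as Lemma~\ref{lemma:invariant_compare_1}''). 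Having missed that inherited hypothesis, you go looking for an independent argument that $\g$ stays away from $0$, and the loss-monotonicity argument you land on has a real gap: you conclude a contradiction ``once the loss has dropped below $\frac{1}{2\L}\|\y\|_2^2$,'' but nothing in the hypotheses guarantees $\Loss(\xwn_0)<\frac{1}{2\L}\|\y\|_2^2$; the initial loss may already equal or exceed that level, and then monotone decrease alone does not prevent $\g\to 0$. The detour is harmless only because $\ginfty>0$ is already assumed, but as written your proof of the $\g$ lower bound does not stand on its own. (Your aside about $\L<2$ is also moot; the lemma is only invoked for $\L\ge 2$.)
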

\begin{proof}
    See Appendix \ref{sec:Appedix}.
\end{proof}
\begin{lemma}[Uniqueness]\label{lemma:unique}
    Suppose $\xprod^{(1)},\xprod^{(2)}$ are strictly positive and uniformly bounded above and away from zero. If
    \begin{align}\label{eq:agree_PA}
        \lim_{\t\to\infty}\A[\xprod^{(1)}(\t)-\xprod^{(2)}(\t)]=0
    \end{align}
    and
    \begin{align}\label{eq:agree_I-PA}
    \begin{cases}
        \lim_{\t\to\infty}\left(\I-\PA\right)[\log(\xprod^{(1)}(\t))-\log(\xprod^{(2)}(\t))]=0
        &\text{if }\L=2,\\
        \lim_{\t\to\infty}\left(\I-\PA\right)[(\xprod^{(1)}(\t))^{\odot \frac{2}{\L}-1}-(\xprod^{(2)}(\t))^{\odot \frac{2}{\L}-1}]=0
        &\text{if }\L\neq 2,
    \end{cases}
    \end{align}
    then $\lim_{\t\to\infty}\xprod^{(1)}(\t)-\xprod^{(2)}(\t)=0$.
\end{lemma}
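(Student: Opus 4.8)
The plan is to reduce the statement to a compactness argument combined with the orthogonal splitting $\R^\N = \ker\A \oplus \mathrm{range}(\A^\tT)$ induced by $\PA$. Write $\delta(\t) := \xprod^{(1)}(\t) - \xprod^{(2)}(\t)$. Since each trajectory takes values entrywise in a fixed compact interval $[\c_-,\c_+]\subset(0,\infty)$ (this is precisely the hypothesis ``bounded above and away from zero''), the vector $\delta(\t)$ is bounded in $\R^\N$. The first hypothesis \eqref{eq:agree_PA} says $\A\delta(\t)\to 0$, hence $\PA\delta(\t)=\A^\dagger\A\delta(\t)\to 0$ by continuity of the linear map $\A^\dagger$. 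Since $\delta(\t)=\PA\delta(\t)+(\I-\PA)\delta(\t)$, it therefore suffices to prove $(\I-\PA)\delta(\t)\to 0$.

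First I would linearize the second hypothesis. Let $g:(0,\infty)\to\R$ be $g(x)=\log x$ when $\L=2$ and $g(x)=x^{2/\L-1}$ when $\L\neq 2$, and write $g$ also for its entrywise application. Because $\xprode^{(1)}_\n(\t),\xprode^{(2)}_\n(\t)\in[\c_-,\c_+]$ for all $\n$ and $\t$, the mean value theorem gives a point $\xi_\n(\t)\in[\c_-,\c_+]$ with $g(\xprode^{(1)}_\n(\t))-g(\xprode^{(2)}_\n(\t))=g'(\xi_\n(\t))\,\delta_\n(\t)$. Collecting these into the diagonal matrix $D(\t):=\diag(g'(\xi_1(\t)),\dots,g'(\xi_\N(\t)))$, in both branches the expression inside \eqref{eq:agree_I-PA} is $g(\xprod^{(1)}(\t))-g(\xprod^{(2)}(\t))=D(\t)\delta(\t)$, so \eqref{eq:agree_I-PA} reads $(\I-\PA)D(\t)\delta(\t)\to 0$. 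The key observation is that $g'$ is continuous, nowhere zero on $(0,\infty)$ (note $2/\L-1\neq 0$ exactly because $\L\neq 2$ in that branch), and of constant sign; hence on the compact interval $[\c_-,\c_+]$ its values lie in a fixed compact subset of $\R\setminus\{0\}$ consisting of numbers of one common sign, so $D(\t)$ is bounded and uniformly bounded away from singularity.

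Now I would argue by contradiction: if $(\I-\PA)\delta(\t)\not\to 0$, choose $\t_k\to\infty$ with $\|(\I-\PA)\delta(\t_k)\|_2\geq\epsilon$ for some $\epsilon>0$. Using boundedness of $\delta$ and of the diagonal entries of $D$, pass to a subsequence along which $\delta(\t_k)\to\delta^\ast$ and $D(\t_k)\to D^\ast$; then $\delta^\ast\neq 0$ (since $\|(\I-\PA)\delta^\ast\|_2\geq\epsilon$) and $D^\ast$ is diagonal with all entries nonzero and of one common sign. Passing to the limit in $\PA\delta(\t_k)\to 0$ gives $\PA\delta^\ast=0$, i.e.\ $\delta^\ast\in\ker\A$; passing to the limit in $(\I-\PA)D(\t_k)\delta(\t_k)\to 0$ gives $(\I-\PA)D^\ast\delta^\ast=0$, i.e.\ $D^\ast\delta^\ast\in\mathrm{range}(\PA)=(\ker\A)^\perp$. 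Hence $\langle\delta^\ast,D^\ast\delta^\ast\rangle=0$. But $\langle\delta^\ast,D^\ast\delta^\ast\rangle=\sum_\n D^\ast_{\n\n}(\delta^\ast_\n)^2$ is a nonzero multiple (by the common sign of the $D^\ast_{\n\n}$) of a nonnegative quantity, hence vanishes only when $\delta^\ast=0$ --- a contradiction. Therefore $(\I-\PA)\delta(\t)\to 0$, and with $\PA\delta(\t)\to 0$ we conclude $\delta(\t)\to 0$.

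The only genuinely delicate point is the uniform control $|g'(\xi_\n(\t))|$ bounded \emph{away from zero}: this is exactly what the hypothesis that $\xprod^{(1)},\xprod^{(2)}$ are bounded away from zero buys (and is why this assumption, supplied in the intended application by Lemma \ref{lemma:Bounded above implies bounded below}, is essential --- plain positivity would not suffice). Everything else is routine: Bolzano--Weierstrass in the finite-dimensional space $\R^\N$, continuity of matrix multiplication, and the orthogonal decomposition associated to the projector $\PA$.
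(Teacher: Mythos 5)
Your proof is correct, but it takes a genuinely different route from the paper. The paper exploits the monotonicity of $g$ directly: it writes the sign-definite quantity $\boldsymbol\xi := \delta \odot [g(\xprod^{(1)})-g(\xprod^{(2)})] \geq 0$, shows $\langle\boldsymbol\xi,\bo\rangle\to 0$ via the same $\PA/(\I-\PA)$ splitting you use, then derives an \emph{explicit entrywise lower bound} on $\boldsymbol\xi$ in terms of $|\delta|$ (a logarithmic estimate for $\L=2$ and an elementary $\frac{1}{b}-\frac{1}{a}\geq\frac{1}{2b^2}\min(b,\delta)$ estimate for $\L>2$), from which $\delta\to 0$ follows. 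You instead linearize via the mean value theorem, $g(\xprod^{(1)})-g(\xprod^{(2)})=D(\t)\delta(\t)$ with $D(\t)$ a diagonal matrix uniformly bounded and uniformly non-singular of constant sign, and then run a soft Bolzano--Weierstrass contradiction: along a bad subsequence, $\delta^\ast\in\ker\A$, $D^\ast\delta^\ast\in(\ker\A)^\perp$, hence $\langle\delta^\ast,D^\ast\delta^\ast\rangle=0$, forcing $\delta^\ast=0$. Both arguments rest on the same structural fact (strict monotonicity of $g$ on a compact interval away from zero makes $\langle\delta,g(\xprod^{(1)})-g(\xprod^{(2)})\rangle$ definite), but your version is a unified, case-free compactness argument, whereas the paper's is constructive and yields explicit bounds at the cost of a branching, somewhat ad hoc inequality for each regime of $\L$. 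Two small virtues of your route worth noting: you treat all branches (including $\L=1$, where $g'=1$) in one stroke, and you correctly identify that the hypothesis ``bounded away from zero'' is exactly what keeps $D(\t)$ uniformly invertible; that is precisely the point the paper also relies on (via Lemma~\ref{lemma:Bounded above implies bounded below}) though it is less visibly isolated in the paper's estimates. One caveat: your argument is purely qualitative, so if one ever wanted a rate for $\|\delta(\t)\|$ in terms of the rate at which the two hypotheses tend to zero, the paper's explicit bounds would be the right starting point.
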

\begin{proof}
    See Appendix \ref{sec:Appedix}.
\end{proof}
\begin{proof}[Proof of Theorem \ref{theorem:optimality_constant_rate}]
    The convergence rate of loss directly follows from Lemma \ref{lemma:Convergence rate} and Lemma \ref{lemma:Bounded above implies bounded below}. For notation simplicity, denote $\xprod = \x^\L$ and $\xwnprod = \left(\g\w/\|\w\|_2\right)^{\odot\L}$. Since we assume that $\xwnprod$ converges to a minimizer of the loss function, we obtain
    \begin{equation*}
        0 = \lim_{\t\to\infty}\PA[\xprod(\t) - \xwnprod(\t)].
    \end{equation*}
    By Lemma \ref{lemma:invariant_compare_1}, we also have 
    \begin{align*}
        0 &= \lim_{\t\to\infty}\left(\I-\PA\right)
        \begin{cases}
        [\log(\xprod(\t))-\log(\xwnprod(\t))]
        &\text{if }\L=2,\\
        [\xprod^{\odot \frac{2}{\L}-1}(\t) - \xwnprod^{\odot \frac{2}{\L}-1}(\t)]
        &\text{if }\L\neq 2.
        \end{cases} 
    \end{align*}
    Therefore we can show that $\lim_{\t\to\infty}\xwnprod(\t) = \lim_{\t\to\infty}\xprod(\t)$. We use Theorem 2.1 from \cite{chou2021more}, which characterize the limit of $\xprod$, to draw the conclusion on $\xwnprod$. Note that $\xprod$ is uniformly bounded below according to Lemma \ref{lemma:Bounded above implies bounded below}.
    
    Essentially, we effectively re-scale the initialization by $\r(\g_0,\ginfty)$. To obtain the rest of the theorem, we will analyze the function $\r$. The goal is to minimize $\r(\g_0,\ginfty)$ so that the ``effective'' initialization is small, ideally much less than $1$, so that we get a weaker bound than the one in Theorem \ref{theorem:L1_equivalence_positive} by a factor of $\r(\g_0,\ginfty)$. Since $\g_0$ and $\ginfty$ are dependent but we do not know the exact relation, we will use some properties of the $\r$ and $\ginfty$ to derive bound of the improved factor.

    We now examine the relation between $\g_0$, $\ginfty$, and $\|\A^\dagger\b\|_{2}^{1/\L}$. Because $\A^\dagger\b\in\argmin_{\A\z=\b}\|\z\|_2$ and  $\A\xprodinfty=\b$, we have
    \begin{equation}\label{eq:g0_ginfty_Adaggerb}
        \ginfty
        = \|\xprodinfty^{\odot 1/\L}\|_2 = \|\xprodinfty\|_{2/\L}^{1/\L}
        \geq \|\xprodinfty\|_{2}^{1/\L}
        \geq \|\A^\dagger\b\|_{2}^{1/\L}\geq\g_0.
    \end{equation}
    Recall that
    \begin{equation*}
        \r(\g_0,\g) = \frac{\g}{\g_0}\exp\left(\frac{\g_0^2-\g^2}{2\lrr}\right).
    \end{equation*}
    The partial derivative of $\r$ is given by 
    \begin{align*}
        \partial_{\g}\r(\g_0,\g)
        &= \frac{1}{\g_0}\left(1 - \frac{\g^2}{\lrr}\right)\cdot\exp\left(\frac{\g_0^2-\g^2}{2\lrr}\right).
    \end{align*}
    Note that $\partial_{\g}\r(\g_0,\g)\leq 0$ for $\g\geq\sqrt{\lrr}$. By \eqref{eq:g0_ginfty_Adaggerb}, we have $\ginfty\geq\|\A^\dagger\b\|_{2}^{1/\L}\geq\g_0\geq\sqrt{\lrr}$ and hence
    \begin{equation*}
        \r(\g_0,\ginfty)
        \leq \r(\g_0,\|\A^\dagger\b\|_{2}^{1/\L})
        \leq \r(\g_0,\g_0) = 1.
    \end{equation*}
    Thus $\r(\g_0,\|\A^\dagger\b\|_{2}^{1/\L})$ is an upper bound of $\r(\g_0,\ginfty)$. Because $\rho = \r^{-1}$ represents the improvement under weight normalization (larger $\rho$ is better), $\r(\g_0,\|\A^\dagger\b\|_{2}^{1/\L})^{-1}$ is a lower bound, or a minimal guarantee, for the improvement we will get. This completes the proof.
\end{proof}
%
\subsection{Convergence for L=1,2}
\label{subsection:convergence for L=2}
%
In this section we will prove the boundedness for $\L=1,2$ state Lojasiewicz’s Theorem \cite{convergence2005}, and based on this we will prove the convergence result stated in Theorem \ref{theorem:convergence}.
\begin{lemma}[Boundedness]\label{lemma:boundedness}
    Let $\L=1,2$ and $\lrr>0$. Suppose $(\g,\w)$ follow the dynamics in \eqref{eq:dgdt} and \eqref{eq:dwdt} with $(\eta_\g,\eta_\w) = (\lrr,1)$, $\g_0,\w_0>0$. If all entries of $\u$ are bounded away from zero, then $\g$ is uniformly upper bounded.
\end{lemma}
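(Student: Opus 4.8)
The plan is to argue by contradiction: suppose $\g$ is not uniformly bounded above, so there is a sequence $\t_k\to\infty$ with $\g(\t_k)\to\infty$, and let us derive a contradiction. Two a priori facts come for free. First, with the standing normalization $\|\w_0\|_2=1$, Lemma~\ref{lemma:constant_norm} gives $\|\w(\t)\|_2=1$, hence $\x(\t)=\g(\t)\w(\t)$ and every coordinate obeys $\delta\le\we_n(\t)\le1$ (the lower bound is the hypothesis, the upper bound is $\|\w(\t)\|_2=1$); in particular $\w(\t)>0$, and $\g(\t_k)>0$ once $k$ is large. Second, since $\LossM(\g(\t),\w(\t))=\Loss(\x(\t))$ is non-increasing (Lemma~\ref{lemma:loss_non_increasing}), $\|\A\x(\t)^{\odot\L}-\y\|_2^2=2\L\,\Loss(\x(\t))\le 2\L\,\Loss(\x_0)$, so
\begin{equation*}
    \|\A\x(\t)^{\odot\L}\|_2 \;\le\; C_0 \;:=\; \|\y\|_2+\sqrt{2\L\,\Loss(\x_0)}\qquad\text{for all }\t\ge0 .
\end{equation*}
I would then split into two cases according to whether $(\I-\PA)\bo=0$, i.e.\ whether $\bo$ lies in the row space of $\A$ --- this is exactly the configuration in which the kernel of $\A$ contains no strictly positive vector.

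\emph{Degenerate case $(\I-\PA)\bo=0$.} Write $\bo=\A^\tT\z$. For $k$ large, $\langle\A\x(\t_k)^{\odot\L},\z\rangle=\langle\x(\t_k)^{\odot\L},\bo\rangle$, which equals $\|\x(\t_k)\|_2^2=\g(\t_k)^2$ when $\L=2$ and equals $\g(\t_k)\|\w(\t_k)\|_1\ge\g(\t_k)$ when $\L=1$ (using $0<\we_n(\t_k)\le1$, so $\|\w(\t_k)\|_1\ge\|\w(\t_k)\|_2^2=1$). In either case the left-hand side tends to $+\infty$, contradicting $\langle\A\x(\t_k)^{\odot\L},\z\rangle\le C_0\|\z\|_2$.

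\emph{Main case $(\I-\PA)\bo\ne0$.} For $\L=2$, the invariant of Lemma~\ref{lemma:inv_proj_constant_rate} reads
\begin{equation*}
    (\I-\PA)\log\w(\t)\;=\;(\I-\PA)\log\w_0+\frac{\g_0^2-\g(\t)^2}{2\lrr}\,(\I-\PA)\bo ,
\end{equation*}
so $\|(\I-\PA)\log\w(\t_k)\|_2\ge\frac{\g(\t_k)^2-\g_0^2}{2\lrr}\|(\I-\PA)\bo\|_2-\|(\I-\PA)\log\w_0\|_2\to\infty$; but the coordinates of $\w(\t)$ stay in the fixed interval $[\delta,1]$, so $\|(\I-\PA)\log\w(\t_k)\|_2\le\|\log\w(\t_k)\|_2$ is bounded uniformly in $k$, a contradiction. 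For $\L=1$, the same lemma gives $(\I-\PA)\w(\t)=\exp\big(\tfrac{\g_0^2-\g(\t)^2}{2\lrr}\big)(\I-\PA)\w_0$, whose norm tends to $0$ along $\t_k$, and with $\|\w(\t_k)\|_2=1$ this forces $\|\PA\w(\t_k)\|_2^2=1-\|(\I-\PA)\w(\t_k)\|_2^2\to1$. Letting $\sigma>0$ be the smallest nonzero singular value of $\A$ (if $\A=0$ the flow is stationary and there is nothing to prove), and using $\A\PA=\A$, $\A(\I-\PA)=0$, and the injectivity of $\A$ on its row space, we obtain $\|\A\x(\t_k)\|_2=\g(\t_k)\|\A\PA\w(\t_k)\|_2\ge\sigma\,\g(\t_k)\,\|\PA\w(\t_k)\|_2\to\infty$, again contradicting $\|\A\x(\t_k)\|_2\le C_0$. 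This closes both cases.

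The main obstacle is the case $\L=1$: unlike the $\L=2$ invariant, the $\L=1$ invariant of Lemma~\ref{lemma:inv_proj_constant_rate} constrains $\w$ alone and carries no explicit $\g(\t)^2$ term, so boundedness of $\g$ cannot be read off from it directly, and one must instead route through the a priori bound on $\|\A\x^{\odot\L}\|_2$ (from monotonicity of the loss) together with the injectivity of $\A$ on its row space. A secondary wrinkle is the degenerate case $\bo\in\mathrm{range}(\A^\tT)$ --- exactly the configuration excluded by the positive-kernel-vector hypothesis of Theorem~\ref{theorem:optimality_constant_rate} --- which has to be treated on its own; and it is worth noting that the hypothesis that the entries of $\u$ stay bounded away from zero is genuinely used only in the $\L=2$ branch.
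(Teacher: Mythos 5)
Your argument is correct and it follows a genuinely different route from the paper's. The paper does not argue by contradiction, and it does not split on whether $\bo\in\operatorname{range}(\A^\tT)$; instead, for each $\L\in\{1,2\}$ it evaluates a single scalar inner product in two ways. Writing $\gprod:=\exp(\g^2/(2\lrr))$, it computes $\langle\g\w,\gprod\w\rangle=\g\gprod$ for $\L=1$, and lower-bounds $\langle(\g\w)^{\odot2},\log(\gprod\w)\rangle\ge\g^2\bigl(\log\gprod-\tfrac{\N}{2e}\bigr)$ for $\L=2$, and then upper-bounds the same quantity by decomposing $\I=\PA+(\I-\PA)$ and applying H\"older, using that $\|\PA(\g\w)^{\odot\L}\|_\infty$ is bounded because the loss is non-increasing, and that $(\I-\PA)[\cdots]$ is conserved by Lemma~\ref{lemma:inv_proj_constant_rate}. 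The two bounds grow at incompatible rates in $\g$, which forces $\g$ to stay bounded, with no case distinction. Your degenerate/non-degenerate split, the dual pairing via $\bo=\A^\tT\z$, the forced blow-up of $\|(\I-\PA)\log\w\|_2$ for $\L=2$, and the injectivity of $\A$ on its row space for $\L=1$ are all valid substitutes and are logically independent of the paper's H\"older computation. Your remark that the lower bound $\we_\n\ge\delta$ is used only in the $\L=2$ branch also matches the paper exactly: there it is used solely to control $B_1\|\PA\log\w\|_1$, and it is likewise unused for $\L=1$. The paper's inner-product argument buys uniformity and avoids any case split; yours makes the projection geometry, and the role of the positive-kernel-vector hypothesis of Theorem~\ref{theorem:optimality_constant_rate} in excluding the degenerate configuration $\bo\in\operatorname{range}(\A^\tT)$, fully explicit.
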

\begin{proof}
    See Appendix \ref{sec:Appedix}.
\end{proof}
\begin{theorem}[Theorem 4 in \cite{BahLearning2021}]\label{theorem:Lojasiewicz}
    If $\mathcal{L}:\mathbb{R}^\N\to\mathbb{R}$ is analytic and the curve $\t\mapsto\x(\t)\in\mathbb{R}^\N$, $\t\in[0,\infty)$ is bounded and a solution of the gradient flow equation $\partial_\t\x= -\nabla \mathcal{L}(\x)$, then $\x$ converges to a critical point of $\mathcal{L}$ as $\t\to\infty$.
\end{theorem}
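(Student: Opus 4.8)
The plan is the classical Łojasiewicz argument, which is the standard route from analyticity plus boundedness to trajectory convergence. A preliminary remark: as written the flow is gradient \emph{ascent}, $\partial_\t\x=\nabla\mathcal{L}(\x)$; since $-\mathcal{L}$ is again real analytic with the same critical set, I would replace $\mathcal{L}$ by $-\mathcal{L}$ and work throughout with the descent flow $\partial_\t\x=-\nabla\mathcal{L}(\x)$, along which $\frac{d}{d\t}\mathcal{L}(\x(\t))=-\|\nabla\mathcal{L}(\x(\t))\|_2^2\le0$. First I would exploit that the trajectory is bounded: its closure is compact, $\mathcal{L}(\x(\t))$ is non-increasing and bounded below, hence converges to some value $\mathcal{L}_\infty$, and integrating the energy identity gives $\int_0^\infty\|\nabla\mathcal{L}(\x(\t))\|_2^2\,d\t=\mathcal{L}(\x(0))-\mathcal{L}_\infty<\infty$. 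From this I would record three facts: the $\omega$-limit set $\Omega$ is nonempty, compact and connected; $\mathcal{L}\equiv\mathcal{L}_\infty$ on $\Omega$ by continuity; and $\liminf_{\t\to\infty}\|\nabla\mathcal{L}(\x(\t))\|_2=0$ (otherwise the last integral would diverge). The task is then to upgrade ``$\Omega$ nonempty'' to ``$\Omega$ a single point.''

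Next I would invoke the Łojasiewicz gradient inequality for analytic functions: at every $\x_*\in\Omega$ there are a neighborhood, an exponent $\theta\in(0,\tfrac12]$ and a constant $C>0$ with $|\mathcal{L}(\x)-\mathcal{L}(\x_*)|^{1-\theta}\le C\|\nabla\mathcal{L}(\x)\|_2$ on that neighborhood. Covering the compact set $\Omega$ by finitely many of these and using that $\mathcal{L}$ equals the single value $\mathcal{L}_\infty$ everywhere on $\Omega$, I would extract one open set $V\supset\Omega$ and uniform constants $C,\theta$ such that $(\mathcal{L}(\x)-\mathcal{L}_\infty)^{1-\theta}\le C\|\nabla\mathcal{L}(\x)\|_2$ holds for all $\x\in V$ with $\mathcal{L}(\x)>\mathcal{L}_\infty$ (if $\mathcal{L}(\x(\t))=\mathcal{L}_\infty$ at some finite time, the flow is stationary afterwards and there is nothing to prove).

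The key computation is to put $\phi(\t):=(\mathcal{L}(\x(\t))-\mathcal{L}_\infty)^{\theta}$ and note that, for as long as $\x(\t)$ stays in $V$,
\[
-\frac{d}{d\t}\phi(\t)=\theta\,(\mathcal{L}(\x(\t))-\mathcal{L}_\infty)^{\theta-1}\|\nabla\mathcal{L}(\x(\t))\|_2^2\ \ge\ \frac{\theta}{C}\,\|\nabla\mathcal{L}(\x(\t))\|_2=\frac{\theta}{C}\,\|\partial_\t\x(\t)\|_2 ,
\]
so that $\int_{\t_0}^{\t}\|\partial_\t\x\|_2\,d\s\le\frac{C}{\theta}\,(\phi(\t_0)-\phi(\t))\le\frac{C}{\theta}\,\phi(\t_0)$ on any such interval. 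Granting for the moment that the trajectory never leaves $V$ (the crux, handled next), this bounds the total arc length, so $\x(\t)$ is Cauchy as $\t\to\infty$ and converges to some $\x_\infty\in\Omega$; since $\nabla\mathcal{L}(\x(\t))\to\nabla\mathcal{L}(\x_\infty)$ by continuity while $\liminf_\t\|\nabla\mathcal{L}(\x(\t))\|_2=0$, we conclude $\nabla\mathcal{L}(\x_\infty)=0$, i.e.\ $\x_\infty$ is a critical point.

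The main obstacle — the only genuinely delicate point — is that the Łojasiewicz inequality is purely local, so I must prevent the trajectory from escaping $V$ before the length estimate can take hold. The plan is the usual bootstrap. Let $d:=\mathrm{dist}(\Omega,\partial V)>0$. Using the definition of $\Omega$ together with $\mathcal{L}(\x(\t))\downarrow\mathcal{L}_\infty$, choose $\t_0$ so large that $\mathrm{dist}(\x(\t_0),\Omega)<d/2$ and $\frac{C}{\theta}\phi(\t_0)<d/2$. Then for any $\t$ such that $\x(\s)\in V$ for all $\s\in[\t_0,\t]$, the length estimate gives $\mathrm{dist}(\x(\t),\Omega)\le\mathrm{dist}(\x(\t_0),\Omega)+\int_{\t_0}^{\t}\|\partial_\t\x\|_2\,d\s<d$, so $\x(\t)\notin\partial V$; by continuity the set of times up to which $\x$ has remained in $V$ is relatively open and closed in $[\t_0,\infty)$, hence equal to $[\t_0,\infty)$. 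Therefore $\int_{\t_0}^{\infty}\|\partial_\t\x\|_2\,d\s\le\frac{C}{\theta}\phi(\t_0)<\infty$, which is precisely the finite-arc-length bound used above, completing the argument.
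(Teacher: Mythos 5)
Your proof is correct. Note, though, that the paper does not prove this statement at all: it is imported verbatim as Theorem 4 of the cited reference, so there is no internal argument to compare against. What you have written is the classical Łojasiewicz convergence proof and it is sound in all its essential steps — the sign fix (the flow as stated is an ascent flow, so one applies the argument to $-\mathcal{L}$), the compactness and connectedness of the $\omega$-limit set with $\mathcal{L}$ constant on it, the uniformization of the local gradient inequality over a finite subcover, the differential inequality for $\phi(\t)=(\mathcal{L}(\x(\t))-\mathcal{L}_\infty)^{\theta}$ yielding finite arc length, and the open--closed trapping argument that keeps the trajectory inside the neighborhood where the inequality is valid. The only point you gloss over is the uniformization when the finitely many local Łojasiewicz exponents differ: one must take $\theta=\min_i\theta_i$ and shrink $V$ so that $|\mathcal{L}-\mathcal{L}_\infty|<1$ there, so that $|\mathcal{L}-\mathcal{L}_\infty|^{1-\theta}\leq|\mathcal{L}-\mathcal{L}_\infty|^{1-\theta_i}$; this is standard and does not affect the validity of the argument.
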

\begin{proof}[Proof of Theorem \ref{theorem:convergence}]
    The assumptions of Theorem \ref{theorem:Lojasiewicz} are satisfied with loss function $\LossM$. By Lemma \ref{lemma:boundedness}, $\|\x(\t)\|$ is bounded, and hence by Theorem \ref{theorem:Lojasiewicz} must converges to a critical point of $\LossM$.
\end{proof}
%
\subsection{An example of time-dependent learning rate}
\label{subsection:dynamic_rate}
%
In this section we study a particular example of time-dependent learning rate, given by $(\eta_\g,\eta_\w) = (\g^2,1)$. Note that instead of a constant in time, $\eta_\g=\g^2(\t)$ is a function depends on time. In this case the dynamics is greatly simplified and is similar to gradient flow without normalization \eqref{eq:dxdt}. Such simplification allows us to analyze the dynamics based on established methods, such as the argument with Bregman divergence in \cite{chou2021more}, and completely bypass the need of invariants and uniqueness results proved in Section \ref{subsection:constant_rate}.

However, gradient flow under this particular choice of learning rate ($(\eta_\g,\eta_\w) = (\g^2,1)$) no longer exhibits the magnification effect as in the constant rate case ($(\eta_\g,\eta_\w) = (\lrr,1)$), and hence does not yield better bounds than the ones in previous works \eqref{eq:alpha_bound_original}. It is nevertheless remarkable that the dynamics with certain choices of learning rate can be so different from the one with time-independent learning rate.

We will first prove a general reduction in dynamics. In fact, from Lemma \ref{lemma:dynamic_reduction} we can see why $(\eta_\g,\eta_\w) = (\g^2,1)$ is a natural choice of time-dependent learning rate.
\begin{lemma}[Dynamics reduction]\label{lemma:dynamic_reduction}
Suppose $\g$ and $\w$ follow the gradient flow in \eqref{eq:dgdt} and \eqref{eq:dwdt} with $\|\w(0)\|=1$. Let $\x:=\frac{\g}{\|\w\|_2}\w$. Then
\begin{equation}\label{eq:dynamic_reduction}
    \partial_{\t}\x
    = -\left(\eta_\g\frac{\x\x^\tT}{\|\x\|_2^2} + \eta_\w(\|\x\|_2^2\I-\x\x^\tT)\right)\nabla\Loss\left(\x\right).
\end{equation}
\end{lemma}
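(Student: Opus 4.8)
The plan is to compute $\partial_\t \x$ directly by the chain rule, substituting the gradient flow equations \eqref{eq:dgdt} and \eqref{eq:dwdt} for $\partial_\t \g$ and $\partial_\t \w$, then bookkeep the resulting terms against the formulas \eqref{eq:dLdg} and \eqref{eq:dLdw} for $\nabla_\g \LossM$ and $\nabla_\w \LossM$. Since $\|\w(0)\|_2 = 1$, Lemma \ref{lemma:constant_norm} gives $\|\w(\t)\|_2 = 1$ for all $\t$, so throughout the computation we may replace $\|\w\|_2$ by $1$ and write simply $\x = \g\w$, $\Pw = \w\w^\tT$, and $\|\x\|_2 = \g$. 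This is the simplification that makes the bookkeeping tractable.

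First I would write $\partial_\t \x = (\partial_\t \g)\w + \g(\partial_\t \w)$. For the first term, substitute $\partial_\t \g = -\eta_\g \nabla_\g \LossM(\g,\w) = -\eta_\g\, \w^\tT \nabla\Loss(\x)$ (using \eqref{eq:dLdg} with $\|\w\|_2 = 1$), so $(\partial_\t \g)\w = -\eta_\g\, \w\w^\tT \nabla\Loss(\x) = -\eta_\g \Pw \nabla\Loss(\x)$. For the second term, substitute $\partial_\t \w = -\eta_\w \nabla_\w \LossM(\g,\w) = -\eta_\w\, \g(\I - \Pw)\nabla\Loss(\x)$ (using \eqref{eq:dLdw}), so $\g(\partial_\t\w) = -\eta_\w \g^2 (\I - \Pw)\nabla\Loss(\x)$. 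Adding the two pieces yields
\begin{equation*}
    \partial_\t \x = -\bigl(\eta_\g \Pw + \eta_\w \g^2 (\I - \Pw)\bigr)\nabla\Loss(\x).
\end{equation*}
Finally I would re-express everything in terms of $\x$ rather than $(\g,\w)$: since $\g = \|\x\|_2$ and $\Pw = \w\w^\tT = \g^{-2}(\g\w)(\g\w)^\tT = \x\x^\tT/\|\x\|_2^2$, we get $\eta_\g \Pw = \eta_\g\, \x\x^\tT/\|\x\|_2^2$ and $\eta_\w \g^2(\I - \Pw) = \eta_\w(\|\x\|_2^2 \I - \x\x^\tT)$, which is exactly \eqref{eq:dynamic_reduction}.

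There is essentially no hard part here — the result is a one-line chain-rule computation once one has the gradient formulas \eqref{eq:dLdg}, \eqref{eq:dLdw} and the constant-norm fact. The only place to be careful is keeping the $\|\w\|_2$ factors straight before invoking $\|\w\|_2 = 1$, and not dropping the distinction between the projector $\Pw$ (onto the $\w$ direction) and the ambient identity $\I$; expanding $(\I - \Pw)$ and recombining, as is already done in the derivation of \eqref{eq:dw_dg_1}, is the cleanest route. After establishing \eqref{eq:dynamic_reduction}, one immediately reads off that the choice $(\eta_\g,\eta_\w) = (\g^2,1) = (\|\x\|_2^2,1)$ makes the bracketed operator equal to $\|\x\|_2^2\bigl(\Pw + (\I-\Pw)\bigr) = \|\x\|_2^2\,\I$, so $\partial_\t\x = -\|\x\|_2^2\,\nabla\Loss(\x)$, a scalar time-reparameterization of plain gradient flow \eqref{eq:dxdt} — which is the motivation alluded to in the text for studying that particular learning-rate schedule.
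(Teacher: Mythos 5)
Your proof is correct and follows essentially the same route as the paper's: apply the chain rule to $\partial_\t\x$, substitute the gradient formulas \eqref{eq:dLdg}--\eqref{eq:dLdw}, and re-express $\Pw$ as $\x\x^\tT/\|\x\|_2^2$ via Lemma \ref{lemma:constant_norm}. The only cosmetic difference is that you invoke $\|\w\|_2 = 1$ at the start to work with $\x = \g\w$ directly, whereas the paper carries the $\|\w\|_2$ factors through the quotient-rule expansion and normalizes only at the end; both are equivalent.
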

\begin{proof}
    See Appendix \ref{sec:Appedix}.
\end{proof}
Observe that if $(\eta_\g,\eta_\w) = (\g^2,1)$, then \eqref{eq:dynamic_reduction} is greatly simplified as stated next.
\begin{lemma}[Dynamic reduction, time-dependent learning rate]\label{lemma:dynamic_reduction_dynamic_rate}
Let $\L\in\mathbb{N}$, $(\eta_\g,\eta_\w) = (\g^2,1)$, and $\|\w(0)\|=1$. Denote $\x:=\frac{\g}{\|\w\|_2}\w$. Then
\begin{equation}\label{eq:dynamic_reduction_dynamic_rate}
    \partial_{\t}\x
    = -\|\x\|_2^2\cdot\nabla\Loss\left(\x\right).
\end{equation}
\end{lemma}
\begin{proof}
Apply Lemma \ref{lemma:dynamic_reduction} with $(\eta_\g,\eta_\w) = (\g^2,1)$ in \eqref{eq:dynamic_reduction}.
\end{proof}
Note that without the additional factor $\|\x\|_2^2$, we are back to the setting that does not include weight normalization at all \eqref{eq:overloss_2}. Although the extra factor changes in time, since it is only a scalar as opposed to a vector or a matrix, it is possible to apply the same proof strategy as for gradient flow without normalization as in \cite{chou2021more}.

Before diving into the proof we would like to outline the general concepts. We consider the set of all non-negative solutions $S_{+}:=\{\z\geq0:\A\z = \y\}$ and examine how $\xprod:=\x^{\odot\L}$ approaches this set. The key insight is to measure the distance with an appropriate Bregman divergence, so that $\xprod$ approaches every element in $S_{+}$ {\bf at the same rate}. Hence, by proving that $\xprod$ eventually reaches $S_{+}$, we conclude that it must reach the element that is closest to the initialization measured in Bregman divergence defined next. 
\begin{definition}[Bregman Divergence]\label{def:Bregman_Divergence}
Let $\F:\Omega\to\mathbb{R}$ be a continuously-differentiable, strictly convex function defined on a closed convex set $\Omega$. The Bregman divergence associated with $\F$ for points $p,q\in\Omega$ is defined as
\begin{equation}\label{eq:Bregman_divergence_general}
    D_{\F}(p,q) = \F(p) - \F(q) - \langle \nabla \F(q), p-q \rangle.
\end{equation}
\end{definition}
\begin{lemma}[\cite{Bregman1967}]\label{lemma:Bregman_divergence}
  The Bregman divergence $D_\F$ is non-negative and, for any $q \in \Omega$, the function $p \mapsto D_F(p,q)$ is strictly convex.
\end{lemma}
\begin{lemma}[Non-increasing Bregman Divergence]\label{lemma:Bregman_divergence_non_increasing}
Let $\L\in\mathbb{N}$, $\L\geq 2$, $(\eta_\g,\eta_\w) = (\g^2,1)$, and $\|\w(0)\|=1$. Denote $\x=\frac{\g}{\|\w\|_2}\w$ and $\xprod = \x^{\odot\L}$. Then for any $\z\geq 0$ such that $\A\z=\y$,
\begin{equation}\label{eq:Bregman_divergence_non_increasing}
    \partial_{\t} D_{\F}(\z,\xprod) = -2\L\|\x\|_2^2\cdot\Loss(\x)
\end{equation}
where $D_{\F}$ is the Bregman divergence associated with the function $\F:\mathbb{R}_{+}^{\N}\to\mathbb{R}$ given by
\begin{equation}\label{eq:Bregman_divergence_poly}
    \F(\xprod) = \begin{cases}
    \frac{1}{2} \langle \xprod \odot \log(\xprod) - \xprod, \1 \rangle & \text{if }\L = 2\\
    \frac{\L}{2(2-\L)} \langle \xprod^{\odot \frac{2}{\L}}, \1 \rangle & \text{if }\L > 2.
    \end{cases}
\end{equation}
\end{lemma}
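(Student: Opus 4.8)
The plan is to differentiate $D_{\F}(\z, \xprod(\t))$ directly using the definition of the Bregman divergence and the reduced dynamics from Corollary~\ref{corollary:dynamic_reduction_dynamic_rate}. Since $\z$ is fixed in time, only the terms depending on $\xprod(\t)$ contribute. From \eqref{eq:Bregman_divergence_general} we have
\[
    \partial_\t D_{\F}(\z,\xprod)
    = -\partial_\t \F(\xprod) - \langle \partial_\t \nabla\F(\xprod),\, \z - \xprod\rangle + \langle \nabla\F(\xprod),\, \partial_\t\xprod\rangle.
\]
The first and third terms combine: $\partial_\t \F(\xprod) = \langle \nabla\F(\xprod), \partial_\t\xprod\rangle$ by the chain rule, so they cancel, leaving $\partial_\t D_{\F}(\z,\xprod) = -\langle \partial_\t\nabla\F(\xprod),\, \z-\xprod\rangle$. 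The crux is therefore to compute $\nabla\F(\xprod)$ for each case in \eqref{eq:Bregman_divergence_poly}, then $\partial_\t\nabla\F(\xprod)$ via the chain rule, and show the inner product with $\z-\xprod$ collapses to the claimed expression.

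First I would handle $\L=2$: here $\nabla\F(\xprod) = \tfrac12\log(\xprod)$ (the $\xprod\odot\log\xprod - \xprod$ antiderivative structure makes the constant and linear pieces drop out of the gradient), so $\partial_\t\nabla\F(\xprod) = \tfrac12\,\xprod^{\odot -1}\odot\partial_\t\xprod$. For $\L>2$, $\nabla\F(\xprod) = \tfrac{\L}{2(2-\L)}\cdot\tfrac{2}{\L}\xprod^{\odot\frac2\L - 1} = \tfrac{1}{2-\L}\xprod^{\odot\frac2\L-1}$, hence $\partial_\t\nabla\F(\xprod) = \tfrac1{2-\L}\cdot(\tfrac2\L-1)\xprod^{\odot\frac2\L-2}\odot\partial_\t\xprod = -\tfrac1\L\xprod^{\odot\frac2\L-2}\odot\partial_\t\xprod$. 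Next, using $\xprod = \x^{\odot\L}$ and the reduced dynamics $\partial_\t\x = -\|\x\|_2^2\nabla\Loss(\x)$, I would write $\partial_\t\xprod = \L\,\x^{\odot\L-1}\odot\partial_\t\x = -\L\|\x\|_2^2\,\x^{\odot\L-1}\odot\nabla\Loss(\x)$. Substituting \eqref{eq:dLdx}, $\nabla\Loss(\x) = [\A^\tT(\A\x^{\odot\L}-\y)]\odot\x^{\odot\L-1}$, we get $\partial_\t\xprod = -\L\|\x\|_2^2\,\x^{\odot 2\L-2}\odot[\A^\tT(\A\xprod-\y)]$.

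Plugging this in, in both cases the entrywise powers of $\x$ telescope to cancel all the $\xprod$-dependent prefactors coming from $\partial_\t\nabla\F$: for $\L=2$, $\tfrac12\xprod^{\odot-1}\odot\x^{\odot 2\L-2} = \tfrac12\x^{\odot-2}\odot\x^{\odot2} = \tfrac12\1$; for $\L>2$, $\xprod^{\odot\frac2\L-2}\odot\x^{\odot2\L-2} = \x^{\odot 2-2\L}\odot\x^{\odot2\L-2} = \1$. Thus in both cases $\partial_\t\nabla\F(\xprod) = -\L\|\x\|_2^2\cdot\tfrac12\,[\A^\tT(\A\xprod-\y)]$ (the factor $\tfrac12$ appears for $\L=2$; for $\L>2$ one gets $-\tfrac1\L\cdot(-\L)\|\x\|_2^2[\A^\tT(\A\xprod-\y)] = \|\x\|_2^2[\A^\tT(\A\xprod-\y)]$ — I would recheck the constant bookkeeping so that both cases yield the same final formula, which is what the uniform statement requires). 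Finally, since $\A\z = \y$, we have $\langle \A^\tT(\A\xprod-\y),\, \z-\xprod\rangle = \langle \A\xprod-\y,\, \A\z - \A\xprod\rangle = -\|\A\xprod-\y\|_2^2 = -2\L\Loss(\x)$. Therefore $\partial_\t D_{\F}(\z,\xprod) = -\langle\partial_\t\nabla\F(\xprod),\z-\xprod\rangle = -\L\|\x\|_2^2\cdot\tfrac{1}{?}\cdot(-1)\cdot(-2\L\Loss(\x))$, which after the constant check gives exactly $-2\L\|\x\|_2^2\Loss(\x)$, as claimed.

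The main obstacle is purely bookkeeping: keeping the constants from the two branches of $\F$ consistent so that the single unified right-hand side $-2\L\|\x\|_2^2\Loss(\x)$ emerges in both cases, and being careful that $\F$ is only defined (and $\nabla\F$ only makes sense) on the strictly positive orthant — which is fine here since Lemma~\ref{lemma:non_negative} guarantees $\xprod(\t)>0$ for all finite $\t$. There is no real analytic difficulty; the argument is a direct computation exploiting the scalar (rather than matrix) nature of the $\|\x\|_2^2$ factor in the reduced dynamics.
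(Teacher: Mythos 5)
Your approach is the natural one (direct differentiation of the Bregman divergence using the reduced dynamics from Corollary~\ref{corollary:dynamic_reduction_dynamic_rate}), and the paper gives no separate proof of this lemma, so there is nothing competing to compare against; your computation, once the constants are sorted, is correct. The one slip you flagged yourself is real: $\tfrac{1}{2-\L}\cdot(\tfrac{2}{\L}-1)=\tfrac{1}{2-\L}\cdot\tfrac{2-\L}{\L}=+\tfrac{1}{\L}$, not $-\tfrac{1}{\L}$. With that sign fixed, both branches collapse to the same intermediate identity $\partial_\t\nabla\F(\xprod)=-\|\x\|_2^2\,\A^\tT(\A\xprod-\y)$ (indeed for $\L=2$ one has $\tfrac12\cdot(-\L)=-1$ with $\L=2$, and for $\L>2$ one has $\tfrac1\L\cdot(-\L)=-1$), and then $\partial_\t D_\F(\z,\xprod)=-\langle\partial_\t\nabla\F(\xprod),\z-\xprod\rangle=\|\x\|_2^2\langle\A\xprod-\y,\A\z-\A\xprod\rangle=-\|\x\|_2^2\|\A\xprod-\y\|_2^2=-2\L\|\x\|_2^2\Loss(\x)$ uniformly in $\L$, exactly as claimed.
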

\begin{proof}
    This directly comes from Lemma \ref{lemma:dynamic_reduction_dynamic_rate} and Definition \ref{def:Bregman_Divergence}.
\end{proof}
\begin{lemma}[Convergence, time-dependent learning rate]\label{lemma:convergence_dynamic_rate}
Let $\L\in\mathbb{N}$, $\L\geq 2$, $(\eta_\g,\eta_\w) = (\g^2,1)$, $\|\w(0)\|=1$ and $\g(0)>0$. Denote $\x=\frac{\g}{\|\w\|_2}\w$. Suppose that $\y$ is not identically zero and the solution set $S_{+} = \{\z\geq0:\A\z=\y\}$ is non-empty. Then $\lim_{\t\to\infty} \Loss(\x(\t)) = 0$.
\end{lemma}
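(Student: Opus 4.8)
The plan is to play two monotone quantities against each other: the loss $\Loss(\x(\t))$, which is non-increasing in $\t$ by Lemma~\ref{lemma:loss_non_increasing} (since $\LossM(\g,\w)=\Loss(\x)$ and the flow \eqref{eq:dgdt}--\eqref{eq:dwdt} is run with the positive learning rates $(\eta_\g,\eta_\w)=(\g^2,1)$), and the Bregman divergence $D_{\F}(\z,\xprod(\t))$ to a fixed reference point $\z\in S_+$, which is non-negative by Lemma~\ref{lemma:Bregman_divergence} and, by the identity \eqref{eq:Bregman_divergence_non_increasing} of Lemma~\ref{lemma:Bregman_divergence_non_increasing}, non-increasing in $\t$ as well. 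Since $\Loss(\x(\t))$ is non-increasing and bounded below by $0$, the limit $\Loss_\infty:=\lim_{\t\to\infty}\Loss(\x(\t))$ exists, and it suffices to derive a contradiction from the assumption $\Loss_\infty>0$.

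First I would integrate \eqref{eq:Bregman_divergence_non_increasing}. Because $D_{\F}\ge 0$ and $D_{\F}(\z,\xprod(0))<\infty$ --- the latter holds since $\xprod(0)=\x(0)^{\odot\L}>0$ in the ambient setting of Theorem~\ref{theorem:optimality_constant_rate}, so $\F$ and $\nabla\F$ are defined at $\xprod(0)$, and $\xprod(\t)>0$ for all $\t$ by Lemma~\ref{lemma:non_negative} --- we obtain
\begin{equation*}
    2\L\int_0^\infty \|\x(\t)\|_2^2\,\Loss(\x(\t))\,d\t = D_{\F}(\z,\xprod(0)) - \lim_{\t\to\infty}D_{\F}(\z,\xprod(\t)) \le D_{\F}(\z,\xprod(0)) < \infty.
\end{equation*}
If $\Loss_\infty>0$ then $\Loss(\x(\t))\ge\Loss_\infty$ for every $\t$, so $\int_0^\infty\|\x(\t)\|_2^2\,d\t<\infty$; as the integrand is non-negative, this forces $\liminf_{\t\to\infty}\|\x(\t)\|_2=0$, and we may pick a sequence $\t_k\to\infty$ with $\x(\t_k)\to 0$, hence $\xprod(\t_k)=\x(\t_k)^{\odot\L}\to 0$.

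The second step is to show that $\xprod(\t_k)\to 0$ is incompatible with $D_{\F}(\z,\xprod(\t))$ remaining bounded. Since $\y$ is not identically zero and $\A\z=\y$, the reference vector $\z$ has at least one strictly positive coordinate. Writing $D_{\F}(\z,\xprod(\t_k))=\F(\z)-\F(\xprod(\t_k))-\langle\nabla\F(\xprod(\t_k)),\z\rangle+\langle\nabla\F(\xprod(\t_k)),\xprod(\t_k)\rangle$, I would track the three nonconstant pieces as $\xprod(\t_k)\to 0$: both forms of $\F$ in \eqref{eq:Bregman_divergence_poly} tend to $\F(0)=0$ (using $s\log s-s\to 0$ for $\L=2$ and $s^{2/\L}\to 0$ for $\L>2$), and $\langle\nabla\F(\xprod),\xprod\rangle\to 0$ for the same reason; but $\nabla\F(\xprod)$ diverges entrywise to $-\infty$ ($\tfrac{1}{2}\log\xprod$ for $\L=2$, and $\tfrac{1}{2-\L}\xprod^{\odot\frac{2}{\L}-1}$ for $\L>2$, where both the scalar factor and the exponent are negative), so $\langle\nabla\F(\xprod(\t_k)),\z\rangle\to-\infty$ precisely because $\z$ has a positive coordinate. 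Hence $D_{\F}(\z,\xprod(\t_k))\to+\infty$, contradicting $D_{\F}(\z,\xprod(\t))\le D_{\F}(\z,\xprod(0))<\infty$. Therefore $\Loss_\infty=0$, which is the claim.

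The main obstacle --- essentially the only nontrivial point --- is the boundary behaviour of the Bregman divergence: one must verify that for both choices of $\F$ in \eqref{eq:Bregman_divergence_poly} the mirror map $\nabla\F$ blows up on $\partial\mathbb{R}_+^\N$ in the direction that makes $D_{\F}(\z,\cdot)\to+\infty$, and one must recognize that the hypothesis $\y\not\equiv 0$ is exactly what ensures the reference point $\z\in S_+$ has a coordinate on which that blow-up is registered (if $\z=0$ were allowed the argument would collapse). The remaining ingredients are routine: the extraction of $\t_k$ uses only that a non-negative integrable function on $[0,\infty)$ has $\liminf$ zero at infinity, and the a priori positivity $\x(\t)>0$ from Lemma~\ref{lemma:non_negative} (valid for $\L\ge 2$ with $\g_0,\w_0>0$) keeps $\xprod(\t)$ in the interior where $D_{\F}$ is meaningful.
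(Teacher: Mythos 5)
Your proof is correct and follows essentially the same argument as the paper: integrate the Bregman-divergence identity of Lemma~\ref{lemma:Bregman_divergence_non_increasing} to show $\int_0^\infty\|\x(\t)\|_2^2\,d\t<\infty$ under the contrary hypothesis, extract a sequence with $\xprod(\t_k)\to 0$, and derive a contradiction because the term $\langle\nabla\F(\xprod(\t_k)),\z\rangle$ blows up (here the hypothesis $\y\not\equiv 0$ guarantees $\z\neq 0$). The only cosmetic difference is that you expand $D_{\F}$ via the abstract four-term decomposition while the paper writes out the explicit formula for each $\L$; the substance of the blow-up argument is identical.
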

\begin{proof}
    See Appendix \ref{sec:Appedix}.
\end{proof}
\begin{proof}[Proof of Theorem \ref{theorem:optimality_dynamic_rate}]
The convergence of the loss follows directly from Lemma \ref{lemma:convergence_dynamic_rate}. Then existence of the limit and the optimality of the limit follows the same proof strategy as in \cite{chou2021more}. 
\end{proof}
%


\section{Experiments}
\label{sec:Experiment}

In this section we test our method across different number of layers $\L$, learning rate ratio $\lrr$, and initialization scale
\begin{equation}\label{eq:alpha}
    \alpha:=\|(\g_0\w_0)^{\odot\L}\|_1=\|\xprod_0\|_1.
\end{equation}
We will focus two things: the comparison of the reconstruction error between GD and WN-GD (GD with weight normalization), and how the learning rate ratio $\lrr$ affects the reconstruction error.

We set the ambient dimension to be $\N=1000$ and $\M=150$. The matrix $\A$ is generated as
\begin{equation*}
    \A = \frac{1}{\sqrt{\M}}{\bf G},\quad \b = \A\x^*
\end{equation*}
where $\x^*$ is the ground truth and ${\bf G}\in\mathbb{R}^{\M\times\N}$ has independent and standard normal distributed entries. Fix $\s=10$. We examine the case where $\x^*\geq 0$ is $\s$-sparse and has $\ell_1$-norm equals to $\s$. All experiments are conducted with constant small step size. Each data point is an average over ten instances of random data and random initialization $\x_0$.

The reconstruction is defined as
\begin{equation*}
    \epsilon_1:=\|\xprodinfty - \x^*\|_1.
\end{equation*}
By Theorem \ref{theorem:optimality_constant_rate}, the difference
$\|\xprodinfty\|_1-\min_{\z\in S_{+}} \|\z\|_1$ should be small for small initialization. Due to the restricted isometry property of $\A$, the reconstruction error should also decrease as the initialization decreases.

In the first experiment we compare GD and WN-GD with fixed $\L=2$ and $\lrr = 0.1$. In the second experiment we record the reconstruction error for different $\lrr$ with fixed number of layer $\L=2$ and initialization $\alpha=1$. In the third experiment we examine the performance of WN-GD for $\L=2$ and $\L=3$ with fixed $\lrr = 0.1$. In the forth experiment we no longer require $\x^*$ to be non-negative, and perform GD and WN-GD according to the loss function \eqref{eq:loss pm}.

To keep the presentation concise, we will not plot the results for weight normalization with time-dependent learning rate specified in Section \ref{subsection:dynamic_rate}, because in all our experiments they perfectly coincide with the results coming from regular GD without weight normalization, which is consistent with Theorem \ref{theorem:optimality_dynamic_rate}.

%
\subsection{Compare GD and WN-GD}
\label{subsection:Gaussian ground truth}
%
\begin{figure}[h]
    \centering
    \begin{subfigure}[b]{0.47\textwidth}
        \centering
        \includegraphics[width = \textwidth]{GD_WNGD_TestError_Initialization.png}
        \subcaption{WN-GD achieves smaller reconstruction error.}
        \label{fig:WN_TestError_Initialization}
    \end{subfigure}
    \hfill
    \begin{subfigure}[b]{0.485\textwidth}
        \centering
        \includegraphics[width = \textwidth]{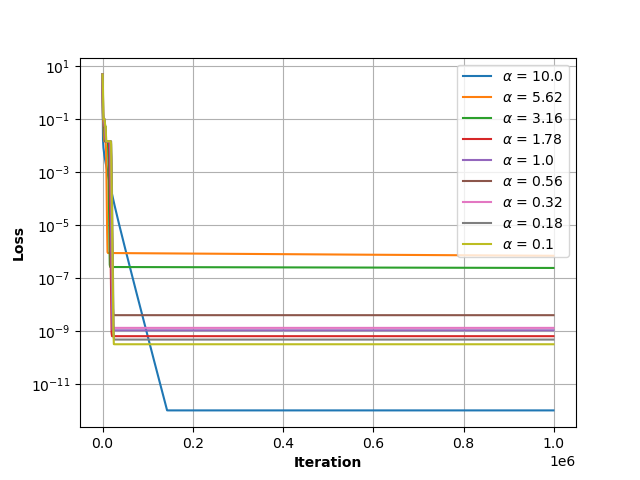}
        \subcaption{Training loss of WN-GD for different initialization.}
        \label{fig:WN_TrainError}
    \end{subfigure}
    \caption{WN-GD yields significantly much smaller error than GD. The training loss converges to values close to zero.}
    \label{fig:exp1}
\end{figure}

The goal of the first experiment (Figure \ref{fig:exp1}) is to compare GD and WN-GD among different initialization. We fix $\L=2$ and $\lrr = 0.1$. In Figure \ref{fig:WN_TestError_Initialization}, we compare the error $\epsilon_1$ produced by the two algorithms. According to Theorem \ref{theorem:optimality_constant_rate}, for any fixed initialization satisfying the conditions in Theorem \ref{theorem:optimality_constant_rate}, WN-GD should yield smaller $\epsilon_1$. In fact, the difference is quite significant.

In Figure \ref{fig:WN_TrainError} we observe that during training all loss decreases monotonically to vicinity of zero. According to experiments, convergence usually holds as long as the step size is sufficiently small and the initialization is smaller than the norm of the limit. 
%
\subsection{Compare learning rate ratio}
\label{subsection:learning rate ratio}
%
\begin{figure}[h]
    \centering
    \begin{subfigure}[b]{0.47\textwidth}
        \centering
        \includegraphics[width = \textwidth]{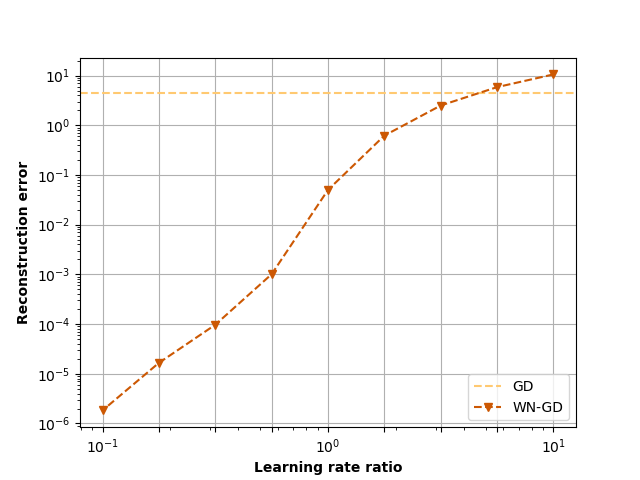}
        \subcaption{Smaller $\lrr$ leaders to smaller error (for WN-GD).}
        \label{fig:WN_TestError_LearningRateRatio_Grid_2}
    \end{subfigure}
    \hfill
    \begin{subfigure}[b]{0.47\textwidth}
        \centering
        \includegraphics[width = \textwidth]{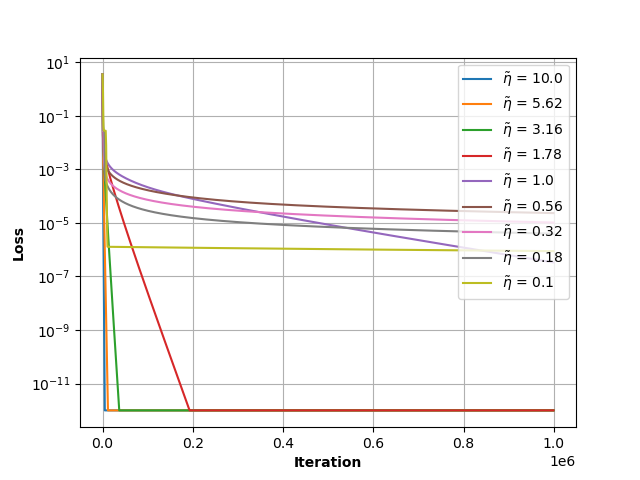}
        \subcaption{Training loss of WN-GD for different learning rate ratio.}
        \label{fig:WN_TrainError_LearningRateRatio}
    \end{subfigure}
    \caption{As the learning rate ratio $\lrr$ decreases, the error decreases. Note that when $\lrr=10$, the assumption of Theorem \ref{theorem:optimality_constant_rate} is violated, and we see that WN-GD is not better than GD.
    }
    \label{fig:exp2}
\end{figure}
In the second experiment (Figure \ref{fig:exp2}) we aim to understand how $\lrr$ affects the reconstruction error. Here we fix $\L=2$ and $\alpha=1$. In Figure \ref{fig:WN_TestError_LearningRateRatio_Grid_2} we again analyze the error $\epsilon_1$ among different algorithms. We see that in general as $\lrr$ decreases, the error $\epsilon_1$ decreases. In particular, such error is significantly smaller than the error of GD, which is represented by the horizontal line. Note that when $\lrr$ is too large such that the condition of Theorem \ref{theorem:optimality_constant_rate} is violated, WN-GD is no longer guaranteed to outperform GD.
%
\subsection{Effects of Layer}
\label{subsection:Effects of Layer}
%
In this section we compare results for $\L=2$ and $\L=3$. To ensure a fair comparison, we generate initialization in the following way.
Choose an initialization scale $\alpha$ and the vector $\w_0 = \bo/\sqrt{\N}$ so that $\|\w_0\|_2 = 1$. For each $\L$, to make $\|(\g_0\w_0)^{\odot\L}\|_1 = \alpha$ we set 
\begin{equation*}
    \g_0 = \frac{\alpha^{\frac{1}{\L}}}{\|\w_0\|_\L} = \alpha^{\frac{1}{\L}}\N^{\frac{1}{2}-\frac{1}{\L}}.
\end{equation*}

The results are shown in Figure \ref{fig:WN_Layers}. In both cases ($\L=2$ and $\L=3$), we observe significant improvement of reconstruction error with weight normalization. Note that the significant improvement for $\L=3$ requires smaller initialization than the case of $\L=2$. Here the improvement ratio (reconstruction error for GD divided by reconstruction error for WN-GD) at $\alpha = 0.1$ is $173810$ for $\L=2$, and $92164$ for $\L=3$.

\begin{figure}[h]
    \centering
    \includegraphics[width = 0.6\textwidth]{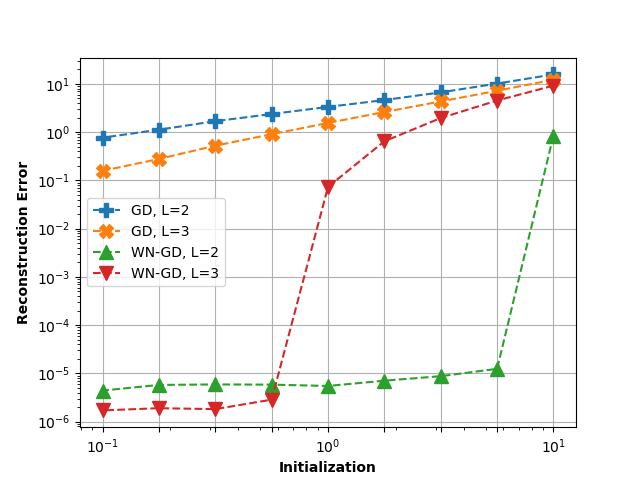}
    \caption{Comparison between $\L=2$ and $\L=3$. WN-GD is better in both cases, but $\L=3$ requires smaller initialization.}
    \label{fig:WN_Layers}
\end{figure}
%
\subsection{Sparse ground truth with positive and negative entries}
\label{subsection:Sparse ground truth with positive and negative entries}
%
\begin{figure}[h]
    \centering
    \begin{subfigure}[b]{0.47\textwidth}
        \centering
        \includegraphics[width = \textwidth]{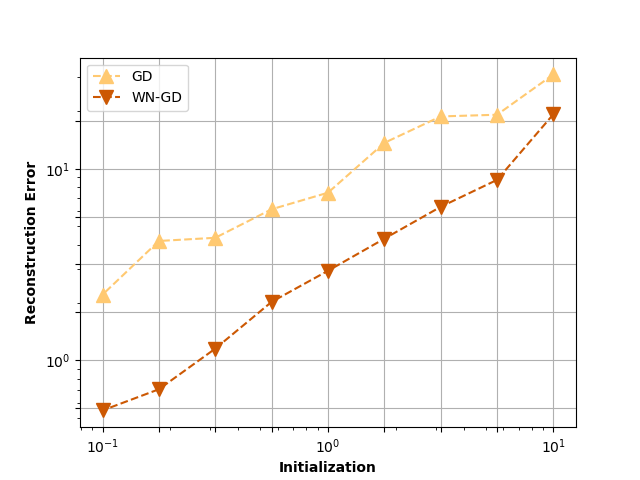}
        \subcaption{WN-GD still achieves smaller reconstruction error, but it is less significant than the positive case.}
        \label{fig:WN_TestError_Grid_Sparse_pm}
    \end{subfigure}
    \hfill
    \begin{subfigure}[b]{0.47\textwidth}
        \centering
        \includegraphics[width = \textwidth]{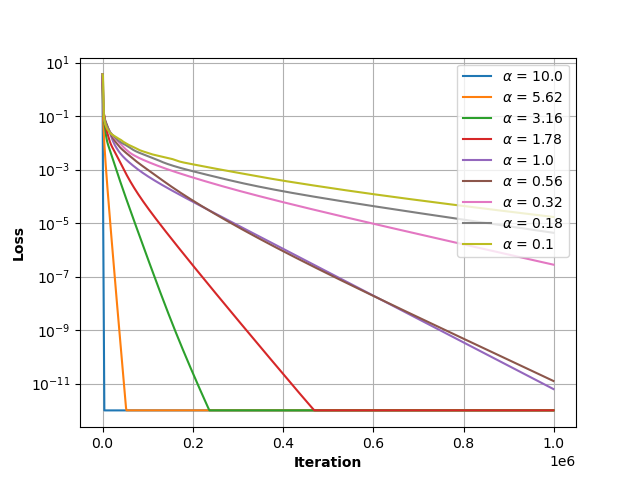}
        \subcaption{Training loss of WN-GD are also decreasing, and the convergence rate seems to related to the initialization.}
        \label{fig:WN_TrainError_Sparse_pm}
    \end{subfigure}
    \caption{The setting is the same as in Figure \ref{fig:exp1}, except that in such setting we can recover ground truth vectors that are not necessarily non-negative.}
    \label{fig:exp4}
\end{figure}
The setting of the forth experiment (Figure \ref{fig:exp4}) is the same as the first experiment, except that the ground truth is not constrained to have positive entries, and the loss function we use here is the same as in \eqref{eq:loss pm}
\begin{equation*}
    \mathcal{L}_\pm(\u,\v) = \frac{1}{2\L}\|\A(\u^{\odot\L}- \v^{\odot\L}) - \y\|_2^2, \quad\u_0,\v_0>0
\end{equation*}
motivated by \cite{chou2021more,Gissin2019Implicit,woodworth2020kernel}. We observe that the benefit of WN still exists, but less strong as the one in the third experiment. 


\section{Summary and Discussion}
\label{sec:Summary}
In this paper we initiate a study of the implicit bias of gradient descent with weight normalization beyond the linear regression setting. In the overparameterized diagonal linear neural network model, we show that weight normalization provably enables a robust implicit regularization towards sparse solutions that holds beyond the regime of small initialization.  We moreover show a linear rate of convergence and an explicit dependence on the initialization scale, indicating that smaller initialization corresponds to slower convergence rate. Numerical experiments are consistent with our theory, and the key quantities such as invariants and the proof strategies can potentially be applied in more general settings.

There are still many remaining questions such as
\begin{enumerate}
    \item Do our results generalize to other settings which currently require small initialization to prove an implicit bias?
    \item Can we use the proof strategy to study the effect of weight normalization on neural networks with first-order homogeneous activation functions, such as ReLU?
    \item Can we extend the results here from gradient flow to gradient descent?
    \item Is there a reasonable choice of time-dependent rate that might outperform constant rate, or does the time-dependent rate necessarily lose the magnified implicit bias as shown in Theorem \ref{theorem:optimality_dynamic_rate}?    
\end{enumerate}


\section*{Acknowledgement}
\label{sec:Acknowledgement}
R. Ward is grateful for support from AFOSR MURI FA9550-19-1-0005, NSF DMS 1952735, NSF HDR 1934932, HDR TRIPODS Phase II grant 2217058, and NSF CCF 2019844 H.C. and H.R. acknowledge funding from the Deutsche Forschungsgemeinschaft (DFG) through the Collaborative Research Center Sparsity and Singular Structures (SFB 1481).


\section*{Data Availability Statement}
\label{sec:Data Availability Statement}

The data underlying this article will be shared on reasonable request to the corresponding author.

\newpage


\newpage
\appendix

\section{Appendix}
\label{sec:Appedix}

\begin{proof}[Proof of Lemma \ref{lemma:rescaled_learning_rate}]
First note that for
\begin{equation}\label{eq:fix_ratio_all}
    \g^{(\a)} = \g
    \quad\text{and}\quad
    \w^{(\a)} = \a\w
\end{equation}
we obtain
\begin{align*}
    \nabla_{\g^{(\a)}} \LossM(\g^{(\a)},\w^{(\a)})
    &= \nabla_{\g} \LossM(\g,\w),\\
    \nabla_{\w^{(\a)}} \LossM(\g^{(\a)},\w^{(\a)})
    &=\a^{-1}\nabla_{\w} \LossM(\g,\w).
\end{align*}
Consequentially, 
\begin{align}
    \partial_{\t}\g^{(\a)}
    &= -\eta_\g\nabla_{\g^{(\a)}}\LossM(\g^{(\a)},\w^{(\a)})
    = \partial_{\t}\g, \label{eq:fix_ratio_g}\\
    \partial_{\t}\w^{(\a)}
    &= - \a^2\eta_\w\nabla_{\w^{(\a)}}\LossM(\g^{(\a)},\w^{(\a)})
    = \a\partial_{\t}\w \label{eq:fix_ratio_w}.
\end{align}
Since \eqref{eq:fix_ratio_all} holds for $\t=0$ and is preserved due to \eqref{eq:fix_ratio_g} and \eqref{eq:fix_ratio_w}, \eqref{eq:fix_ratio_all} holds for all $\t\geq0$.
\end{proof}
\begin{proof}[Proof of Lemma \ref{lemma:constant_norm}]
A direct computation yields
\begin{align*}
    \partial_{\t}\|\w\|_2^2
    &= 2\w^\tT\partial_{\t}\w
    = -2\w^\tT\nabla_{\w} \LossM(\g,\w)\\
    &= -2\frac{\g}{\|\w\|_2}\w^\tT(\I-\Pw) \nabla\Loss\left(\frac{\g}{\|\w\|_2}\w\right)
    = 0
\end{align*}
because $\w^\tT(\I-\Pw)=0$. This implies the claim.
\end{proof}
\begin{proof}[Proof of Lemma \ref{lemma:non_negative}]
Note that $\nabla_{\g}\LossM(\g,\w)$ is local Lipschitz continuous in $\g$, and $\nabla_{\w} \LossM(\g,\w)$ is local Lipschitz continuous in $\w$. Hence by the Picard–Lindel\"of theorem the trajectory is unique. In particular, $\partial_{\t}\g$ and $\we_\n$ cannot reach zero at finite time, since this would otherwise contradict to the uniqueness of the trajectory (if we apply Picard–Lindel\"of theorem backward in time). We now show that $\g=0$ implies that $\partial_{\t}\g = 0$ and $\we_\n = 0$ implies that $\partial_{\t}\we_\n = 0$. By \eqref{eq:dLdx}, \eqref{eq:dLdg}, and \eqref{eq:dLdw}, we have
\begin{align}
    \partial_{\t}\g
    &= -\eta_\g\nabla_{\g}\LossM(\g,\w)
    = -\frac{\eta_\g}{\|\w\|_2}\w^\tT\nabla\Loss\left(\frac{\g}{\|\w\|_2}\w\right)\\
    \partial_{\t}\w
    &= - \eta_\w\nabla_{\w} \LossM(\g,\w)
    = -\eta_\w\frac{\g}{\|\w\|_2}(\I-\Pw) \nabla\Loss\left(\frac{\g}{\|\w\|_2}\w\right)\nonumber\\
    &= -\eta_\w\frac{\g}{\|\w\|_2}\nabla\Loss\left(\frac{\g}{\|\w\|_2}\w\right)
    +\eta_\w\frac{\g}{\|\w\|_2^3}\w\w^\tT \nabla\Loss\left(\frac{\g}{\|\w\|_2}\w\right)\nonumber\\
    &= -\eta_\w\frac{\g}{\|\w\|_2}\nabla\Loss\left(\frac{\g}{\|\w\|_2}\w\right) -\eta_\w\frac{\g\partial_{\t}\g}{\eta_\g\|\w\|_2^2}\w.\label{eq:dw_dg_1}
\end{align}
If $\g=0$, then $\nabla\Loss = 0$ and hence $\partial_{\t}\g=0$.
If $\we_\n = 0$, then $[\nabla\Loss]_{\n} = 0$ and hence $\partial_{\t}\we_\n =0$. Since $\partial_{\t}\g$ and $\we_\n$ cannot reach zero at finite time, $\g$ and $\we_\n$ cannot reach zero either. By continuity, the signs of $\g$ and $\we_\n$ will stay constant.
\end{proof}
\begin{proof}[Proof of Lemma \ref{lemma:loss_non_increasing}]
By chain rule we have
\begin{align*}
    \partial_{\t}\LossM(\g,\w)
    &= \left\langle \nabla_{\g}\LossM(\g,\w)\,,\,\partial_{\t}\g\right\rangle
    + \left\langle \nabla_{\w}\LossM(\g,\w)\,,\,\partial_{\t}\w\right\rangle\\
    &= -\eta_\g\|\nabla_{\g}\LossM(\g,\w)\|_2^2 - \eta_\w\|\nabla_{\w}\LossM(\g,\w)\|_2^2
    \leq 0.
\end{align*}
This completes the proof.
\end{proof}
\begin{proof}[Proof of Lemma \ref{lemma:Convergence rate}]
By Lemma \ref{lemma:constant_norm}, $\|\w\|_2 = \|\w_0\|_2 = 1$. By Lemma \ref{lemma:loss_non_increasing}, we have
\begin{equation*}
    \partial_{\t}\LossM(\g,\w)
    = -\eta_\g\|\nabla_{\g}\LossM(\g,\w)\|_2^2 - \eta_\w\|\nabla_{\w}\LossM(\g,\w)\|_2^2.
\end{equation*}
Note that by \eqref{eq:dLdg} and \eqref{eq:dLdw},
\begin{align*}
    -\eta_\g\|\nabla_{\g}\LossM(\g,\w)\|_2^2 - \eta_\w\|\nabla_{\w}\LossM(\g,\w)\|_2^2
    &= -\nabla\Loss(\x)^\tT\left(\eta_\g\Pw + \eta_\w \g^2(\I-\Pw)\right)\nabla\Loss(\x)\\
    &\leq -\min\left(\eta_\g,\eta_\w \g^2\right)\|\nabla\Loss(\x)\|_2^2.
\end{align*}
Suppose that there exist constant $\c_\g,\c_\w,\c_\x>0$ such that $\eta_\g\geq\c_\g$, $\eta_\w\geq\c_\w$, and $|\x|\geq \c_\x$ for all $\t\in[\t_0,\T]$, then $\g^2 = \|\x\|_2^2 \geq |I|\cdot\c_\x^2$ and hence
\begin{align*}
    \min\left(\eta_\g,\eta_\w \g^2\right)
    &\geq \min\left(\c_\g,\c_\w |I|\cdot\c_\x^2\right).
\end{align*}
On the other hand,
\begin{align*}
    \left\|\nabla\Loss\left(\x\right)\right\|_2^2
    &\geq \c_\x^{2\L-2} \|\A|_I^\tT(\A\x^{\odot\L} - \y)\|_2^2\\
    &\geq \c_\x^{2\L-2}\sigma_{\min}^2(\A|_I)\|\A\x^{\odot\L} - \y\|_2^2
    = 2\L\c_\x^{2\L-2}\sigma_{\min}^2(\A|_I)\Loss(\x).
\end{align*}
Putting together the estimates, we obtain
\begin{align*}
    \partial_\t\Loss(\x)
    \leq -\min\left(\c_\g,|I|\c_\w \c_\x^2\right)2\L\c_\x^{2\L-2}\sigma_{\min}^2(\A|_I)\Loss(\x).
\end{align*}
By Gronwall's inequality we get the linear convergence rate
\begin{align*}
    \Loss(\x(\t)) \leq \Loss(\x(\t_0))\exp\left(-\min\left(\c_\g,|I|\c_\w \c_\x^2\right)2\L\c_\x^{2\L-2}\sigma_{\min}^2(\A|_I)(\t-\t_0)\right).
\end{align*}
This completes the proof.
\end{proof}
\begin{proof}[Proof of Lemma \ref{lemma:Bounded above implies bounded below}]
    Since $\r$ defined in \eqref{eq:re-scaling factor} is both bounded above and below, it suffices to prove the case for $\x$. We will prove the statement by contradiction. Suppose $\xe_\j$ is not bounded away from zero.  Then, there exists a sequence $\{\t_k\}_{k\in\mathbb{N}}$ such that $\lim_{k\to\infty}\xe_\j(\t_k) = 0$. By assumption there exists $\v>0$ such that $\A\v=0$. Then for any $\z$ such that $(\I-\PA)\z=0$, it holds $$\langle\z,\v\rangle = \langle \PA \z, \v\rangle + \langle (\I - \PA) \z, \v \rangle = \langle \z, \PA \v\rangle = 0.$$

    Let us now consider the case where $\L=2$. By the invariant defined in Lemma \ref{lemma:inv_proj_original}, for all $k\in\mathbb{N}$,
    \begin{equation*}
        (\I-\PA)\log(\x(\t_\k)) = (\I-\PA)\log(\x_0).
    \end{equation*}
    This implies that there exists a sequence $\{\z_k\}_{k\in\mathbb{N}}$ such that $(\I-\PA)\z_k=0$ and
    \begin{equation*}
        \log(\x(\t_k)) = \log(\x_0) + \z_k.
    \end{equation*}
    Taking the inner product with $\v$ on both sides and using the fact that $\langle\z_k,\v\rangle = 0$, we obtain
    \begin{equation*}
        \langle\log(\x(\t_k)),\v\rangle = \langle \log(\x_0),\v\rangle.
    \end{equation*}
    In the limit as $k\to\infty$, the left hand side becomes $-\infty$ because $\x$ is upper bounded while $\log(\xe_\j)$ goes to $-\infty$. This is a contradiction since the right hand side is just a constant. Therefore all entries of $\x$ must be bounded away from zero.

    The same proof strategy works for $\L\neq 2$ as well, leading to 
    \begin{equation*}
        \langle\x^{\odot\frac{2}{\L}-1}(\t_k),\v\rangle = \langle \x_0^{\odot\frac{2}{\L}-1},\v\rangle.
    \end{equation*}
    The left hand side tends to infinity as $k\to\infty$ if $\lim_{k\to\infty}\xe_\j(\t_k) = 0$. This completes the proof.
\end{proof}
\begin{proof}[Proof of Lemma \ref{lemma:unique}]
    Let us first discuss the case where $\L=2$. Since $\PA$ is a projection, by decomposing vectors into the form $\x = \PA\x + (\I-\PA)\x$, we have
    \begin{align*}
        &\lim_{\t\to\infty}\langle \xprod^{(1)}(\t) - \xprod^{(2)}(\t), \log(\xprod^{(1)}(\t))-\log(\xprod^{(2)}(\t))\rangle\\
        &\quad=\lim_{\t\to\infty}\langle \underbrace{\PA[\xprod^{(1)}(\t) - \xprod^{(2)}(\t)]}_{\text{converges to }0}\,,\, \underbrace{\log(\xprod^{(1)}(\t))-\log(\xprod^{(2)}(\t))}_{\text{bounded}}\rangle\\
        &\qquad+\langle\underbrace{\xprod^{(1)}(\t) - \xprod^{(2)}(\t)}_{\text{bounded}}\,,\,\underbrace{(\I-\PA)[\log(\xprod^{(1)}(\t))-\log(\xprod^{(2)}(\t))]}_{\text{converges to }0}\rangle\\
        &\quad=0
    \end{align*}
    Note that due to the monotonicity of $\log$, we have $(a-b)(\log(a)-\log(b))\geq 0$, and consequently
    \begin{equation*}
        \boldsymbol\xi(\t):=[\xprod^{(1)}(\t) - \xprod^{(2)}(\t)]\odot[\log(\xprod^{(1)}(\t))-\log(\xprod^{(2)}(\t))] \geq 0.
    \end{equation*}
    A more compact way to express $\boldsymbol\xi$ is via the difference
    \begin{equation*}
        \boldsymbol\Delta:=|\xprod^{(1)} - \xprod^{(2)}|
    \end{equation*}
    and the expression
    \begin{align}
        \boldsymbol\xi
        &= \boldsymbol\Delta \odot \log(\bo + \boldsymbol\Delta\odot \min(\xprod^{(1)},\xprod^{(2)})^{\odot-1})\nonumber\\
        &\geq \boldsymbol\Delta \odot \log(\bo + \boldsymbol\Delta\odot (\xprod^{(1)})^{\odot-1})\label{eq:xi_delta_1}.
    \end{align}
    Since $\boldsymbol\xi\geq 0$ and $\langle\boldsymbol\xi(\t),\bo\rangle$ converges to zero as $\t\to\infty$, we can deduce that $\boldsymbol\xi$ converges to zero as $\t\to\infty$. Together with \eqref{eq:xi_delta_1} and the assumption that $\xprod^{(1)}$ is uniformly bounded above, we conclude that $\boldsymbol\Delta$ must also converges to zero. Since $\boldsymbol\Delta$ converges to zero, our conclusion follows.
    
    For $\L>2$ the analysis is similar. First use the assumption to deduce that
    \begin{align*}
        0=\lim_{\t\to\infty}\langle \xprod^{(1)}(\t) - \xprod^{(2)}(\t), (\xprod^{(2)}(\t))^{\odot \frac{2}{\L}-1}-(\xprod^{(1)}(\t))^{\odot \frac{2}{\L}-1}\rangle.
    \end{align*}
    Since the vector
    \begin{equation*}
        \boldsymbol\xi(\t) := [\xprod^{(1)}(\t) - \xprod^{(2)}(\t)]\odot[(\xprod^{(2)})^{\odot \frac{2}{\L}-1}(\t)-(\xprod^{(1)})^{\odot \frac{2}{\L}-1}(\t)]
    \end{equation*}
    is non-negative and $\langle\boldsymbol\xi(\t),\bo\rangle$ converges to zero, we can deduce that $\boldsymbol\xi$ converges to zero. We will use the following fact: if $a=b+\delta$ with $a,b,\delta\geq 0$, then
    \begin{align*}
        \frac{1}{b} - \frac{1}{a} \geq
        \begin{cases}
            \frac{1}{2b} &\text{if }\delta\geq b\\
            \frac{\delta}{2b^2} &\text{if }\delta\leq b
        \end{cases}
    \end{align*}
    and consequently
    \begin{equation*}
        \frac{1}{b} - \frac{1}{a} \geq \frac{1}{2b^2}\cdot\min\left(b,\delta\right).
    \end{equation*}
    By substituting ${\bf a}= \max(\xprod^{(1)},\xprod^{(2)})^{\odot 1-\frac{2}{\L}}$ and ${\bf b}= \min(\xprod^{(1)},\xprod^{(2)})^{\odot 1-\frac{2}{\L}}$ into $\boldsymbol\xi$, we obtain that
    \begin{align*}
        \boldsymbol\xi
        &= |\xprod^{(1)} - \xprod^{(2)}|\odot[{\bf b}^{\odot -1}-{\bf a}^{\odot -1}]\\
        &\geq \frac{1}{2}|\xprod^{(1)} - \xprod^{(2)}|\odot\min\left({\bf b}^{\odot -1},|(\xprod^{(1)})^{\odot 1-\frac{2}{\L}} - (\xprod^{(2)})^{\odot 1-\frac{2}{\L}}|\odot{\bf b}^{\odot -2}\right)\\
        &\geq \frac{1}{2}|\xprod^{(1)} - \xprod^{(2)}|\odot\min\left((\xprod^{(1)})^{\odot \frac{2}{\L}-1}, |(\xprod^{(1)})^{\odot 1-\frac{2}{\L}} - (\xprod^{(2)})^{\odot 1-\frac{2}{\L}}|\odot(\xprod^{(1)})^{\odot \frac{4}{\L}-2}\right)
    \end{align*}
    We again see that the difference $|\xprod^{(1)}(\t) - \xprod^{(2)}(\t)|$ converges to zero because $\boldsymbol\xi(\t)$ converges to zero and $\xprod^{(1)}, \xprod^{(2)}$ are uniformly bounded above and below. Hence the proof is complete.
\end{proof}
\begin{proof}[Proof of Lemma \ref{lemma:boundedness}]
    Denote $\gprod = \exp(\g^2/(2\lrr))$. Without loss of generality we may assume $\|\w_0\|_2=1$. By Lemma \ref{lemma:constant_norm}, $\|\w(\t)\|_2 = 1$ for all $\t\geq 0$. According to Lemma \ref{lemma:loss_non_increasing}, the loss is non-increasing, and hence $\|\A(\g\w)^{\odot\L}-\b\|_\infty$ is upper bounded. Consequently,
    \begin{equation}\label{eq:PA_bound}
        \|\PA(\g\w)^{\odot\L}\|_\infty \leq B_1
    \end{equation}
    for some $B_1\geq 0$. On the other hand, by Lemma \ref{lemma:inv_proj_constant_rate}, the quantity
    \begin{equation}\label{eq:I-PA_bound}
        (\I-\PA)\cdot\begin{cases}
        \log(\gprod\w)&\text{if }\L=2\\
        (\gprod\w)^{\odot 2-\L}&\text{if }\L\neq 2
        \end{cases}
    \end{equation}
    equals its value at initialization for all $\t\geq 0$. Thus its $\ell_\infty$-norm is upper bounded by some $B_2\geq0$.

    Let us first study the case where $\L=1$ by examining the inner product $\langle\g\w,\gprod\w\rangle$. By decomposing $\I=\PA + (\I-\PA)$, we obtain 
    \begin{align*}
        \left\langle\g\w,\gprod\w\right\rangle
        &= \left\langle\g\w,\PA\gprod\w\right\rangle + \left\langle\g\w,(\I-\PA)\gprod\w\right\rangle\\
        &= \left\langle\PA\g\w,\gprod\w\right\rangle + \left\langle\g\w,(\I-\PA)\gprod\w\right\rangle\\
        &\leq B_1 \left\|\gprod\w\right\|_1 + B_2 \|\g\w\|_1\\
        &= \left(B_1\gprod+B_2\g\right) \left\|\w\right\|_1\\
        &\leq \left(B_1\gprod+B_2\g\right)\sqrt{\N},
    \end{align*}
    where the last inequality comes from the fact that $\|\w\|_1\leq\sqrt{\N}\|\w\|_2  = \sqrt{\N}$. Since the left hand side can be explicitly expressed as
    \begin{equation*}
        \left\langle\g\w,\gprod\w\right\rangle
        = \g\gprod\|\w\|_2^2 = \g\gprod,
    \end{equation*}
    we have
    \begin{equation*}
        \g\exp\left(\frac{\g^2}{2\lrr}\right) \leq \left(B_1\exp\left(\frac{\g^2}{2\lrr}\right)+B_2\g\right) \sqrt{\N}.
    \end{equation*}
    This implies that $\g$ is uniformly bounded, because otherwise the left hand side will eventually exceed the right hand side.
    
    For $\L=2$ we use a similar strategy by examining the inner product $\langle(\g\w)^{\odot 2},\log(\gprod\w)\rangle$. We first obtain a lower bound of the inner product,
    \begin{align*}
        \langle(\g\w)^{\odot 2},\log(\gprod\w)\rangle
        &= \g^2(\log(\gprod)\langle\w^{\odot 2},\bo\rangle + \langle\w^{\odot 2},\log(\w)\rangle)\\
        &=\g^2\left(\log(\gprod)\|\w\|_2^2 + \langle\w^{\odot 2},\log(\w)\rangle\right).
    \end{align*}
    Since $\xi^2\log(\xi)\geq -\frac{1}{2e}$ for all $\xi\geq0$, we have
    \begin{equation}\label{eq:inner_upperbound_L2}
        \langle(\g\w)^{\odot 2},\log(\gprod\w)\rangle
        \geq \g^2\left(\log(\gprod) - \frac{\N}{2e}\right).
    \end{equation}
    We now derive an upper bound. By decomposing $\I=\PA + (\I-\PA)$, we obtain 
    \begin{align*}
        \left\langle(\g\w)^{\odot 2},\log\left(\gprod\w\right)\right\rangle
        &= \left\langle(\g\w)^{\odot 2},\PA\log\left(\gprod\w\right)\right\rangle + \left\langle(\g\w)^{\odot 2},(\I-\PA)\log\left(\gprod\w\right)\right\rangle\\
        &= \left\langle\PA(\g\w)^{\odot 2},\PA\log\left(\gprod\w\right)\right\rangle + \left\langle(\g\w)^{\odot 2},(\I-\PA)\log\left(\gprod\w\right)\right\rangle\\
        &\leq B_1 \left\|\PA\log\left(\gprod\w\right)\right\|_1 + B_2 \|(\g\w)^{\odot 2}\|_1\\
        &= B_1 \left\|\log(\gprod)\cdot\PA\bo+\PA\log\left(\w\right)\right\|_1 + B_2\g^2\|\w\|_2^2\\
        &\leq B_1\log(\gprod)\|\PA\bo\|_1 + B_1\left\|\PA\log\left(\w\right)\right\|_1 + B_2\g^2.
    \end{align*}
    Note that $B_1\left\|\PA\log\left(\w\right)\right\|_1$ is upper bounded by some constant $C$ because the entries of $\w$ are bounded both above and below. Combining this with \eqref{eq:inner_upperbound_L2}, we have
    \begin{equation*}
        \frac{\g^4}{2\lrr} - \frac{e\N\g^2}{2} \leq \left(\frac{B_1}{2\lrr}\|\PA\bo\|_1+B_2\right) \g^2 + C,
    \end{equation*}
    which implies that $\g$ is upper bounded because the left hand side scales like $\g^4$ while the right hand side scales like $\g^2$.
\end{proof}
\begin{proof}[Proof of Lemma \ref{lemma:dynamic_reduction}]
By \eqref{eq:dLdg} and \eqref{eq:dLdw},
\begin{align*}
    \partial_{\t}\left(\frac{\g}{\|\w\|_2}\w\right)
    &= \w\frac{\partial_{\t}\g}{\|\w\|_2} + \frac{\g}{\|\w\|_2}(\I-\Pw)\partial_{\t}\w\\
    &= -\eta_\g\Pw\nabla\Loss\left(\frac{\g}{\|\w\|_2}\w\right)
    - \eta_\w\frac{\g^2}{\|\w\|_2^2}\left(\I-\Pw\right)^2\nabla\Loss\left(\frac{\g}{\|\w\|_2}\w\right)\\
    &= -\left(\eta_\g\Pw+ \frac{\eta_\w\g^2}{\|\w\|_2^2}(\I-\Pw)\right)\nabla\Loss\left(\frac{\g}{\|\w\|_2}\w\right).
\end{align*}
Since $\|\w\|_2=1$, by Lemma \ref{lemma:constant_norm} and $\g = \|\x\|_2$, we obtain
\begin{align*}
    \Pw = \frac{\w\w^\tT}{\|\w\|_2^2} = \frac{\g\w(\g\w)^\tT}{\g^2} = \frac{\x\x^\tT}{\|\x\|_2^2} = \Px
\end{align*}
and hence \eqref{eq:dynamic_reduction} holds.
\end{proof}
\begin{proof}[Proof of Lemma \ref{lemma:convergence_dynamic_rate}]
Denote $\xprod = \x^{\odot\L}$ and $\x_0=\x(0)$.
We prove the statement by contradiction. Suppose that $\Loss(\x(\t))$ does not converge to zero. Since $\Loss$ is non-increasing in $\t$ according to Lemma \ref{lemma:loss_non_increasing}, $\Loss$ is bounded away from zero. Therefore, there exists $\epsilon>0$ such that $\Loss(\x(\t))\geq \epsilon$ for all $\t\geq 0$.

Let $\z\in S_{+}$. Since $\g(0)>0$ and $\w(0)\geq0$, we have $\x\geq0$ for all $\t\geq0$ by Lemma \ref{lemma:non_negative}. Hence $D_{\F}(\z,\xprod(\t))$ is well-defined. By Lemma \ref{lemma:Bregman_divergence_non_increasing}, $D_{\F}(\z,\xprod(\t))$ is non-increasing in $\t$ and hence bounded above by $D_{\F}(\z,\xprod_0)$. By Lemma \ref{lemma:Bregman_divergence}, $D_{\F}$ is non-negative and hence bounded below. Therefore by \eqref{eq:Bregman_divergence_non_increasing},
\begin{align}\label{eq:Bregman_geq_x}
    D_{\F}(\z,\xprod_0)
    \geq -\int_{0}^{\infty} \partial_{\t}D_{\F}(\z,\xprod(\t)) d\t
    = \int_{0}^{\infty} 2\L\|\x(\t)\|_2^2\cdot\Loss(\x(\t)) d\t
    \geq 2\L\epsilon\int_{0}^{\infty} \|\x(\t)\|_2^2 d\t.
\end{align}
Since $D_{\F}(\z,\xprod_0)<\infty$, by \eqref{eq:Bregman_geq_x} $\|\x(\t)\|_2$ cannot be bounded away from zero. This implies that there exists an increasing sequence $\{\t_k\}_{k\in[\mathbb{N}]}$ such that $\lim_{k\to\infty}\|\x(\t_k)\|_2 = 0$. Together with the fact that $\x$ is non-negative, we get that for all $\n\in[\N]$, $\lim_{k\to\infty}\xe_\n(\t_k) = 0$, and consequently $\lim_{k\to\infty}\xprode_\n(\t_k) = 0$.

We now prove that $D_{\F}$ eventually exceeds its initial value, i.e. there exists $\T>0$ such that $D_{\F}(\z,\xprod(\T)) > D_{\F}(\z,\xprod_0)$, and hence the contradiction. Note that the Bregman divergence in our case is given by
\begin{equation*}
    D_{\F}(\z,\xprod)
    =\begin{cases}
    \frac{1}{2} \langle \z \odot \log(\z) - \z + \xprod , \1 \rangle  - \frac{1}{2}\langle \log(\xprod),\z\rangle & \text{if }\L = 2,\\
     \langle \frac{\L}{2(2-\L)}\z^{\odot \frac{2}{\L}}+\frac{1}{2}\xprod^{\odot \frac{2}{\L}}, \1 \rangle - \frac{1}{2-\L}\langle \xprod^{\odot \frac{2}{\L}-1},\z\rangle & \text{if }\L > 2.
    \end{cases}
\end{equation*}
As $\x$ converges to zero, all terms except the last one converge to zero, while the last term blows up to infinity because $\z\geq0$ and $\z$ is not identically zero (as a consequence of $\y$ being not identically zero). Thus there exists $k\in\mathbb{N}$ such that $D_{\F}(\z,\xprod(\t_k)) > D_{\F}(\z,\xprod_0)$. This contradicts Lemma \ref{lemma:Bregman_divergence_non_increasing}. Therefore $\Loss(\x(\t))$ converges to zero.
\end{proof}


\end{document}